\newtheorem{theorem}{Theorem}[section]
\newtheorem*{theorem*}{Theorem}
\newtheorem{proposition}[theorem]{Proposition}
\newtheorem{lemma}[theorem]{Lemma}
\newtheorem{corollary}[theorem]{Corollary}
\newtheorem{example}[theorem]{Example}
\theoremstyle{remark}
\newtheorem{remark}[theorem]{Remark}
\theoremstyle{definition}
\newtheorem{definition}[theorem]{Definition}
\newtheorem{assumption}{Assumption}
\numberwithin{equation}{section}
\newcommand{\Exp}[2]{\operatorname{\mathbb{E}}_{#1}\left[#2\right]}
\newcommand{\abs}[1]{\left|#1\right|}
\newcommand{\norm}[1]{\|#1\|}
\newcommand{\de}{\,\mathrm{d}}
\newcommand{\eps}{\varepsilon}
\newcommand{\e}{\varepsilon}
\renewcommand{\rho}{\varrho}
\newcommand{\N}{\mathbb{N}}
\newcommand{\R}{\mathbb{R}}
\newcommand{\Q}{\mathbb{Q}}
\newcommand{\Rd}{{\R^d}}
\newcommand{\domain}{\mathcal{X}}
\renewcommand{\P}{\mathcal{P}}
\newcommand{\B}{\mathfrak{B}}
\newcommand{\st}{\,:\,}
\newcommand{\dist}{\operatorname{\mathsf{dist}}}
\newcommand{\closure}[1]{\overline{#1}}
\newcommand{\wsto}{\rightharpoonup^\ast}
\DeclareMathOperator{\supp}{supp}
\newcommand{\loss}{\ell}
\newcommand{\metric}{\operatorname{\mathsf{d}}}
\DeclareMathOperator{\cl}{\mathsf{cl}}
\DeclareMathOperator{\op}{\mathsf{op}}
\newcommand{\PrePer}{\,\widetilde{\operatorname{Per}}}
\newcommand{\PreTV}{\,\widetilde{\operatorname{TV}}}
\newcommand{\TV}[1][]{
\ifthenelse { \equal {#1} {} }  
    {\operatorname{TV}} 
    {\operatorname{\ensuremath{#1}-TV}}
}
\newcommand{\Per}[1][]{
\ifthenelse { \equal {#1} {} }  
    {\operatorname{Per}} 
    {\operatorname{\ensuremath{#1}-Per}}
}
\newcommand{\esssup}[1][]{
\ifthenelse { \equal {#1} {} }  
    {\operatorname*{ess\,sup}} 
    {\operatorname*{\ensuremath{#1}-ess}\sup}
}
\newcommand{\essinf}[1][]{
\ifthenelse { \equal {#1} {} }  
    {\operatorname*{ess\,inf}} 
    {\operatorname*{\ensuremath{#1}-ess}\inf}
}
\newcommand{\essosc}[1][]{
\ifthenelse { \equal {#1} {} }  
    {\operatorname*{ess\,osc}} 
    {\operatorname*{\ensuremath{#1}-ess\,osc}}
}
\newcommand{\osc}{\operatorname*{osc}}
\renewcommand{\subset}{\subseteq}
\renewcommand{\supset}{\supseteq}
\newcommand{\olive}{\color{olive}}
\newcommand{\teal}{\color{teal}}
\newcommand{\nc}{\normalcolor}
\newcommand{\first}{\blue}
\newcommand{\second}{\magenta}
\newcommand{\third}{\orange}
\renewcommand{\first}{}
\renewcommand{\second}{}
\renewcommand{\third}{}
\newcommand{\editscolor}{\olive}
\renewcommand{\editscolor}{}
\newcommand{\off}[1]{{}}
\title{The Geometry of Adversarial Training \\ in Binary Classification}
\author{Leon Bungert}
\address{
Hausdorff Center for Mathematics, University of Bonn, Endenicher Allee 62, Villa Maria, 53115 Bonn, Germany.
}
\email{leon.bungert@hcm.uni-bonn.de}
\author{Nicol\'as {Garc\'ia Trillos}}
\address{Department of Statistics, University of Wisconsin-Madison, 1300 University Avenue, Madison, Wisconsin 53706, USA.}
\email{garciatrillo@wisc.edu}
\author{Ryan Murray}
\address{Department of Mathematics, 
North Carolina State University, 
2108 SAS Hall,
Raleigh, NC, 27695, USA.}
\email{rwmurray@ncsu.edu}
\date{\today}
\let\blx@rerun@biber\relax
\begin{document}

\begin{abstract}
We establish an equivalence between a family of adversarial training problems for non-parametric binary classification and a family of regularized risk minimization problems where the regularizer is a nonlocal perimeter functional. 
The resulting regularized risk minimization problems admit exact convex relaxations of the type $L^1+\text{(nonlocal)}\TV$, a form frequently studied in image analysis and graph-based learning. 
A rich geometric structure is revealed by this reformulation which in turn allows us to establish a series of properties of optimal solutions of the original problem, including the existence of minimal and maximal solutions (interpreted in a suitable sense), and the existence of regular solutions (also interpreted in a suitable sense). In addition, we highlight how the connection between adversarial training and perimeter minimization problems provides a novel, directly interpretable, statistical motivation for a family of regularized risk minimization problems involving perimeter/total variation. 
The majority of our theoretical results are independent of the distance used to define adversarial attacks.
\end{abstract}

\maketitle

\tableofcontents

\section{Introduction}
In this paper we investigate the connection between adversarial training and regularized risk minimization in the context of non-parametric binary classification. \textit{Adversarial training} problems, in their {distributionally robust optimization} (DRO) version, can be written mathematically as min-max problems of the form:
\begin{align}
  \label{Robust problem:Intro}
  \inf\limits_{\theta \in \Theta } \sup_{\tilde \mu \st G(\mu,\tilde \mu)\leq  \eps } J(\tilde \mu,\theta),
\end{align}
where in general $\theta$ denotes the parameters of a statistical model (the parameters of a neural network, a binary classifier, the parameters of a linear statistical model, etc.), and $\mu$ denotes a data distribution to be fit by the model. 
To fully specify a DRO problem one also needs to introduce a notion of ``distance'' $G$ between data distributions that is employed to define a region of uncertainty around the original data distribution $\mu$ and that can be interpreted as the possible set of actions of an adversary who may perturb $\mu$. 
The value of $\eps\geq 0$ describes the ``power'' of the adversary and is often referred to as \emph{adversarial budget}. 
The function $J(\tilde \mu,\theta)$ is a risk relative to a data distribution $\tilde \mu$ and some loss function underlying the statistical model.
Problem \labelcref{Robust problem:Intro} is a transparent mathematical way to explicitly enforce robustness of models to data perturbations (at least of a certain type). 
Although the origins of this type of problem are now classical \cite{wald1945statistical}, recent influential research \cite{goodfellow2015} has shown that neural networks can be greatly improved by using the DRO framework, and as a result a renewed interest in this class of problems has been generated, see, e.g., the monograph \cite{OPT-026} and the paper \cite{madry2019deep}. In the context of the binary classification problem described in detail throughout this paper the works \cite{NashEquilbriaMeyer} and \cite{VarunMuni2} have explored the game theoretic interpretation of \labelcref{Robust problem:Intro} and the existence of Nash equilibria in parametric and non-parametric settings, respectively. \editscolor Other very recent works, e.g., \cite{jog2021reverse,pydi2019adversarial,MurrayNGT,awasthi2021existence_extended}, have expanded our theoretical understanding about adversarial training problems, providing results on existence of robust classifiers, and reformulating adversarial training problems in new ways that are amenable to further analysis and alternative computation schemes. \nc

By \textit{regularization}, on the other hand, we mean an optimization problem of the form 
\begin{equation}
 \inf_{\theta \in \Theta} \hat J(\mu,\theta) + \lambda\, R(\theta),
 \label{eqn:regularization}
\end{equation} 
where $\hat J(\mu,\theta)$ is a risk functional that here is taken with respect to the single data distribution~$\mu$, $R$ is the regularization functional, and $\lambda>0$ is a positive parameter describing the strength of regularization.
Regularization problems are fundamental in inverse problems \cite{benning_burger_2018,stuart_2010}, image analysis \cite{burger2013guide,Chambolle10anintroduction}, statistics \cite{Lasso}, and machine learning \cite{RKHSRegularization}; 
the previous list of references is of course non-exhaustive. 
In contrast to problem \labelcref{Robust problem:Intro}, the effect of explicit regularization on the robustness of models is less direct, but this is compensated by a richer structure that can be used to study the theoretical properties of their solutions more directly.

The connections between adversarial training and regularization have been intensely explored in recent years in the context of classical parametric learning settings; see~\cite{blanchet2019robust,Blanchet2,OPT-026} and references within. For example, when $\theta \in \Theta =\Rd$ represents the parameters of a linear regression model and the loss function for the model is the squared loss, the following identity holds
\begin{equation}
\label{eq:SrtLasso}
 \min_{\theta \in \Theta} \max_{G_p(\mu, \tilde \mu)\leq  \eps } \Exp{(x,y)\sim \tilde \mu}{(y - \langle \theta , x \rangle )^2} = \min_{\theta \in \Theta} \left\{ \sqrt{ \Exp{(x,y) \sim \mu}{(y- \langle \theta, x \rangle )^2}} + \sqrt{\eps} \: | \theta |_q \right\}^2,
\end{equation}
where $G_p$ is an optimal transport distance of the form
\[ 
G_p(\mu, \tilde \mu):= \min_{\pi \in \Gamma(\mu, \tilde \mu)} \iint_{\R^{d+1} \times \R^{d+1} } c_p((x,y), (\tilde x , \tilde y)) \de\pi((x,y), (\tilde x , \tilde y).
\] 
The cost function $c_p$ is defined by
\begin{align*}
    c_p\big((x,y),(\tilde x,\tilde y)\big) 
    := 
    \begin{cases} 
        | x-\tilde x|_p \quad &\text{ if } y = \tilde y, \\
        +\infty \quad &\text{ if } y \not = \tilde y, 
    \end{cases} 
\end{align*}
where $| \cdot |_p$ is the $\ell^p$ norm in $\R^d$ for $p$ satisfying $\frac{1}{p}+\frac{1}{q}=1$. In the definition of $G_p(\mu, \tilde \mu)$, the set $\Gamma(\mu,\tilde \mu)$ represents the set of transportation plans (a.k.a. couplings) between $\mu$ and $\tilde \mu$, namely, the set of probability measures on $\R^{d+1}\times \R^{d+1}$ with marginals given by $\mu$ and $\tilde \mu$. Notice that equation \labelcref{eq:SrtLasso} reveals a direct equivalence between a family of DRO problems \labelcref{Robust problem:Intro}, and a family of regularized risk minimization problems \labelcref{eqn:regularization} which includes the popular squared-root Lasso model from~\cite{SqRtLasso}. In particular, in this setting $|\cdot |_q$ becomes the regularization term $R$, the risk functional is $\hat J = \sqrt{J}$, \first where $J$ is the  mean squared error, \nc and $\lambda = \sqrt{\eps}$. 
Through an equivalence like \labelcref{eq:SrtLasso} it is possible to motivate new ways of calibrating regularization parameters in models with a convex loss function (where first order optimality conditions guarantee global optimality) as has been done in \cite{Blanchet2}. 
Beyond linear regression, the equivalence between adversarial training and regularization problems has also been studied in parametric binary classification settings such as logistic regression and SVMs (see \cite{Blanchet2}), as well as in distributionally robust grouped variable selection, and distributionally robust multi-output learning (see \cite{OPT-026}). 

In more general learning settings, it is often unknown whether there is a direct equivalence between \labelcref{Robust problem:Intro} and a problem of the form \labelcref{eqn:regularization} that is somewhat tractable both from a computational perspective as well as from a theoretical one. In such cases, an illuminating strategy that can be followed in order to gain insights into the regularization counterpart of \labelcref{Robust problem:Intro} is to analyze the max part of the problem for small $\eps$ and identify its leading order behavior to construct approximating regularization terms. This is a strategy that has been followed in many works that study the robust training of neural networks, e.g.,~\cite{ObermanFinlay,
GradientRegularization,
CurvatureRegularization,
SlavinRoss,
StructuredGradReg,
pmlr-v139-yeats21a,
CNGarciaTrillos2021,
bungert2021clip,
finlay2018improved}. 
The structure of the resulting approximate regularization problems can be exploited to motivate algorithms and provide a better theoretical understanding of the process of training robust deep learning models (see \cite{CNGarciaTrillos2021}).

Having discussed some of the literature exploring the connection between adversarial training and regularization, we move on to discussing, first in simple terms, the content of this work.
Through our theoretical results, this paper continues the investigation started in~\cite{MurrayNGT}, this time providing a deeper structural connection between adversarial training in the non-parametric binary classification setting and regularized risk minimization problems. In particular, we show that the equivalence between adversarial training and regularized risk minimization problems goes beyond the aforementioned parametric settings without relying on approximations. Here $\theta$ is substituted with $A$ which from now on will be interpreted as an arbitrary (measurable) subset of the data space $\domain$ (i.e., $A$ specifies a binary classifier), while $J$ is the risk associated to the $0$-$1$ loss; 
the other elements in problem \labelcref{Robust problem:Intro} will be specified in more detail in \cref{sec:setup}. 
We show that perimeter functionals penalizing the ``boundary'' of a set arise naturally as regularizers for binary classification problems regardless of the feature space or distance used to define the adversarial budget. 
This provides a more direct means of studying the evolution and regularity properties of minimizers of the adversarial problem than the ones that were implied by the evolution equations studied in~\cite{MurrayNGT}. 
This approach also provides tangible prospects for the design of new algorithms for the training of robust classifiers, and suggests which algorithms are more suitable for enforcing robustness relative to specific adversary's actions. \first Finally, through the connection between adversarial training and regularization we will deduce a variety of theoretical properties of robust classifiers, including the existence of ``regular'' solutions, where regularity is understood in a suitable technical sense. Regularity results like the ones we obtain in \cref{thm:regular-sol} are, to the best of our knowledge, the first of their kind in the context of adversarial training\nc.

In summary, our work reveals a rich geometric structure of adversarial training problems which is conceptually appealing and that at the same time opens up new avenues for the theoretical study of adversarial training for general binary classification models. In the next subsections we provide a more detailed discussion of our theoretical results and some of its conceptual consequences right after introducing the specific mathematical setup that we follow throughout the paper.



\subsection{Setup}
\label{sec:setup}

Let $(\domain,\metric)$ be a separable metric space representing the space of features of data points, and let $\B(\domain)$ be its associated Borel $\sigma$-algebra. In most applications $\domain$ is a finite dimensional vector space, e.g., $\Rd$ for some $d\in\N$, and later we will assume a certain, essentially finite dimensional, structure for some of our statements. We are given a probability measure $\mu\in\P(\domain\times\{0,1\})$ describing the distribution of training pairs $(x,y)\in\domain\times\{0,1\}$.
\editscolor Letting $\pi_1:\domain\times\{0,1\}$, $(x,y)\mapsto x$ be the projection onto the first factor of $\domain\times\{0,1\}$, \nc the first marginal of $\mu$ is denoted by $\rho:={\pi_1}_\sharp\mu\in\P(\domain)$ and represents the distribution of input data. \editscolor Here ${\pi_1}_\sharp\mu$ denotes the push-forward measure, whose definition we give in \cref{sec:appendix_technical_defs}. \nc
We decompose the data distribution as $\rho=w_0\rho_0 + w_1\rho_1$ where $w_i = \mu(\domain\times\{i\})$ and  $\rho_i\in\P(\domain)$ denote the conditional distributions:
\begin{align}
    \rho_i(A) := \frac{\mu\left(A\times\{i\}\right)}{w_i},\quad i\in\{0,1\},\;A\in \B(\domain).
\end{align}

Throughout this paper we make the assumption that all measures are Radon measures on~$\domain$.
In the following example we lay out two canonical situations which are highly relevant in machine learning.
\begin{example}[Absolutely continuous and empirical data distribution]
We let $\domain=\Rd$, equipped with an arbitrary $\ell^p$-metric for $p\in[1,\infty]$, i.e., $\metric(x_1,x_2):=\abs{x_1-x_2}_p$.
If we know the true distribution $\rho$ of the data, and this distributions is assumed to be absolutely continuous with respect to the Lebesgue measure, we can work with $\rho$ directly.
If we are only given a finite number of data points $\{x_i\}_{i=1}^N$, we can work with the empirical measure $\rho=\frac{1}{N}\sum_{i=1}^N \delta_{x_i}$. 
Both measures are Radon measures, and the main results of the paper apply in both settings. 
\end{example}
In binary classification, we seek a set $A\in\B(\domain)$ and its induced classifier:
\begin{align*}
    x\in A\phantom{^c} :\iff x\text{ is assigned label }1,\\
    x\in A^c :\iff x\text{ is assigned label }0.
\end{align*}
The most natural approach to constructing such a classifier is to minimize the \editscolor empirical risk:\nc
\begin{align}\label{eq:bayes_class}
    \inf_{A\in\B(\domain)} \Exp{(x,y)\sim\mu}{\abs{1_A(x)-y}}.
\end{align}
A minimizer of this problem is known as a Bayes classifier relative to $\mu$.

\begin{remark}[$0$-$1$-loss]
Note that introducing the $0$-$1$-loss function $\ell(\hat y,y)=0$ if $\hat y=y$ and $\ell(\hat y,y)=1$ if $\hat{y}\neq y$, one can equivalently express \labelcref{eq:bayes_class} as $\inf_{A\in\B(\domain)} \Exp{(x,y)\sim\mu}{\ell(1_A(x),y)}$.
\end{remark}
Applying the law of total expectation (or equivalently disintegrating the measure $\mu$) one obtains that \labelcref{eq:bayes_class} coincides with the following geometric problem
\begin{align}\label{eq:bayes_problem}
\begin{split}
    \inf_{A\in\B(\domain)}
    \int_A w_0\de\rho_0 + \int_{A^c} w_1\de\rho_1.
\end{split}    
\end{align}
Problem \labelcref{eq:bayes_problem} forces the set $A$ to be concentrated in places where the  measure~$w_1\rho_1$ is larger than $w_0\rho_0$.
Defining the signed measure $\sigma:=w_1\rho_1-w_0\rho_0$, one can take a Hahn decomposition of $\domain$ into $\domain=P\uplus N$, where $P$ is a positive set and $N$ is a negative set under $\sigma$ \first(see \cref{sec:appendix_technical_defs} for the definition of a Hahn decomposition)\nc. We then let $A:=P$, and deduce that such a set $A$ is a Bayes classifier, i.e., a minimizer of problem \labelcref{eq:bayes_problem}. Notably, the Hahn-decomposition is not unique and so neither is the Bayes classifier~$A$.
Furthermore, there is no control over the set where $w_0\rho_0=w_1\rho_1$. 
Those points might be arbitrarily assigned to either of the classes without affecting the objective functional; this is a potential source of non-robustness in classification. 

\begin{figure}[t]
    \centering
    \includegraphics[width=\textwidth]{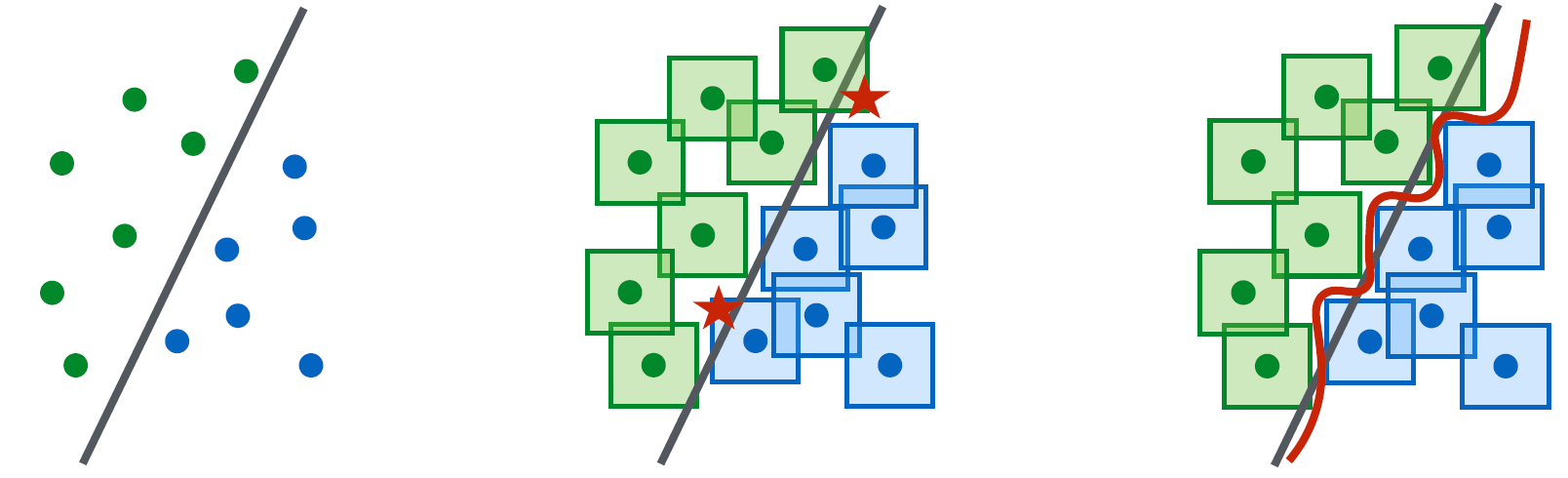}
    \caption{Picture taken from \cite{madry2019deep}. An affine \nc Bayes classifier (left) is not robust with respect to adversarial attacks (middle). 
    The robust classifier in red (right) has a smaller \emph{nonlocal} perimeter than the Bayes classifier. 
    In this case the $\ell^\infty$ adversarial attacks play an important role in the geometry of the robust classifier, and will define a particular form of nonlocal perimeter regularization. \nc}
    \label{fig:my_label}
\end{figure}
Throughout the paper we focus our attention on the following adversarial training problem for robust binary classification:
\begin{align}\label{eq:baseline_adv}
    \inf_{A\in\B(\domain)}\Exp{(x,y)\sim\mu}{\sup_{\tilde x\in B_\eps(x)}\abs{1_A(\tilde x)-y}}.
\end{align}
The model allows an adversary to choose the worst possible point in an open $\eps$-ball $B_\eps(x):=\{\tilde x\in\domain\st\metric(x,\tilde x)<\eps\}$ (relative to the metric $\de$) around $x$ to corrupt the classification.
We emphasize that we \emph{do not} use the essential supremum with respect to some measure but rather the actual supremum which potentially makes the adversarial attack much stronger. 
However, under mild assumptions on the space $\domain$ and the measure $\rho$, it is possible to draw a connection between \labelcref{eq:baseline_adv} and the following problem
\begin{align}\label{eq:esssup_adv}
    \inf_{A\in\B(\domain)}\Exp{(x,y)\sim\mu}{\esssup[\nu]_{\tilde x\in B_\eps(x)}\abs{1_A(\tilde x)-y}},
\end{align}
where $\nu$ is a suitably chosen reference measure.
This problem has favorable functional analytic properties and we will use it as intermediate step to construct solutions of the original problem~\labelcref{eq:baseline_adv}, as well as to analyze the structure of the set of solutions of \labelcref{eq:baseline_adv}. 


Before proceeding to an informal presentation of our main results, we emphasize that, in contrast to some papers in the literature, here we consider \textit{open} balls $B_\eps(x)$ to describe the set of possible attacks available to the adversary around the point $x$. By making this modelling choice we can simplify some technical steps in our analysis (e.g., see \cref{rem:OnOpenvsClosedBalls,sec:open_vs_closed_balls}) and avoid measurability issues that may arise when working with closed balls (see \cite{VarunMuni2} for a discussion on the measurability issue and contrast it with \cref{rem:Measurability}).


\nc


\subsection{Informal Main Results and Discussion}
\label{sec:main_results}

Our first main result, at this stage stated informally, is a reformulation of the adversarial training problem \labelcref{eq:baseline_adv} in terms of a variational regularization problem:
\begin{theorem*}
The objective in the adversarial training problem \labelcref{eq:baseline_adv} can be rewritten as
\begin{align}\label{eq:objective_Per}
    \Exp{(x,y)\sim\mu}{\sup_{\tilde x\in B_\eps(x)}\abs{1_A(\tilde x)-y}}
    =
    \Exp{(x,y)\sim\mu}{\abs{1_A(x)-y}} + \eps \PrePer_\eps(A;\mu),
\end{align}
where $\PrePer_\eps(A;\mu)$ is a \emph{nonlocal} and \emph{weighted} perimeter of $A$, \first defined as\nc 
\begin{align}\label{eq:pre_Per}
    \PrePer_\eps(A;\mu) =
    \frac{w_0}{\eps}\rho_0({\teal\{x\in A^c\st\dist(x,A)<\eps\}})+
    \frac{w_1}{\eps}\rho_1({\olive\{x\in A \st  \dist(x,A^c)<\eps\}}),
\end{align}
\second see \cref{fig:perimeter} for a color-coded illustration.
\end{theorem*}
\begin{figure}[htb]
    \centering
    \begin{tikzpicture}
    \node at (0,0) {\includegraphics[width=0.7\textwidth]{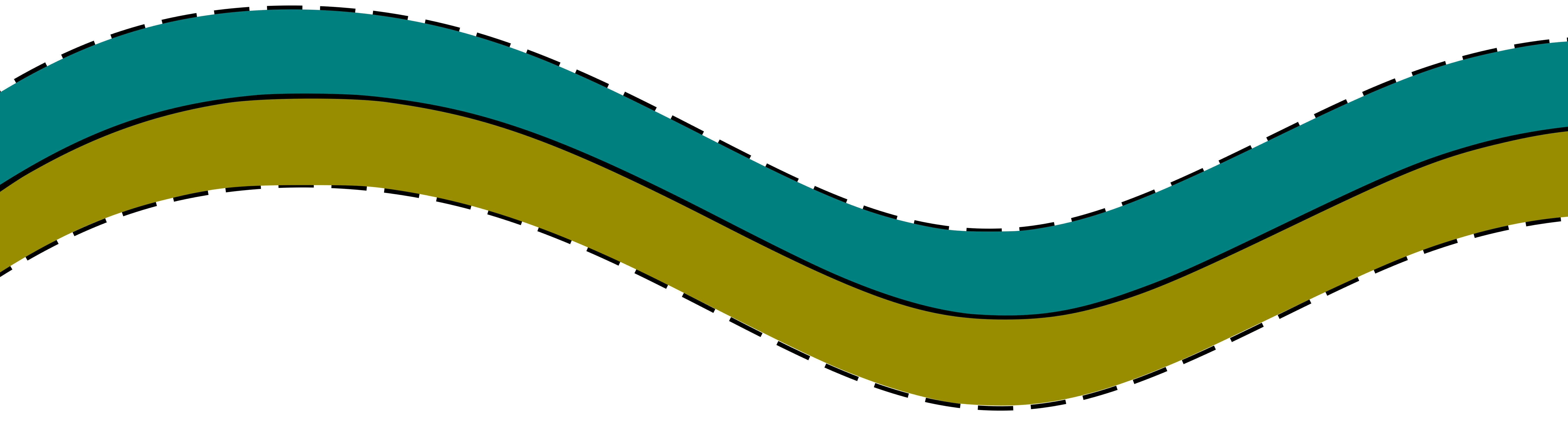}};
    \node at (-2.2,-0.5) {\nc$A\triangleq\text{class 1}$};
    \node at (1.2,0.5) {\nc$A^c\triangleq\text{class 0}$};
    \node at (-6,0.2) {\nc$\partial A$};
    \end{tikzpicture}
    \caption{\second Illustration of the ``perimeter'' defined in \labelcref{eq:pre_Per}.
    The blue strip outside $A$ is measured with $\rho_0$. The olive strip inside $A$ is measured with $\rho_1$.
    The sum of these two quantities being small means that the indicator of $A$ is an adversarially robust classifier.
    \nc}
    \label{fig:perimeter}
\end{figure}
The functional $\PrePer_\eps$ \editscolor can be called a type of  ``perimeter'' since \nc it is a non-negative functional over sets with the important submodularity property:
\begin{align*}
    \PrePer_\eps(A\cup B;\mu) +
    \PrePer_\eps(A\cap B;\mu) 
    \leq 
    \PrePer_\eps(\first A\nc;\mu)
    +
    \PrePer_\eps(B;\mu), \quad \forall A, B \in \B(\domain).
\end{align*}
Submodular functionals over sets typically induce convex functionals over functions (referred to as a total variation), defined through the coarea formula:
\[ \PreTV_\eps(u;\mu) := \int_{-\infty}^\infty \PrePer_\eps(\{ u \geq t \};\mu) \de t. \]
\first 
We will show that the so defined total variation takes the form
\begin{align}\label{eq:pre_TV}
    \PreTV_\eps(u;\mu)
    =
    \frac{w_0}{\eps} \int_\domain \sup_{\tilde x\in B_\eps(x)}u(\tilde x) - u(x) \de\rho_0(x) + 
    \frac{w_1}{\eps} \int_\domain u(x) - \inf_{\tilde x\in B_\eps(x)}u(\tilde x) \de\rho_1(x).
\end{align}
\nc 
By our notation we emphasize that both the perimeter and the total variation depend on the data distribution $\mu$ through $w_i, \varrho_i$ and not just through $\varrho$, as will be detailed in the course of the paper. 
Hence, as opposed to standard (nonlocal) perimeters and total variations, they constitute a family of data-driven regularizers.
Such regularizers, typically learned in a supervised manner, have recently been shown to be superior over model-based regularizers for certain tasks in medical imaging, see \cite{mukherjee2020learned}.

It turns out that using the $\widetilde{\TV}_\eps$ functional we can define an \textit{exact} convex relaxation for the problem \labelcref{eq:baseline_adv}:
\begin{theorem*}
The variational problem:
\begin{align}\label{eq:convex_relaxation}
    \inf_{u: \domain \rightarrow [0,1]}\Exp{(x,y)\sim\mu}{\abs{u(x)-y}} + \eps \PreTV_\eps(u;\mu),
\end{align}
is an exact convex relaxation of problem \labelcref{eq:baseline_adv}. 
In particular, any solution to problem \labelcref{eq:baseline_adv} is also a solution to \labelcref{eq:convex_relaxation}, and conversely, for any solution $u$ of problem \labelcref{eq:convex_relaxation} we can obtain a solution to problem \labelcref{eq:baseline_adv} by considering level sets of $u$.
\end{theorem*}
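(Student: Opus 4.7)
The plan is to exploit the coarea structure of $\PreTV_\eps$ together with a layer-cake representation of the fidelity term, so that the relaxed objective becomes an \emph{average} of the binary objective over superlevel sets. First, I would prove the layer-cake identity: for any measurable $u \colon \domain \to [0,1]$ and any $y \in \{0,1\}$,
\begin{equation*}
    \abs{u(x) - y} = \int_0^1 \abs{1_{\{u \geq t\}}(x) - y} \de t,
\end{equation*}
which follows by checking the cases $y=0,1$ separately, using $u(x) = \int_0^1 1_{\{u(x) \geq t\}} \de t$ for values in $[0,1]$. Integrating against $\mu$ and applying Fubini--Tonelli gives
\begin{equation*}
    \Exp{(x,y)\sim\mu}{\abs{u(x)-y}} = \int_0^1 \Exp{(x,y)\sim\mu}{\abs{1_{\{u \geq t\}}(x)-y}} \de t.
\end{equation*}

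Next, because $u$ takes values in $[0,1]$, the superlevel set $\{u \geq t\}$ equals $\domain$ for $t \leq 0$ and $\emptyset$ for $t > 1$, and on both of these the nonlocal perimeter vanishes; hence the coarea definition of $\PreTV_\eps$ collapses to an integral over $(0,1)$. Combining this with the layer-cake identity for the fidelity yields the key representation
\begin{equation*}
    \Exp{(x,y)\sim\mu}{\abs{u(x)-y}} + \eps\,\PreTV_\eps(u;\mu) = \int_0^1 F\bigl(\{u \geq t\}\bigr) \de t,
\end{equation*}
where $F(A) := \Exp{(x,y)\sim\mu}{\abs{1_A(x)-y}} + \eps\, \PrePer_\eps(A;\mu)$ is precisely the reformulation \labelcref{eq:objective_Per} of the objective of \labelcref{eq:baseline_adv} provided by the first theorem of this subsection.

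From here the proof concludes cleanly. Writing $m := \inf_{A \in \B(\domain)} F(A)$, the pointwise inequality $F(\{u \geq t\}) \geq m$ integrated over $t \in (0,1)$ shows that the infimum in \labelcref{eq:convex_relaxation} is bounded below by $m$. The matching upper bound is immediate: for any $A \in \B(\domain)$ the test function $u := 1_A$ satisfies $\{u \geq t\} = A$ for every $t \in (0,1]$, so the relaxed objective at $u$ equals $F(A)$; taking the infimum over $A$ closes the gap. This proves that the two infima coincide and, in particular, that whenever $A^*$ solves \labelcref{eq:baseline_adv} the function $1_{A^*}$ solves \labelcref{eq:convex_relaxation}. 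Conversely, if $u^*$ is any minimizer of \labelcref{eq:convex_relaxation}, the nonnegative integrand in $\int_0^1 [F(\{u^* \geq t\}) - m]\de t = 0$ must vanish for a.e.\ $t \in (0,1)$, so $\{u^* \geq t\}$ solves \labelcref{eq:baseline_adv} for a.e.\ such $t$, which is precisely the level-set recovery claim.

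The main place where care is required is verifying that $\PrePer_\eps$ truly vanishes on $\emptyset$ and $\domain$, so that the coarea integral is genuinely supported on $(0,1)$; this should follow directly from the explicit formula \labelcref{eq:pre-perimeter} to be established before the theorem, but it must be checked rather than assumed. Measurability of $t \mapsto F(\{u \geq t\})$ is not a real obstacle, since $(x,t) \mapsto 1_{\{u(x)\geq t\}}$ is jointly measurable and Tonelli applies both to the fidelity and to the perimeter (the latter provided the explicit formula is a positive integral against a suitable measure, which it will be). None of the manipulations rely on properties of the metric $\metric$ or the distribution $\mu$; the argument is the data-driven nonlocal analogue of the classical Chan--Esedo\={g}lu--Nikolova thresholding-style convex relaxation for binary minimization with a submodular regularizer.
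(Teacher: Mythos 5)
Your proof is correct and uses the same core idea (coarea formula plus layer-cake plus the averaging/thresholding argument) that drives the paper's treatment, but it routes around a good deal of the paper's machinery, which I think is worth pointing out. The paper's rigorous version of this statement is embedded in the existence proof (\cref{thm:existence}) and works with the $\nu$-essential functionals $\TV[\nu]_\eps$ and $\Per[\nu]_\eps$: the coarea formula there (\cref{prop:coarea}) is proved under \cref{ass:nu_eps} via the representative constructions of \cref{lem:ModifySets,lem:ModifyFcts}, and the level-set recovery for the original problem then requires an extra passage through \cref{lem:minimizer2minimizer}. You instead work directly with $\PrePer_\eps$ and $\PreTV_\eps$, for which the coarea formula and the layer-cake decomposition of the fidelity hold by a bare Tonelli argument with no measure-theoretic assumption on $\nu$ at all; the averaging identity
\[
\Exp{(x,y)\sim\mu}{\abs{u-y}} + \eps\PreTV_\eps(u;\mu)
= \int_0^1 \Bigl( \Exp{(x,y)\sim\mu}{\abs{1_{\{u\geq t\}}-y}} + \eps\PrePer_\eps(\{u\geq t\};\mu) \Bigr)\de t
\]
then immediately gives equality of infima and a.e.\ level-set recovery without invoking \cref{ass:nu_eps}, weak-$*$ compactness, or lower semicontinuity. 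This is precisely the right scope, because the informal theorem only asserts exactness of the relaxation and the correspondence of minimizers when they exist, not existence itself (for which the paper's heavier machinery is genuinely needed). One small point you should make explicit rather than refer to as ``the coarea definition of $\PreTV_\eps$'': the functional $\PreTV_\eps$ is defined in \labelcref{eq:PreTV} by the $\sup/\inf$ formula, not by coarea, so the identity $\PreTV_\eps(u;\mu) = \int_0^1 \PrePer_\eps(\{u\geq t\};\mu)\de t$ for $u$ valued in $[0,1]$ is a lemma to be proved. The proof is exactly the observation that $\int_0^1 \sup_{B_\eps(x)}1_{\{u\geq t\}}\de t = \sup_{B_\eps(x)}u$ pointwise (the value at the single threshold $t=\sup_{B_\eps(x)}u$ does not matter), followed by Tonelli; this is elementary and in fact easier than the essential-version coarea formula in the paper, but the statement is not in the paper, so you should supply it.
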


We move on to study the existence of solutions to problem \labelcref{eq:baseline_adv}.
\begin{theorem*}[informal]
Under some technical conditions on the metric space $\domain$ and the measure $\rho$, the adversarial training problem \labelcref{eq:baseline_adv} admits a solution $A\in\B(\domain)$.
\end{theorem*}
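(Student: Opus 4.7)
I would prove existence by applying the direct method to the exact convex relaxation \labelcref{eq:convex_relaxation} and then invoke the preceding theorem to recover a minimizer of \labelcref{eq:baseline_adv} as a superlevel set of the convex minimizer. This routes around the intrinsic obstruction in the original formulation, namely that $A \mapsto \sup_{\tilde x \in B_\eps(x)} 1_A(\tilde x)$ is only upper semicontinuous under natural notions of set convergence, so that \labelcref{eq:baseline_adv} does not lend itself directly to weak compactness arguments.

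\textbf{Direct method on the relaxation.} Let $\{u_n\} \subset L^\infty(\rho;[0,1])$ be a minimizing sequence for \labelcref{eq:convex_relaxation}. Uniform boundedness and Banach--Alaoglu furnish a weak-$*$ subsequential limit $u_\star \in L^\infty(\rho;[0,1])$ with $u_n \wsto u_\star$. The fidelity $\Exp{(x,y)\sim\mu}{|u(x)-y|}$ is affine and weakly-$*$ continuous, so it passes to the limit. For the regularizer I would establish lower semicontinuity of $\PreTV_\eps(\cdot;\mu)$ through a nonlocal BV-type compactness argument: the energy bound $\sup_n \PreTV_\eps(u_n;\mu)<\infty$ combined with the technical assumptions on $(\domain,\metric,\rho)$ (e.g.\ $\domain$ proper and $\rho$ a finite Radon measure satisfying a doubling-type condition so that $\rho(B_\eps(x))$ stays bounded below on $\supp\rho$) yields a subsequence converging $\rho$-almost everywhere to $u_\star$. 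Fatou's lemma applied to the coarea decomposition $\PreTV_\eps(u;\mu)=\int_{-\infty}^{\infty}\PrePer_\eps(\{u\geq t\};\mu)\de t$, together with the corresponding fact that $\PrePer_\eps(A;\mu)$ is l.s.c.\ under $\rho$-a.e.\ convergence of indicators (by Fatou applied to the essential-supremum integrand over $B_\eps(x)$), then gives
\begin{align*}
    \PreTV_\eps(u_\star;\mu) \leq \liminf_{n\to\infty} \PreTV_\eps(u_n;\mu).
\end{align*}
Hence $u_\star$ minimizes \labelcref{eq:convex_relaxation}, and the exact convex relaxation theorem yields a minimizer $A_\star := \{u_\star>t\}\in\B(\domain)$ of \labelcref{eq:baseline_adv} for any generic threshold $t\in(0,1)$.

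\textbf{Main obstacle.} The most delicate step is extracting $\rho$-a.e.\ convergence of a minimizing sequence from boundedness of $\PreTV_\eps$, i.e.\ the nonlocal and data-driven analogue of the classical $BV\hookrightarrow L^1$ Rellich--Kondrachov embedding. This is exactly where the technical conditions enter: in the empirical regime $\rho=\frac{1}{N}\sum_i\delta_{x_i}$ the argument collapses to a finite combinatorial one, since only finitely many sets are distinguishable modulo $\rho$-null sets; but in the absolutely continuous regime one needs a genuine compact embedding tailored to the data-driven nonlocal perimeter $\PrePer_\eps$, which may fail without geometric regularity of $\supp\rho$ and appropriate lower bounds on $\rho(B_\eps(\cdot))$. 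A secondary subtlety, once a.e.\ convergence is in hand, is that the essential-supremum representation of $\PrePer_\eps$ is the natural one for Fatou to apply; the pointwise-supremum formulation used in \labelcref{eq:baseline_adv} is matched to this only under the same technical hypotheses, as already signposted in the discussion preceding \labelcref{eq:esssup_adv}.
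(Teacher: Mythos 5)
Your high-level template (direct method on the convex relaxation, then extract a level set as a set-valued minimizer) matches the paper's, and the observation that the fidelity term is affine and hence passes to the weak-$*$ limit is correct. But the crucial step—lower semicontinuity of the nonlocal total variation—is handled by a mechanism that does not work, and this gap is fatal to the argument.

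You propose to deduce $\rho$-a.e.\ convergence of a minimizing sequence from the energy bound $\sup_n \PreTV_\eps(u_n;\mu)<\infty$, in analogy with the classical $BV\hookrightarrow L^1$ compact embedding. This cannot work here, for a simple structural reason: for fixed $\eps>0$ and any measurable $u:\domain\to[0,1]$ one has $\PreTV_\eps(u;\mu)\leq 1/\eps$, since $\sup_{B_\eps(\cdot)}u - u \leq 1$, $u - \inf_{B_\eps(\cdot)}u \leq 1$, and $\rho_0,\rho_1$ are probability measures. Thus the energy bound is satisfied automatically by every admissible sequence and carries no compactness information whatsoever. Concretely, on $\domain=[0,1]$ with Lebesgue data, the indicators of $A_n=\bigcup_{k}[2k/n,(2k+1)/n]$ have uniformly bounded $\PreTV_\eps$ yet converge weak-$*$ to $1/2$ with no a.e.\ convergent subsequence. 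So Fatou-via-pointwise-convergence is not available, and there is no nonlocal Rellich theorem to appeal to; the "main obstacle" you flag is not a technical hurdle but a genuine impossibility. What the paper does instead is prove \emph{weak-$*$} lower semicontinuity of the (essential) total variation directly (\cref{lem:wstar_pointwise,lem:lsc_osc,lem:lsc_inf_sup,prop:TV_prop}), using the Lebesgue differentiation theorem to show that weak-$*$ convergence already controls $\esssup_{B_\eps(x)}u_n$ from below in the $\liminf$ and $\essinf_{B_\eps(x)}u_n$ from above in the $\limsup$—no strong convergence needed. This is the essential idea you are missing.

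A second, subtler issue is that you work in $L^\infty(\rho;[0,1])$. Equivalence classes modulo $\rho$-null sets cannot represent the objective in \labelcref{eq:baseline_adv}, because the adversary is allowed to move points off $\supp\rho$ (see \cref{ex:Example1}: replacing $\sup$ by $\esssup[\rho]$ changes the problem). The paper instead introduces an auxiliary measure $\nu$ with $\rho\ll\nu$, $\supp\nu\supseteq\{x:\dist(x,\supp\rho)<\eps\}$, and a doubling condition (\cref{ass:nu_eps}), works in $L^\infty(\domain;\nu)$, and—this is the final missing ingredient—uses the precise-representative construction of \cref{lem:ModifySets,prop:representative_Per} to upgrade a minimizer of the $\nu$-essential problem to a genuine minimizer of \labelcref{eq:baseline_adv} with the pointwise supremum. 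You signpost this issue in your last paragraph, but the proposal as written does not supply the mechanism (nor the correct function space) to resolve it.
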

Our existence proof is technical and is based on the lifting of the variational problem \labelcref{eq:baseline_adv} to a problem of the form \labelcref{eq:esssup_adv}.
This problem admits an application of the direct method of the calculus of variations after establishing lower semicontinuity and compactness in a suitable weak-* Banach space topology, \editscolor see \cref{sec:appendix_technical_defs} for a definition of the weak-* topology. \nc
In the course of this, we will introduce well-defined versions of $\PrePer_\eps$ and $\PreTV_\eps$, which will not carry the tilde anymore, and study their associated variational problems. We discuss how to build solutions to the original problem \labelcref{eq:baseline_adv} from solutions to the modified problems. 

\third 
\begin{remark}[Relation to previous results]\label{rem:previous}
Existence of solutions to other adversarial training problems has also been obtained recently in the work \cite{awasthi2021existence_extended}. The existence results in \cite{awasthi2021existence_extended} and ours are highly complementary to each other and in what follows we highlight the differences in their settings, which are apparent in at least three ways: First, the adversarial model in \cite{awasthi2021existence_extended} is defined in terms of closed balls rather than open balls as done here; existence of solutions in the open ball model was left as an open question.
Second, the collection of subsets $A$ of $\mathcal{X}$ over which the optimization takes place in \cite{awasthi2021existence_extended} is the so called universal $\sigma$-algebra, which is larger than the Borel $\sigma$-algebra considered here. For the adversarial model with closed balls it is essential to use the universal $\sigma$-algebra in order to assure the measurability of the adversarial loss function, which we get for free in our open ball model. Naturally, lower semicontinuity is less of an issue in \cite{awasthi2021existence_extended}, whereas in our proof we rely on relaxation methods and on the explicit construction of representatives. Lastly, we highlight that our setting is very general since we work on a metric measure space whereas the results in \cite{awasthi2021existence_extended} hold in the setting of norm balls in Euclidean space.
\end{remark}
\nc

It is also worth highlighting that objective functions of type $L^1+\TV$ and their relation to perimeter regularization have been extensively studied in the mathematical imaging community \cite{chan2005aspects,darbon2005total,duval2009tvl1,zeune2017combining,Chambolle10anintroduction}. Further background on total variation methods in imaging is provided in \cite{burger2013guide,Chambolle10anintroduction}.

After establishing existence of solutions to \labelcref{eq:baseline_adv} we proceed to studying their properties. In particular, we exploit  the underlying convexity made manifest by our theorems and deduce a series of strong implications on the geometry and regularity of the family of solutions to the adversarial training problem \labelcref{eq:baseline_adv}. As a first step, we prove that solutions are closed under intersections and unions. From this we will be able to prove the following:
\begin{theorem*}[informal]
 There exist (unique) minimal and maximal solutions to \labelcref{eq:baseline_adv} in the sense of set inclusion.
\end{theorem*}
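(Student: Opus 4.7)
The plan is to combine the submodular structure of $\PrePer_\eps$ with a monotone extraction argument to obtain a minimal solution, with the maximal solution following by a dual construction. Let $\mathcal{S} \subseteq \B(\domain)$ denote the family of minimizers of \labelcref{eq:baseline_adv}, with common optimal value $m$. The first step is to show that $\mathcal{S}$ is closed under pairwise unions and intersections. To see this, observe that the fidelity part of the objective, $F(A) := \int_A w_0 \de \rho_0 + \int_{A^c} w_1 \de \rho_1$, is modular (i.e.\ $F(A) + F(B) = F(A \cup B) + F(A \cap B)$), while the submodularity of $\PrePer_\eps$ recorded in the excerpt gives the opposite inequality for the regularizer. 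Adding the two yields $J(A \cup B) + J(A \cap B) \leq J(A) + J(B) = 2m$ for the total objective $J$ from \labelcref{eq:objective_Per}, and since each term on the left is bounded below by $m$, equality is forced, so $A \cup B, A \cap B \in \mathcal{S}$.

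Next I would construct a candidate minimal element. Set $m_* := \inf_{A \in \mathcal{S}} \rho(A)$, pick $A_n \in \mathcal{S}$ with $\rho(A_n) \downarrow m_*$, and form $B_n := A_1 \cap \cdots \cap A_n$, which by the lattice property still lies in $\mathcal{S}$ and is decreasing. Define $A_{\min} := \bigcap_n B_n$; this is Borel, and $\rho(A_{\min}) = m_*$ by monotone convergence. The crux is to verify $A_{\min} \in \mathcal{S}$. The fidelity $F$ passes through the monotone limit by dominated convergence, but the supremum term is delicate, since the outer supremum over $\tilde x \in B_\eps(x)$ does not commute with pointwise decreasing limits of $1_{B_n}$ in general. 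I would route this through the essential-supremum formulation \labelcref{eq:esssup_adv}, which the existence theorem already embeds in a weak-$*$ Banach space endowed with a lower semicontinuous regularizer; one obtains a minimizer there and then pushes it back to a set-theoretic representative via the correspondence between \labelcref{eq:esssup_adv} and \labelcref{eq:baseline_adv} used in the existence proof.

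Once $A_{\min} \in \mathcal{S}$ is secured, minimality follows quickly: for any $A \in \mathcal{S}$ the lattice property gives $A \cap A_{\min} \in \mathcal{S}$ with $\rho(A \cap A_{\min}) \leq \rho(A_{\min}) = m_*$, forcing equality and hence $A_{\min} \subseteq A$ up to a $\rho$-null set, which is upgraded to literal inclusion by the same pointwise representative. Uniqueness is automatic, since two minimal solutions would each be contained in the other. The maximal solution is built in exactly the dual manner, using $m^* := \sup_{A \in \mathcal{S}} \rho(A)$ and replacing intersections with unions throughout. The principal obstacle is the compactness/lower semicontinuity step needed to show $A_{\min} \in \mathcal{S}$ in the presence of the pointwise open-ball supremum, which is precisely why the $\esssup$-based lifting developed for the existence theorem is essential here as well.
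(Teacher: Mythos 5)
Your overall strategy matches the paper's: prove submodularity of the adversarial risk (via modularity of the fidelity plus submodularity of $\PrePer_\eps$), deduce that the family of minimizers is a lattice (\cref{prop:ClosureUnderUnion}), then extract an extremal element by a monotone limit and invoke the weak-$*$ lower semicontinuity machinery to show the limit is still optimal (\cref{prop:max_min_sol}). Two points deserve attention, though, because they are exactly where the paper does the careful work.

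First, the reference measure. You set $m_* := \inf_{A\in\mathcal{S}}\rho(A)$ and run the whole argument with $\rho$. The paper instead works with the auxiliary finite measure $\nu$ from \cref{ass:nu_eps}, and this is not cosmetic: the lower semicontinuity (\cref{prop:TV_prop}) and the precise-representative construction (\cref{lem:ModifySets}, \cref{prop:representative_Per}) are proved on $L^\infty(\domain;\nu)$, not $L^\infty(\domain;\rho)$, and they rely on $\nu$ having support on the $\eps$-dilated support of $\rho$ and being locally doubling. If you pass to the limit in $L^1(\rho)$/$L^\infty(\rho)$ you lose access to this machinery. More substantively, with $\rho$ the argument only yields that $A_{\min}\subseteq A$ up to $\rho$-null sets, and $\rho$-null differences do change the value of the non-essential supremum in \labelcref{eq:baseline_adv}, so this does not directly give inclusion of sets for the original adversarial risk. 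The paper's choice of $\nu$ (and the partial order $\preceq_\nu$) is what makes ``minimal/maximal up to $\nu$-equivalence'' compatible with passing back to a genuine solution of \labelcref{eq:baseline_adv}.

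Second, the step you describe as ``upgraded to literal inclusion by the same pointwise representative'' is where the paper spends real effort. Proposition~\ref{prop:max_min_sol} produces a maximal/minimal solution of the relaxed problem \labelcref{eq:adv_perimeter} only as an equivalence class; to get honest sets solving \labelcref{eq:baseline_adv} that are nested, the paper introduces the representative-independent density sets $A^+$ and $A^-$ and proves a monotonicity property ($A\preceq_\nu A'$ implies $A^+\subseteq (A')^+$) in \cref{cor:max_min_sets}. Without that construction, the passage from ``minimal up to null sets'' to ``minimal in the sense of set inclusion'' does not go through for the non-essential supremum. So your skeleton is the right one, but both the measure $\nu$ and the $A^{\pm}$ step are load-bearing rather than routine.
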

It is then possible to show that maximal and minimal solutions satisfy, respectively, inner and outer regularity conditions (in a suitable sense discussed in detail throughout the paper), providing in this way the first results on regularity of (certain) solutions to \labelcref{eq:baseline_adv}. We investigate the regularity of solutions further and establish \editscolor Hölder regularity \nc results like the following \first (see \cref{sec:appendix_technical_defs} for definitions)\nc:
\begin{theorem*}[informal]
Let $(\domain, \metric)$ be $\R^d$ with the Euclidean distance. 
For any $\eps>0$ there exists a solution to the problem \labelcref{eq:baseline_adv} \editscolor whose boundary \nc is locally the graph of a $C^{1,1/3}$ function\nc. 
\end{theorem*}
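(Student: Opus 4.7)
The plan is to construct a solution to \labelcref{eq:baseline_adv} with good local boundary regularity, working from the maximal and minimal solutions whose existence and one-sided regularity was announced earlier. I would first argue that in the Euclidean setting, the outer regularity of the maximal solution $A^*$ upgrades to a uniform exterior sphere condition: at every $x \in \partial A^*$ there is a Euclidean ball $B_r(y) \subset \domain \setminus A^*$ with $x \in \partial B_r(y)$ and $r$ bounded below in terms of $\varepsilon$. The rationale is that otherwise a small piece of $(A^*)^c$ near $x$ could be added to $A^*$ with a favorable trade-off between the $L^1$ term and the nonlocal perimeter at scale $\varepsilon$, contradicting maximality. Symmetrically, the minimal solution $A_*$ admits a uniform interior sphere condition.

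Second, exploiting the lattice structure (closure of solutions under unions and intersections already established above), I would produce an intermediate solution $A$ with $A_* \subseteq A \subseteq A^*$ whose boundary inherits both a uniform interior and a uniform exterior sphere condition. From this two-sided ball condition it follows by a classical geometric argument that $\partial A$ is locally the graph of a Lipschitz function, with Lipschitz constant controlled in terms of $\varepsilon$.

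The third step is the crucial one, and is where the Hölder exponent $1/3$ enters: I would upgrade the Lipschitz regularity of $\partial A$ to $C^{1,1/3}$ via a variational comparison. At a scale $r$ around a flat boundary point, the nonlocal perimeter $\PrePer_\varepsilon$ controls the tangential oscillation of the outward normal through a quadratic form, while the $L^1$ term provides a quantitative lower bound on the cost of normal perturbations of the boundary. Balancing the $L^1$ penalty (scaling like $r^{d}$ times the squared oscillation) against the nonlocal perimeter gain (scaling like $r^{d-1}$ times the cube of the oscillation) yields a modulus of continuity for the unit normal of the form $r^{1/3}$, which is exactly the $C^{1,1/3}$ regularity claimed.

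The main obstacle is this last step: the precise computation of the first and second variation of the weighted nonlocal perimeter $\PrePer_\varepsilon$ near a flat boundary point, extracting a usable quadratic lower bound, and making the dimensional balance rigorous without losing the $1/3$ exponent. The Euclidean structure of $\Rd$ is essential at every stage: it produces sharp sphere conditions via the geometry of balls and allows the normal perturbation analysis that yields the Hölder estimate; in a general metric space, neither construction is available in this clean form. An alternative route, which I expect to yield the same exponent, is to argue entirely at the level of the convex relaxation \labelcref{eq:convex_relaxation}, extracting regularity of a minimizer $u$ and then using the coarea formula together with Sard-type arguments to select a level set $\{u \geq t\}$ inheriting the $C^{1,1/3}$ regularity.
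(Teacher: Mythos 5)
The central step (your Step 2) does not work. You propose to use the lattice structure to find an intermediate minimizer that simultaneously satisfies a uniform interior and a uniform exterior $\veps$-sphere condition, and then deduce Lipschitz and later $C^{1,1/3}$ regularity from this two-sided condition. But the paper explicitly observes, just before its proof, that this is impossible in general: given an $\veps$ inner regular minimizer $\tilde A$ and an $\veps$ outer regular minimizer $\hat A$ with $\tilde A \subseteq \hat A$, one cannot in general interpolate a minimizer that is both $\veps$ inner and outer regular, and \cref{ex:four-squares} is a concrete obstruction (a topological defect at the origin). The paper's actual construction is quite different: starting from $\tilde A = \op_\eps(A)$ and $\hat A = \cl_\eps(A)$, it builds an intermediate set $B = \{\tilde d_1 \le \kappa\, \tilde d_2\}$ using \emph{regularized signed distance functions} to $\tilde A$ and $\hat A^c$, chooses $\kappa$ to be a regular value via Sard's theorem so that $\partial B$ is a smooth graph strictly inside the gap region $(\tilde A \cup \hat A^c)^c$, and then proves $C^{1,1/3}$ at the interface with $\partial \tilde A \cap \partial \hat A$ via an extension of the ``four ball lemma'' of \cite{lewicka2020domains} (\cref{lem:extended-spheres}). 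The $1/3$ exponent arises from balancing the two regimes in that gradient estimate ($|\nabla d_r(x_1)-\nabla d_r(x_2)| \lesssim \e^{-\alpha}|x_1-x_2|^{1-\alpha}$ when the distance to the contact set is large, versus $\lesssim \e^{-1}|x_1-x_2| + \e^{\alpha/2-1}|x_1-x_2|^{\alpha/2}$ when it is small), optimized at $\alpha=2/3$.

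Your Step 3 is also problematic on its own terms, independently of Step 2. The ``variational comparison'' sketch claims the $L^1$ term gives a quadratic lower bound scaling like $r^d$ times squared oscillation of the normal. But the $L^1$ term $\Exp{\mu}{|1_A - y|}$ is data-dependent and can vanish entirely near a boundary point (e.g.\ if $\rho$ is empirical, or if $\supp\rho_0$ and $\supp\rho_1$ are separated, as in \cref{fig:maximal-ex1}), so no pointwise quadratic lower bound is available without additional density assumptions on $\mu$ that the theorem does not impose. Moreover, even taking the stated scalings at face value, balancing $r^d\theta^2 \sim r^{d-1}\theta^3$ gives $\theta\sim r$, i.e.\ $C^{1,1}$, not $\theta \sim r^{1/3}$; the claimed exponent does not emerge from that heuristic. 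The paper's $1/3$ is a geometric artifact of the regularized-distance construction near the contact set of $\tilde A$ and $\hat A$, not an exponent dictated by an Euler--Lagrange variational balance, and the authors note that $C^{1,1}$ (not $C^{1,1/3}$) is the best one should expect in general.
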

Although stated for the Euclidean setting only, we highlight that similar results can be proved in more general settings provided that one adjusts the interpretation of regularity of solutions to non-Euclidean contexts. 
A more detailed investigation of this will be the topic of follow-up work. 
It is also important to reiterate that our results apply to a general measure $\mu$ regardless of whether it has densities with respect to Lebesgue measure or if it is an empirical measure. 
In particular, from our results we can conclude that the presence of the adversary always enforces regularization of decision boundaries, even when the original unrobust problem does not possess regular solutions (i.e., when the Bayes classifiers are not regular). 
We remark that we do not claim any sharpness in our regularity results. However, in general one should not expect better regularity than $C^{1,1}$ (in the Euclidean setting) based on the discussion that we present in \cref{sec:RegularitySolutions} and on the results from \cite{lewicka2020domains}.

\first 

\first
Finally, we remark that our results suggest that one should use algorithms for adversarial training that are based on training parametric models that are able to produce or approximate regular classifiers. Some examples of these models are suggested by recent results in the literature of approximation theory; these results state that it is possible to approximate characteristic functions of regular sets with neural networks whose size is determined by the level of regularity of the target set, see \cite{petersen2018optimal}. We believe that our regularity results can indeed inform new implementations of adversarial training, but there are still several points to be resolved before being able to carry out an actual algorithmic implementation. Moreover, since the notion of regularity depends on the distance function used to define the action space of the adversary, one should naturally adapt algorithms to produce robust classifiers of the specified type. The above discussion will be expanded in future work. 
\nc

In addition to the adversarial model \labelcref{eq:baseline_adv}, we discuss other adversarial models that admit a representation of the form $L^1+\text{(nonlocal)}\TV$. 
From this we will be able to conclude that perimeter functionals penalizing the boundaries of sets indeed arise naturally as regularizers for binary classification problems.
This fact can be interpreted conversely: it is possible to give a game theoretic interpretation for a class of variational problems that involve the use of (nonlocal) total variation (including those that have been used in graph-based learning for classification \cite{garciatrillos_murray_2017}).
This work can then be naturally related to a collection of works that provide game theoretical interpretations of variational problems. 
For example, \cite{caffarelli2012non} and \cite{peres2008tug} connect fractional Dirichlet energies with a two-player game.
Moreover, \cite{kohn2006deterministic} connects mean curvature flow with a different two-player game. 
While our energies do not directly coincide with the ones in those papers, they are similar in form.

\subsection{Outline}
The rest of the paper is organized as follows. 
In \cref{sec:Reformulations} we discuss different reformulations of the adversarial training problem \labelcref{eq:baseline_adv}. 
First, in \cref{sec:relaxation} we \editscolor relax \nc the problem in a suitable way in order to make it amenable to functional analytic treatment; this reformulation will be crucial for our latter exploration on existence of solutions to \labelcref{eq:baseline_adv} and the study of some of their properties. 
\nc
In \cref{sec:RegularizationReformulation} we discuss the reformulation of \labelcref{eq:baseline_adv} as the regularized risk minimization that has already been introduced in \cref{sec:main_results}, cf. \labelcref{eq:objective_Per}.

\cref{sec:analysis} is devoted to the study of properties of the regularization reformulation of \labelcref{eq:baseline_adv}. 
We define suitable relaxations of the functionals $\PrePer_\eps$ and $\PreTV_\eps$ appearing in \labelcref{eq:objective_Per} and establish key properties including submodularity, convexity, and lower semi-continuity with respect to suitable topologies. 
With these properties at hand we show existence of solutions to problem \labelcref{eq:baseline_adv} in \cref{sec:ExistenceSolutions}. 
\first 
In \cref{sec:ExtremalSolutions} we study maximal and minimal solutions and in \cref{sec:RegularitySolutions} we investigate regularity.
\nc

In \cref{sec:OtherModels} we explain how to generalize our insights to regression tasks and other adversarial training models that give rise to perimeter minimization problems with different perimeter functionals. In particular, we recover data-driven regularizers as well as statistically robust interpretations to regularization approaches used in graph-based learning. 

We wrap up the paper in \cref{sec:Conclusions} where we present further discussion on the implications of our work and provide some directions for future research.

\first
Technical definitions, some proofs, and further remarks on the advantage of using open balls are given in the appendix.
\nc

\section{Reformulations of Adversarial Training}
\label{sec:Reformulations}

\subsection{Relaxation in Quotient \texorpdfstring{$\sigma$}{sigma}-Algebra}
\label{sec:relaxation}

To be able to prove existence of minimizers for \labelcref{eq:baseline_adv} we have to relax it to make it amenable to functional analytic treatment.
Note that, because of the presence of the non-essential supremum in \labelcref{eq:baseline_adv}, two sets $A$ and $A'$ whose symmetric difference
\begin{align}
    A\triangle A' := (A\setminus A') \cup (A'\setminus A)
\end{align}
meets $\nu(A\triangle A')=0$ for some reference measure $\nu$ do not have to have the same value of the objective function, in general.
This is a major difference to unregularized problem \labelcref{eq:bayes_class} and will cause problems, for instance, when proving existence of minimizers.

To fix this  we define the set
\begin{align}
    \mathfrak N_\nu :=\{A\in\B(\domain) \st \nu(A)=0\}
\end{align}
where $\nu$ is an arbitrary reference measure on $\domain$, to be specified later.
The set $\mathfrak N_\nu$ is a two-sided ideal in the $\sigma$-algebra $\B(\domain)$, interpreted as ring with addition $\triangle$ and multiplication $\cap$.
This allows us to define the quotient $\sigma$-algebra
\begin{align}
    \B_\nu(\domain) := {\B(\domain)}\Big/{\mathfrak N_\nu}
\end{align}
with the equivalence relation $\sim_\nu$, defined by
\begin{align}\label{eq:equiv_rel}
A\sim_\nu B:\iff A\triangle B\in\mathfrak N_\nu \iff \nu(A\triangle B)=0.    
\end{align}
\editscolor
The function $d_\nu(A,B) := \nu(A \triangle B)$ is non-negative, symmetric and sub-additive, and hence defines a pseudo-metric on $\B(\domain)$. This function is also zero if and only if $A\sim_\nu B$, and hence it is a metric on the quotient $\sigma$-algebra $\B_\nu(\domain)$. In some sources this metric is called the Fr\'echet--Nikod\'ym pseudo-metric, see e.g. Section 1.12 in \cite{bogachev2007measure}.
\nc
The following proposition states that the minimization in \labelcref{eq:baseline_adv} can be rewritten as the minimization of some sort of quotient norm on the quotient $\sigma$-algebra $\B_\nu(\domain)$.
Interestingly, the choice of $\nu$ does not yet matter here.
\begin{proposition}\label{prop:quotient_algebra}
For any Borel measure $\nu$ on $\domain$ it holds that
\begin{align}\label{eq:adv_relaxed}
    \begin{split}
    \labelcref{eq:baseline_adv}
    =
    \inf_{A\in\B(\domain)}\inf_{\substack{B\in\B(\domain)\\A\sim_\nu B}}\Exp{(x,y)\sim\mu}{\sup_{\tilde x\in B_\eps(x)}\abs{1_B(\tilde x)-y}}.
    \end{split}
\end{align}
\end{proposition}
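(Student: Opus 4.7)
The plan is to prove the claimed equality by establishing two easy inequalities; the result is essentially a tautology reflecting the reflexivity of the equivalence relation $\sim_\nu$ and the fact that every equivalence class is a subset of $\B(\domain)$. For brevity, write $F(A) := \Exp{(x,y)\sim\mu}{\sup_{\tilde x\in B_\eps(x)}\abs{1_A(\tilde x)-y}}$, so the left-hand side of \labelcref{eq:adv_relaxed} is $\inf_{A\in\B(\domain)} F(A)$ and the right-hand side is $\inf_{A\in\B(\domain)}\inf_{B\sim_\nu A} F(B)$.

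For the inequality $\mathrm{RHS}\le\mathrm{LHS}$, I would observe that $A\sim_\nu A$ for every $A\in\B(\domain)$, since $\nu(A\triangle A)=\nu(\emptyset)=0$. Hence $B=A$ is always admissible in the inner infimum, giving $\inf_{B\sim_\nu A} F(B)\le F(A)$; taking the outer infimum over $A$ yields $\mathrm{RHS}\le\mathrm{LHS}$. For the reverse inequality $\mathrm{LHS}\le\mathrm{RHS}$, note that for any pair $A,B\in\B(\domain)$ with $B\sim_\nu A$, the set $B$ is itself an admissible competitor in the original minimization, so $F(B)\ge \inf_{A'\in\B(\domain)} F(A')=\mathrm{LHS}$. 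Taking successive infima first over such $B$ and then over $A$ preserves this lower bound, giving $\mathrm{LHS}\le\mathrm{RHS}$.

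There is no real technical obstacle here; the argument uses only the reflexivity of $\sim_\nu$ and that every competitor in the inner infimum lies in $\B(\domain)$. The conceptual content, to be exploited in later sections, is that $F$ descends to a well-defined functional $[A]\mapsto \inf_{B\in[A]} F(B)$ on the quotient algebra $\B_\nu(\domain)$ whose infimum agrees with the original adversarial value. In particular, as emphasized in the statement, the choice of reference measure $\nu$ plays no role in this identity and is left free so that it can later be tuned to endow the induced functional with favorable weak-$\ast$ compactness and lower semicontinuity properties.
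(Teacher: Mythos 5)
Your argument is correct and is essentially identical to the paper's own two-line proof: the inequality $\mathrm{RHS}\le\mathrm{LHS}$ follows by taking $B=A$ in the inner infimum (reflexivity of $\sim_\nu$), and $\mathrm{LHS}\le\mathrm{RHS}$ follows because dropping the constraint $A\sim_\nu B$ only enlarges the feasible set of the inner infimum. Nothing is missing, and the closing remarks about the induced functional on $\B_\nu(\domain)$ accurately reflect how the proposition is used later in the paper.
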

\begin{remark}[Similarity to quotient norms]
The reason why we connect this reformulation with the quotient $\sigma$-algebra is that the objective function in \labelcref{eq:adv_relaxed} has strong similarities with the quotient norm on a quotient Banach space $X/N$, which is given by
\begin{align*}
    \norm{x}_{X/N}:=\inf_{\substack{y\in X\\y-x\in N}}\norm{y}_X,\quad x\in X/N.
\end{align*}
\end{remark}
\begin{proof}
We have to prove the equality
\begin{align*}
    &\inf_{A\in\B(\domain)}\Exp{(x,y)\sim\mu}{\sup_{\tilde x\in B_\eps(x)}\abs{1_A(\tilde x)-y}}
    =
    \inf_{A\in\B(\domain)}\inf_{\substack{B\in\B(\domain)\\A\sim_\nu B}}\Exp{(x,y)\sim\mu}{\sup_{\tilde x\in B_\eps(x)}\abs{1_B(\tilde x)-y}}.
\end{align*}
First, choosing $B=A$, which obviously fulfills $A\sim_\nu B$, we obtain the inequality $\geq$.
Second, omitting the constraint $A\sim_\nu B$ yields the inequality $\leq$.
\end{proof}

\begin{remark}
\label{rem:Measurability}
In the definition of the adversarial problem \labelcref{eq:baseline_adv} and throughout the rest of the paper we will be working with quantities like $\sup_{\tilde x \in B_\eps(x)} 1_A$ for a Borel measurable set $A$ and $\sup_{\tilde x \in B_\eps(x)} u$ for a Borel measurable function $u$. We remark that the resulting sets/functions are Borel measurable. Indeed, the function $x \mapsto \sup_{\tilde x \in B_\eps(x)} 1_A$ is nothing but the indicator function of the set $\bigcup_{x \in A} B_\eps(x)$ which is Borel measurable since it is an open set. Likewise, the function $x \mapsto \tilde u (x) := \sup_{\tilde x \in B_\eps(x)} u$ is measurable because the sets $\{ \tilde u >t \}$ are open sets. 
\end{remark}
\nc
An alternative way of avoiding ambiguities arising from equivalent sets with respect to $\nu$ is to consider the essential version of the adversarial problem given by \labelcref{eq:esssup_adv}, where the adversarial attack is performed using the essential supremum of the measure $\nu$. This problem can be fundamentally different to our problem \labelcref{eq:baseline_adv} or the relaxed one \labelcref{eq:adv_relaxed} since for example, in the case $\nu=\rho$, the attack can only be performed within the support of the given data distribution which is much weaker than \labelcref{eq:baseline_adv}. Still, in \cref{sec:analysis} we shall construct a measure $\nu$ such that the problems \labelcref{eq:baseline_adv,eq:esssup_adv} \emph{do} coincide, a property we will exploit later for proving existence of solutions to the original adversarial problem. 
\begin{example}
\label{ex:Example1}
Consider the simple situation with the measure $\rho = \frac{1}{2}\delta_{-1}+\frac{1}{2}\delta_{1}$ on $\domain=\R$ and $\nu=\rho$.
The labels are set to be equal to zero on the left axis and one on the right one.
Then it holds
\begin{align*}
    \Exp{(x,y)\sim\mu}{\sup_{(x-\eps,x+\eps)}\abs{1_A(x)-y}} 
    &= \frac{1}{2}\sup_{(-1-\eps,-1+\eps)}1_A+\frac{1}{2}\sup_{(1-\eps,1+\eps)}1_{A^c}.
\end{align*}
Let us assume that $1<\eps<2$.
In this case the intervals $(-1-\eps,-1+\eps)$ and $(1-\eps,1+\eps)$ overlap. Therefore, for any choice of $A\in\B(\R)$, either $A$ or $A^c$ intersect both intervals.
This implies that the optimal adversarial risk is $\geq\frac{1}{2}$.
Furthermore, choosing $A=\{1\}$ we find the risk equals $\frac{1}{2}$.

For comparison, the objective of the quotient problem \labelcref{eq:adv_relaxed} is given by
\begin{align*}
    \inf_{\substack{B\in\B(\domain)\\\rho(A\triangle B)=0}}\Exp{(x,y)\sim\mu}{\sup_{(x-\eps,x+\eps)}\abs{1_B-y}}
    =
    \inf_{\substack{B\in\B(\domain)\\\rho(A\triangle B)=0}}
    \frac{1}{2}\sup_{(-1-\eps,-1+\eps)}1_B+\frac{1}{2}\sup_{(1-\eps,1+\eps)}1_{B^c}
\end{align*}
and, arguing as before, any choice of $B$ leads to this term being $\geq\frac{1}{2}$ independently of $A$.


On the other hand, the objective in \labelcref{eq:esssup_adv} for $\nu:=\rho$ and $0<\eps<2$ is
\begin{align*}
    \Exp{(x,y)\sim\mu}{\esssup[\rho]_{B_\eps(x)}\abs{1_A(x)-y}} 
    &= \frac{1}{2}1_A(-1)+\frac{1}{2}1_{A^c}(1)
\end{align*}
which does not even depend on $\eps$.
This is due to the fact that the $\esssup[\rho]$ prevents the adversary from leaving the set of data points.
For instance, the half axes ${x\geq\alpha}$ with $-1<\alpha<1$ have risk $0$ and thus are optimal.
\end{example}

\subsection{Nonlocal Variational Regularization Problem}
\label{sec:RegularizationReformulation}

We now show how to express the adversarial training problem \labelcref{eq:baseline_adv} as a variational regularization problem in the form of \labelcref{eqn:regularization}.
More precisely, we show that it can be written as $L^1 + \TV$-type problem.
This class of problems has been intensively studied in the context of image processing, following the seminal paper~\cite{chan2005aspects}.
The model there was related to a geometric problem involving the Lebesgue measure $\mathcal{L}^d(\cdot)$ and the standard perimeter functional $\Per(\cdot)$, namely
\begin{align}\label{eq:geometric_problem}
    \min_{A\in\B(\domain)} \mathcal{L}^d(A\triangle\Omega) + \lambda \Per(A).
\end{align}
This functional was shown to exhibit a range of different behaviors in terms of the regularization parameter~$\lambda$.

In our context, we will show that the adversarial problem \labelcref{eq:baseline_adv} can be interpreted analogously, with the modification that we use a weighted volume and a weighted and nonlocal perimeter, see \cref{rem:geometric_problem} below.
Let us therefore first introduce the set function $\PrePer_\eps(\cdot;\mu) : \B(\domain) \to [0,+\infty]$ for $\eps>0$ which we refer to as nonlocal pre-perimeter and which is defined as
\begin{align}\label{eq:pre-perimeter}
    \PrePer_\eps(A;\mu) 
    :=
    \frac{w_0}{\eps}\int_\domain  \sup_{\tilde x\in B_\eps(x)}1_A(\tilde x)-1_A(x) \de\rho_0(x)
    + \frac{w_1}{\eps}\int_\domain  1_{A}(x)-\inf_{\tilde x\in B_\eps(x)}1_{A}(\tilde x) \de\rho_1(x).
\end{align}
Here the dependency on the data distribution $\mu$ is captured by the presence of the conditional distributions $\rho_i$ and the class probabilities $w_i$ for $i\in\{0,1\}$. Here the tilde serves as a reminder that we are using supremum and infimum as opposed to their $\nu$-essential forms.
To see that $\PrePer_\eps(A;\mu)$ has units of a perimeter we rewrite it as follows
\begin{align}\label{eq:perimeter_geometric}
    \PrePer_\eps(A;\mu) =
    \frac{w_0}{\eps}\rho_0({\{x\in A^c\st\dist(x,A)<\eps\}})+
    \frac{w_1}{\eps}\rho_1({\{x\in \dist(x,A^c)<\eps\}}),
\end{align}
where the distance of a point $x\in\domain$ to a set $A\subseteq\domain$ is defined as $\dist(x,A) := \inf_{\tilde x\in A}\metric(x,\tilde x)$.
The quantity \labelcref{eq:perimeter_geometric} is a weighted and nonlocal Minkowski content \cite{cesaroni2017isoperimetric,cesaroni2018minimizers} of the ``thickened boundary'' $\partial^\eps A:=\{x\in\domain\st\dist(x,\partial A)<\eps\}$, cf. \cref{fig:perimeter} in \cref{sec:main_results}.
For sufficiently smooth sets and measures $\rho_{0/1}$, and for small $\eps$ one expects \cite{ambrosio2017perimeter} that $\PrePer_\eps(A;\mu)$ behaves like a weighted perimeter of $A$, see \cite{chambolle2010continuous} for similar results. 

Importantly, for two sets $A,B\in\B(\domain)$ which differ only by a nullset with respect to some reference measure $\nu$ the associated pre-perimeters will generally be different.
Therefore, using the technique from \cref{sec:relaxation}, we define the nonlocal perimeter with respect to $\nu$ as
\begin{align}\label{eq:perimeter}
\begin{split}
    \Per[\nu]_\eps(A;\mu) 
    :=\inf_{\substack{B\in\B(\domain)\\A\sim_\nu B}}
    \PrePer_\eps(B;\mu).
\end{split}    
\end{align}
This way of defining a nonlocal and weighted perimeter generalizes approaches from \cite{cesaroni2017isoperimetric,cesaroni2018minimizers}, which deal with the case of the Lebesgue measure.

\begin{remark}
The nonlocal perimeter $\Per[\nu]_\eps(\cdot;\mu)$ in \labelcref{eq:perimeter} is a generalization of the nonlocal perimeter studied in~\cite{chambolle2012nonlocal,chambolle2015nonlocal,cesaroni2017isoperimetric,cesaroni2018minimizers} which can be recovered by setting $\domain=\Rd$ and \first by replacing $w_0\rho_0$ and $w_1\rho_1$ by the Lebesgue measure $\mathcal{L}^d$ and choosing $\nu:=\mathcal{L}^d$\nc.
Our results from \cref{sec:analysis}, in particular \cref{prop:representative_Per}, show that \labelcref{eq:perimeter} becomes
\begin{align}\label{eq:std_perimeter}
    \Per_\eps(A) := \frac{1}{2\eps}\int_\Rd\essosc_{B_\eps(\cdot)}(1_A)\de x,
\end{align}
where $\essosc=\esssup-\essinf$ is the essential oscillation with respect to the Lebesgue measure.
\end{remark}
\begin{remark}[Asymmetry]
It is obvious that the nonlocal perimeter \labelcref{eq:std_perimeter} from~\cite{chambolle2015nonlocal} satisfies $\Per_\eps(A^c)=\Per_\eps(A)$ and the same is true for the usual local perimeter.
For our perimeter \labelcref{eq:perimeter} this is not the case if $w_0\rho_0\neq w_1\rho_1$.
\end{remark}
Let us now reformulate the adversarial training problem as a regularization problem with respect to the nonlocal perimeter \labelcref{eq:perimeter}.
Our central observation is that the adversarial risk in \labelcref{eq:baseline_adv} can be decomposed into an unregularized risk and the pre-perimeter.
Then, using \cref{prop:quotient_algebra}, we will rewrite \labelcref{eq:baseline_adv} as a variational regularization problem for the perimeter.
\begin{proposition}\label{prop:adv_risk_per}
For any Borel set $B\in\B(\domain)$ it holds
\begin{align}\label{eq:adv_risk_per}
    \Exp{(x,y)\sim\mu}{\sup_{\tilde x\in B_\eps(x)}\abs{1_B(\tilde x) - y}}
    =
    \Exp{(x,y)\sim\mu}{\abs{1_B(x)- y}}
    + \eps \PrePer_\eps(B;\mu).
\end{align}
\end{proposition}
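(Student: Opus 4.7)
The plan is a direct computation that rests on two observations: (i) the $0$--$1$ loss takes only two forms depending on the label, and (ii) by disintegrating $\mu$ via the class weights and conditionals, the integral splits into a $y=0$ piece and a $y=1$ piece which match, term by term, the two summands defining $\PrePer_\eps(B;\mu)$.

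First I would rewrite $|1_B(\tilde x)-y|$ depending on $y\in\{0,1\}$: for $y=0$ it equals $1_B(\tilde x)$, while for $y=1$ it equals $1-1_B(\tilde x)=1_{B^c}(\tilde x)$. Consequently $\sup_{\tilde x\in B_\eps(x)}|1_B(\tilde x)-y|$ equals $\sup_{\tilde x\in B_\eps(x)}1_B(\tilde x)$ when $y=0$ and $1-\inf_{\tilde x\in B_\eps(x)}1_B(\tilde x)$ when $y=1$ (using $\sup 1_{B^c}=1-\inf 1_B$). The measurability of these envelopes was already noted in \cref{rem:Measurability}, so the integrals below are well-defined.

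Next, by the law of total expectation applied to $\mu=w_0(\rho_0\otimes\delta_0)+w_1(\rho_1\otimes\delta_1)$, I decompose the left-hand side of \labelcref{eq:adv_risk_per} as
\begin{align*}
    \Exp{(x,y)\sim\mu}{\sup_{\tilde x\in B_\eps(x)}\abs{1_B(\tilde x)-y}}
    = w_0\!\int_\domain \sup_{\tilde x\in B_\eps(x)}1_B(\tilde x)\de\rho_0(x)
    + w_1\!\int_\domain\Bigl(1-\inf_{\tilde x\in B_\eps(x)}1_B(\tilde x)\Bigr)\de\rho_1(x).
\end{align*}
Treating the unregularized risk the same way gives
\begin{align*}
    \Exp{(x,y)\sim\mu}{\abs{1_B(x)-y}}
    = w_0\!\int_\domain 1_B(x)\de\rho_0(x) + w_1\!\int_\domain\bigl(1-1_B(x)\bigr)\de\rho_1(x).
\end{align*}

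Subtracting the second display from the first and grouping the $w_0$ and $w_1$ terms separately yields exactly
\begin{align*}
    w_0\!\int_\domain\Bigl(\sup_{\tilde x\in B_\eps(x)}1_B(\tilde x)-1_B(x)\Bigr)\de\rho_0(x)
    + w_1\!\int_\domain\Bigl(1_B(x)-\inf_{\tilde x\in B_\eps(x)}1_B(\tilde x)\Bigr)\de\rho_1(x),
\end{align*}
which is precisely $\eps\,\PrePer_\eps(B;\mu)$ by definition \labelcref{eq:pre-perimeter}. This proves \labelcref{eq:adv_risk_per}.

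There is no real obstacle here; the only subtle point is to make sure the two envelopes are Borel measurable so that disintegration is legitimate, and this is already guaranteed by \cref{rem:Measurability}. Non-negativity of the integrand in each summand of $\PrePer_\eps$ (the sup dominates the pointwise value in the $\rho_0$ integral, and the infimum is dominated by it in the $\rho_1$ integral) also confirms that $\PrePer_\eps(B;\mu)\geq 0$, so the decomposition indeed has the interpretation of ``unrobust risk + penalization''.
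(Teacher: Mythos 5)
Your proof is correct and follows essentially the same route as the paper: disintegrate $\mu$ into the $y=0$ and $y=1$ pieces, use $\sup_{B_\eps}1_{B^c}=1-\inf_{B_\eps}1_B$, subtract the unregularized risk, and recognize the definition of $\PrePer_\eps$. The additional remarks on measurability and non-negativity are harmless and consistent with what the paper records elsewhere.
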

\begin{proof}
Disintegrating $\mu$ and doing elementary calculations yields
\begin{align*}
    &\phantom{=}
    \Exp{(x,y)\sim\mu}{\sup_{\tilde x\in B_\eps(x)} \abs{1_B(\tilde x)-y}}
    -
    \Exp{(x,y)\sim\mu}{\abs{1_B(x)-y}}
    \\
    &=
    \iint_{\domain\times\{0,1\}}{\sup_{\tilde x\in B_\eps(x)}\abs{1_B(\tilde x)-y}}\de\mu(x,y)
    -
    \iint_{\domain\times\{0,1\}}\abs{1_B(x)-y}\de\mu(x,y)
    \\
    &=
    w_0\int_\domain\sup_{B_\eps(\cdot)}1_B\de\rho_0 
    +
    w_1\int_\domain\sup_{B_\eps(\cdot)}1_{B^c}\de\rho_1
    -
    w_0\int_\domain 1_B\de\rho_0 
    -
    w_1\int_\domain 1_{B^c}\de\rho_1
    \\
    &=
    w_0\int_\domain \sup_{B_\eps(\cdot)}1_B - 1_B\de\rho_0
    +
    w_1\int_\domain 1_{B} - \inf_{B_\eps(\cdot)}1_{B}\de\rho_1
    \\
    &=
    \eps\PrePer_\eps(B;\mu).
\end{align*}    
\end{proof}
Now we can finally state the equivalence of the adversarial training problem \labelcref{eq:baseline_adv} and the variational regularization problem involving the nonlocal perimeter $\Per[\nu]_\eps(\cdot;\mu)$.
For this we have to choose the measure $\nu$ in the definition of the perimeter \labelcref{eq:perimeter} such that $\rho$ is absolutely continuous with respect to the reference measure $\nu$, written $\rho\ll\nu$.
\begin{proposition}[Perimeter-regularized problem]\label{prop:perimeter_problem}
Let $\nu$ be a measure on $\domain$ such that $\rho\ll\nu$. 
Then it holds that
\begin{align}\label{eq:adv_perimeter}
    \labelcref{eq:baseline_adv}
    =
    \inf_{A\in\B(\domain)}
    \Exp{(x,y)\sim\mu}{\abs{1_A(x)-y}} + \eps\Per[\nu]_\eps(A;\mu).
\end{align}
\end{proposition}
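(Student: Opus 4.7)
The plan is to combine the three ingredients already available: the relaxation in the quotient $\sigma$-algebra from \cref{prop:quotient_algebra}, the decomposition of the adversarial risk from \cref{prop:adv_risk_per}, and the definition of the nonlocal perimeter $\Per[\nu]_\eps$ from \labelcref{eq:perimeter}. The assumption $\rho \ll \nu$ is what couples these pieces together.

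First I would apply \cref{prop:quotient_algebra} to rewrite problem \labelcref{eq:baseline_adv} as the double infimum
\begin{align*}
    \inf_{A\in\B(\domain)}\inf_{\substack{B\in\B(\domain)\\A\sim_\nu B}}\Exp{(x,y)\sim\mu}{\sup_{\tilde x\in B_\eps(x)}\abs{1_B(\tilde x)-y}}.
\end{align*}
For each fixed $B$ in the inner infimum, \cref{prop:adv_risk_per} decomposes the objective into the unregularized risk plus $\eps$ times the pre-perimeter, so the inner infimum becomes
\begin{align*}
    \inf_{\substack{B\in\B(\domain)\\A\sim_\nu B}}\left\{\Exp{(x,y)\sim\mu}{\abs{1_B(x)-y}} + \eps\PrePer_\eps(B;\mu)\right\}.
\end{align*}

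The key step is to argue that the first summand is invariant along the $\sim_\nu$ equivalence class and can therefore be pulled out of the inner infimum. If $A \sim_\nu B$, then $\nu(A\triangle B)=0$; since $\rho\ll\nu$ we also have $\rho(A\triangle B)=0$, and hence (assuming $w_0,w_1>0$, the nontrivial case) $\rho_i(A\triangle B)=0$ for $i=0,1$. Disintegrating $\mu$ as in the proof of \cref{prop:adv_risk_per} yields
\begin{align*}
    \Exp{(x,y)\sim\mu}{\abs{1_B(x)-y}} = w_0\int_\domain 1_B\de\rho_0 + w_1\int_\domain 1_{B^c}\de\rho_1,
\end{align*}
and since $1_A = 1_B$ both $\rho_0$- and $\rho_1$-almost everywhere, this quantity equals the analogous expression with $B$ replaced by $A$. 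Consequently the inner infimum simplifies to
\begin{align*}
    \Exp{(x,y)\sim\mu}{\abs{1_A(x)-y}} + \eps\inf_{\substack{B\in\B(\domain)\\A\sim_\nu B}}\PrePer_\eps(B;\mu) = \Exp{(x,y)\sim\mu}{\abs{1_A(x)-y}} + \eps\Per[\nu]_\eps(A;\mu),
\end{align*}
where the last equality is the definition \labelcref{eq:perimeter} of the nonlocal perimeter. Taking the outer infimum over $A\in\B(\domain)$ yields \labelcref{eq:adv_perimeter}.

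The only genuinely nontrivial step is verifying the invariance of the unregularized risk under $\sim_\nu$; everything else is substitution. This is also the step where the hypothesis $\rho \ll \nu$ is essential, since without it the $\nu$-null sets allowed in the quotient would not be $\rho$-null, and changing $B$ within its equivalence class could change the unregularized risk, breaking the clean split into risk plus perimeter.
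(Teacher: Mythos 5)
Your proof is correct and follows essentially the same route as the paper's: decompose via \cref{prop:adv_risk_per}, observe that the unregularized risk is invariant along $\sim_\nu$-classes because $\rho\ll\nu$, pull it out of the inner infimum to recognize the definition of $\Per[\nu]_\eps$, and invoke \cref{prop:quotient_algebra}. The only cosmetic difference is that you apply \cref{prop:quotient_algebra} at the start rather than at the end, and you spell out the step from $\rho(A\triangle B)=0$ to $\rho_i(A\triangle B)=0$ slightly more carefully.
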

\begin{remark}[Geometric problem]\label{rem:geometric_problem}
Note that if the measures $\rho_0$ and $\rho_1$ have non-overlapping support, \labelcref{eq:adv_perimeter} can indeed be brought into the form of the geometric problem \labelcref{eq:geometric_problem} which generalizes the problem studied in \cite{chan2005aspects}.
For this we \second assume that there exists $\Omega\subseteq\domain$ such that $\supp\rho_1\subseteq\Omega\subseteq(\supp\rho_0)^c$.
Then \nc the first term in \labelcref{eq:adv_perimeter} equals
\begin{align*}
    \Exp{(x,y)\sim\mu}{\abs{1_A(x)-y}}
    &=
    w_0\int_\domain 1_A\de\rho_0 + w_1\int_\domain 1_{A^c}\de\rho_1 
    = w_0\rho_0(A)+w_1\rho_1(A^c)
    \\
    &=w_0\rho_0(A\cap\Omega^c)+w_1\rho_1(A^c\cap\Omega)
    = 
    w_0\rho_0(A\setminus\Omega) + w_1\rho_1(\Omega\setminus A)
    \\
    &= 
    \rho(A\setminus\Omega) + \rho(\Omega\setminus A) - w_1\rho_1(A\setminus\Omega) - w_0\rho_0(\Omega\setminus A)
    \\
    &= 
    \rho(A\setminus\Omega) + \rho(\Omega\setminus A) - w_1\rho_1(\Omega^c\setminus A^c) - w_0\rho_0(\Omega\setminus A)
    \\
    &=
    \rho((A\setminus\Omega) \cup (\Omega\setminus A))
    =
    \rho(A\triangle \Omega).
\end{align*}
This implies that \labelcref{eq:adv_perimeter} equals the geometric problem
\begin{align}
    \inf_{A\in\B(\domain)}\rho(A\triangle \Omega) + \eps\Per[\nu]_\eps(A;\mu).
\end{align}
\end{remark}
\begin{proof}
Fixing $A\in\B(\domain)$ and taking the infimum over sets $B\in\B(\domain)$ with $A\sim_\nu B$ we get from \cref{prop:adv_risk_per} that
\begin{align*}
    \inf_{\substack{B\in\B(\domain)\\A\sim_\nu B}}\Exp{(x,y)\sim\mu}{\sup_{\tilde x\in B_\eps(x)} \abs{1_B(\tilde x)-y}}
    =
    \inf_{\substack{B\in\B(\domain)\\A\sim_\nu B}}
    \Exp{(x,y)\sim\mu}{\abs{1_B(x)-y}}
    +
    \eps\PrePer_\eps(B;\mu).
\end{align*}
Now we note that for $A\sim_\nu B$ it holds
\begin{align*}
    \Exp{(x,y)\sim\mu}{\abs{1_B(x)-y}}
    =
    w_0\int_B \de\rho_0 + 
    w_1\int_{B^c}\de\rho_1
    =
    \Exp{(x,y)\sim\mu}{\abs{1_A(x)-y}},
\end{align*}
since $A\sim_\nu B$ implies $\nu(A\triangle B)=0$ which by the absolute continuity implies $\rho(A\triangle B)=0$.
Hence, we obtain
\begin{align*}
    \inf_{\substack{B\in\B(\domain)\\A\sim_\nu B}}\Exp{(x,y)\sim\mu}{\sup_{\tilde x\in B_\eps(x)} \abs{1_B(\tilde x)-y}}
    &=
    \Exp{(x,y)\sim\mu}{\abs{1_A(x)-y}}
    +
    \eps
    \inf_{\substack{B\in\B(\domain)\\A\sim_\nu B}}
    \PrePer_\eps(B;\mu)
    \\
    &=
    \Exp{(x,y)\sim\mu}{\abs{1_A(x)-y}}
    +
    \eps\Per[\nu]_\eps(A;\mu).
\end{align*}
Finally, using \cref{prop:quotient_algebra} concludes the proof.
\end{proof}

\section{Analysis of the Adversarial Training Problem}
\label{sec:analysis}

In the previous section we have shown that the adversarial training problem \labelcref{eq:baseline_adv} is equivalent to the variational regularization problem \labelcref{eq:adv_perimeter} involving a nonlocal perimeter term. Problems like \labelcref{eq:adv_perimeter} are very well understood in the context of inverse problems~\cite{benning_burger_2018}. We will use the structure of the objective in problem \labelcref{eq:adv_perimeter} to make strong mathematical statements about our original adversarial training problem under very general conditions on the space $(\domain, \metric)$. The aim of this section is then to use the insights stemming from the reformulation in terms of perimeter in order to perform a rigorous analysis on the adversarial problem \labelcref{eq:baseline_adv}, focusing on proving existence of solutions and studying their properties. In particular, we will define convenient notions of uniqueness of solutions and show the existence of ``regular" solutions, at least in the Euclidean setting.    

For this, we first introduce a nonlocal total variation which is associated with the perimeter \labelcref{eq:perimeter} and that turns out to be useful for proving existence.
Then, we prove important properties of the perimeter and the total variation related to convexity and lower semicontinuity.
Here the key ingredient is to construct suitable representatives which attain the infimum in the definition of the perimeter $\Per[\nu]_\eps(\cdot;\mu)$. 
For this we will have to focus on reference measures $\nu$ which satisfy a certain geometric assumption.
Finally, we can use these insights to prove existence of solutions to \labelcref{eq:baseline_adv} and study their geometric properties.
Due to the lack of uniqueness of minimizers, we will investigate minimal and maximal solutions.

\subsection{The Associated Total Variation}

Similar to the nonlocal pre-perimeter \labelcref{eq:pre-perimeter} and perimeter \labelcref{eq:perimeter} we can also define an associated pre-total variation and total variation with respect to the measure $\nu$ of a measurable function $u:\domain\to\R$ as
\begin{align}
    \label{eq:PreTV}
    \PreTV_\eps(u;\mu)
    &:=
    \frac{w_0}{\eps} \int_\domain \sup_{\tilde x\in B_\eps(x)}u(\tilde x) - u(x) \de\rho_0(x) + 
    \frac{w_1}{\eps} \int_\domain u(x) - \inf_{\tilde x\in B_\eps(x)}u(\tilde x) \de\rho_1(x),
    \\
    \label{eq:TV}
    \TV[\nu]_\eps(u;\mu)
    &:=
    \inf_{\substack{v\in L^\infty(\domain;\nu)\\\text{$v=u$ $\nu$-a.e.}}}
    \PreTV_\eps(v;\mu).
\end{align}
\begin{remark}
If $\domain=\Rd$ and $w_1\rho_1=w_0\rho_0=1/2\mathcal{L}^d$ and $\nu=\mathcal{L}^d$, our results in this section, in particular \cref{prop:representative_TV}, show that the total variation reduces to
\begin{align}
    \TV_\eps(u;\mu) = \frac{1}{2\eps}\int_\Rd \essosc_{B_\eps(x)}(u) \de x,
\end{align}
which is precisely the nonlocal total variation associated to \labelcref{eq:std_perimeter} which was studied in~\cite{chambolle2012nonlocal}.
\end{remark}
\begin{remark}
We could have defined $\PreTV_\eps$ and $\TV[\nu]_\eps$ using the coarea formula. 
For the sake of clarity we decided to define the functionals directly and prove the coarea formula later.
\end{remark}
Having the total variation at hand, a natural convex relaxation of the perimeter-regularized variational problem \labelcref{eq:adv_perimeter} to functions instead of sets is
\begin{align}\label{eq:TV_problem}
    \inf_{\substack{u\in L^\infty(\domain;\nu)\\0\leq u\leq 1,\,\text{$\nu$-a.e.}}}\Exp{(x,y)\sim\mu}{\abs{u(x)-y}} + \eps \TV[\nu]_\eps(u;\mu),
\end{align}
where we again assume $\rho\ll\nu$.
Indeed, we will use this relaxation as an intermediate step in order to prove existence for minimizers of \labelcref{eq:adv_perimeter}.
Notably, since the first term in \labelcref{eq:TV_problem} \first involves integrals with respect to $\rho_0$ and $\rho_1$, \nc as shown in the proof of \cref{prop:perimeter_problem}, the condition $\rho\ll\nu$ implies that it makes sense to perform the optimization in \labelcref{eq:TV_problem} over $L^\infty(\domain;\nu)\subseteq L^\infty(\domain;\rho)$.

\subsection{Properties of the Nonlocal Perimeter}

The nonlocal perimeter $\Per[\nu]_\eps(\cdot;\mu)$ satisfies many of the same properties as the classical perimeter, which will also ensure that the total variation \labelcref{eq:TV} is well-defined and convex.
\begin{proposition}\label{prop:submodularity}
The set function $\Per[\nu]_\eps(\cdot;\mu)$ defined in \labelcref{eq:perimeter} satisfies the following:
\begin{itemize}
    \item $0\leq \Per[\nu]_\eps(A;\mu)<\infty$ for all sets $A\in\B(\domain)$.
    \item $\Per[\nu]_\eps(\emptyset;\mu)=\Per[\nu]_\eps(\domain;\mu)=0$.
    \item $\Per[\nu]_\eps(A;\mu)=\Per[\nu]_\eps(A';\mu)$ if $\nu(A\triangle A')=0$.
    \item It is submodular, meaning that for all $A,A'\in\B(\domain)$ it holds
    \begin{align*}
        \Per[\nu]_\eps(A\cup A';\mu) + \Per[\nu]_\eps(A\cap A';\mu) \leq \Per[\nu]_\eps(A;\mu) + \Per[\nu]_\eps(A';\mu).
    \end{align*}
\end{itemize}
\end{proposition}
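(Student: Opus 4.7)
The plan is to prove each bullet for the pre-perimeter $\PrePer_\eps(\,\cdot\,;\mu)$ first (where the estimates are pointwise and concrete) and then pass each property through the infimum that defines $\Per[\nu]_\eps(\,\cdot\,;\mu)$.

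For non-negativity and finiteness, note that in the integrand of \labelcref{eq:pre-perimeter} one has $\sup_{B_\eps(x)} 1_A \geq 1_A(x)$ and $1_A(x) \geq \inf_{B_\eps(x)} 1_A$ pointwise, so each summand of $\PrePer_\eps(A;\mu)$ is non-negative, and crudely bounded above by $\frac{w_0+w_1}{\eps}=\frac{1}{\eps}$. Taking the infimum in \labelcref{eq:perimeter} preserves both bounds. For $A=\emptyset$ and $A=\domain$, both the supremum and infimum inside the integrals equal $1_A(x)$, so $\PrePer_\eps$ vanishes, and hence so does $\Per[\nu]_\eps$. The invariance $\Per[\nu]_\eps(A;\mu)=\Per[\nu]_\eps(A';\mu)$ when $\nu(A\triangle A')=0$ is automatic from the definition \labelcref{eq:perimeter}: the set $\{B\in\B(\domain) \st A\sim_\nu B\}$ over which the infimum is taken depends only on the $\sim_\nu$-equivalence class of $A$, so two $\nu$-equivalent sets yield identical infima.

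The main content is submodularity. My first step is the pointwise identity
\begin{align*}
1_{A\cup A'} + 1_{A\cap A'} = 1_A + 1_{A'},
\end{align*}
which shows that the linear parts of \labelcref{eq:pre-perimeter} match under the decomposition. The key observation is the pair of pointwise inequalities
\begin{align*}
\sup_{B_\eps(x)} 1_{A\cup A'} + \sup_{B_\eps(x)} 1_{A\cap A'} &\leq \sup_{B_\eps(x)} 1_A + \sup_{B_\eps(x)} 1_{A'}, \\
\inf_{B_\eps(x)} 1_{A\cup A'} + \inf_{B_\eps(x)} 1_{A\cap A'} &\geq \inf_{B_\eps(x)} 1_A + \inf_{B_\eps(x)} 1_{A'}.
\end{align*}
The first is checked by a short case analysis on whether $A\cap B_\eps(x)$ and $A'\cap B_\eps(x)$ are empty (the only nontrivial case is when exactly one of them is empty, where both sides equal $1$). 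The second follows from the first by applying it to the complements $A^c,A'^c$ and using $\inf 1_E = 1 - \sup 1_{E^c}$ together with $(A\cup A')^c = A^c\cap A'^c$ and vice versa. Combining with the pointwise identity, one obtains submodularity of the integrands in \labelcref{eq:pre-perimeter}, and integrating against $\rho_0$ and $\rho_1$ gives submodularity of $\PrePer_\eps$ itself.

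To pass submodularity to $\Per[\nu]_\eps$, fix $\delta>0$ and choose $B\sim_\nu A$, $B'\sim_\nu A'$ that are $\delta$-optimal in \labelcref{eq:perimeter}. Since $\sim_\nu$ is compatible with unions and intersections, $B\cup B'\sim_\nu A\cup A'$ and $B\cap B'\sim_\nu A\cap A'$, so
\begin{align*}
\Per[\nu]_\eps(A\cup A';\mu) + \Per[\nu]_\eps(A\cap A';\mu)
&\leq \PrePer_\eps(B\cup B';\mu) + \PrePer_\eps(B\cap B';\mu)\\
&\leq \PrePer_\eps(B;\mu) + \PrePer_\eps(B';\mu)\\
&\leq \Per[\nu]_\eps(A;\mu) + \Per[\nu]_\eps(A';\mu) + 2\delta,
\end{align*}
and letting $\delta\downarrow 0$ concludes the proof. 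The only step that requires genuine care is the pointwise inequality for $\sup$ (and its $\inf$ counterpart); the passage through the infimum is routine once one notices that $\sim_\nu$ respects the lattice operations $\cup,\cap$.
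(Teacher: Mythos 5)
Your proposal is correct and follows essentially the same route as the paper: both prove submodularity of the pre-perimeter via the pointwise inequality for $\sup_{B_\eps(x)}$ applied to $1_{A\cup A'},1_{A\cap A'}$ (the paper reduces the $\inf$ term to the $\sup$ term by complementation exactly as you do) and then transfer it to $\Per[\nu]_\eps$ by observing that $\sim_\nu$ is compatible with unions and intersections, which the paper establishes via the symmetric-difference inclusions $(A\cup A')\triangle(B\cup B')\subseteq(A\triangle B)\cup(A'\triangle B')$ and subadditivity of $\nu$.
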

\begin{remark}[Properties of the pre-perimeter]\label{rem:submod_preper}
If we choose $\nu$ to be the measure defined by $\nu(\emptyset)=0$ and $\nu(A)=\infty$ for all $A\in\B(\domain)\setminus\{\emptyset\}$ it holds $\PrePer_\eps(A;\mu)=\Per[\nu]_\eps(A;\mu)$ for all $A\in\B(\domain)$.
Hence, the pre-perimeter admits the same properties.
\end{remark}
\begin{proof}
The first statement follows from the fact that $\osc_{B_\eps(x)}(1_B)\leq 1$ for all sets $B\in\B(\domain)$ and $\rho$ is a probability measure.
The second statement is obvious since $\osc_{B_\eps(x)}(1_\domain)=0$ for all $x\in\domain$.
The third statement follows from the very definition of the perimeter, involving the infimum over sets $B\in\B(\domain)$ with $\nu(A\triangle B)=0$.

Let us now prove submodularity.
Elementary properties of the symmetric difference show
\begin{align*}
    (A\cup A') \triangle (B\cup B') \subseteq(A\triangle B) \cup (A'\triangle B'), \\
    (A\cap A') \triangle (B\cap B') \subseteq(A\triangle B) \cup (A'\triangle B').
\end{align*}
Using subadditivity of the measure $\rho$ this implies that for all $B,B'\in\B(\domain)$ with $\nu(A\triangle B)=0$ and $\nu(A'\triangle B')=0$ we can estimate
\begin{align*}
    &\phantom{=}
    \Per[\nu]_\eps(A\cup A';\mu) 
    + 
    \Per[\nu]_\eps(A\cap A';\mu) 
    \\
    &\leq 
    \frac{w_0}{\eps}\int_\domain  \sup_{B_\eps(\cdot)}1_{B\cup B'}-1_{B\cup B'} \de\rho_0 
    + \frac{w_1}{\eps}\int_\domain 1_{B\cup B'}-\inf_{B_\eps(\cdot)}1_{B\cup B'}\de\rho_1
    \\
    &\quad
    +
    \frac{w_0}{\eps}\int_\domain  \sup_{B_\eps(\cdot)}1_{B\cap B'}-1_{B\cap B'} \de\rho_0 
    + \frac{w_1}{\eps}\int_\domain 1_{B\cap B'}-\inf_{B_\eps(\cdot)}1_{B\cap B'}\de\rho_1
\end{align*}
Since $1_{B\cup B'}+1_{B\cap B'}=1_B+1_{B'}$ and $1_B - \inf_{B_\eps(\cdot)}1_B=\sup_{B_\eps(\cdot)}1_{B^c}-1_{B^c}$ for all $B,B'\in\B(\domain)$, it suffices to show
\begin{align}\label{ineq:pointwise_char_fun}
    \begin{split}
    \sup_{B_\eps(x)}1_{B\cup B'} + \sup_{B_\eps(x)}1_{B\cap B'} 
    \leq 
    \sup_{B_\eps(x)}1_{B}  + \sup_{B_\eps(x)}1_{B'}.
    \end{split}
\end{align}

\textbf{Case 0:}
If the left hand side is zero, we are done. 

\textbf{Case 1:}
Let us therefore assume that the first term in \labelcref{ineq:pointwise_char_fun} is equal to one and the second one equal to zero.
This means that there exists $y\in B\cup B'$ such that $\metric(x,y)\leq\eps$.
In particular at least one of the two terms on the right hand side in \labelcref{ineq:pointwise_char_fun} is $\geq 1$ which proves the inequality in this case.

\textbf{Case 2:} 
Now we assume that the second term is one.
This implies that there exists $y\in B\cap B'$ such that $\metric(x,y)\leq\eps$.
Hence, both terms on the right hand side in \labelcref{ineq:pointwise_char_fun} are $=1$ which makes the inequality correct independent of the first term.
\end{proof}

Next we prove that the infimum in the definition of the perimeter \labelcref{eq:perimeter} is actually attained.
In fact, for a suitable measure $\nu$ with $\rho\ll\nu$ we even construct a precise representative, i.e., a set $A^\star\in\B(\domain)$ with $A\sim_\nu A^\star$ in the equivalence relation \labelcref{eq:equiv_rel}, which attains this minimal value.
Even more, we show that the perimeter coincides with the essential perimeter with respect to the measure $\nu$.
The measure $\nu$ has to satisfy the following assumption.
\begin{assumption}\label{ass:nu_eps}
We assume that there exists a \editscolor $\sigma$-finite \nc measure $\nu$ on $\domain$ such that
\begin{enumerate}
    \item $\rho\ll\nu$,
    \item $\{x\in\domain\st\dist(x,\supp\rho)<\eps\}\subseteq\supp\nu$,
    \item $\nu$ is locally doubling (a Vitali measure), i.e., 
    \begin{align}\label{eq:local_doubling}
        \limsup_{r\downarrow 0}\frac{\nu(B_{2r}(x))}{\nu(B_{r}(x))} < \infty,\quad \text{for $\nu$-a.e. }x\in\domain.
    \end{align}
\end{enumerate}
\end{assumption}
\begin{remark}
Let us comment on these assumptions:
\begin{enumerate}
    \item The absolute continuity $\rho\ll\nu$ is needed for proving the reformulation as variational regularization problem, cf. \cref{prop:perimeter_problem}.
    \item The condition on $\supp\nu$ makes sure that problem \labelcref{eq:adv_perimeter} detects the effect of the adversary on the balls around points in the support of $\rho$.
    \item The doubling assumption \labelcref{eq:local_doubling} is a very weak assumption under which the Lebesgue differentiation theorem \editscolor (\cref{thm:Lebesgue} in \cref{sec:appendix_technical_defs}) \nc is valid.
\end{enumerate}
\end{remark}
\begin{remark}[Choice of the measure $\nu$]
If $\domain=\Rd$, then one can utilize a full support Gaussian $\gamma$ to define $\nu:=\rho + \gamma$.
In that case (1)-(3) are true by definition and it is straightforward to show that if $\varrho$ is locally doubling, then so is $\nu$, see also \cite[p.81]{heinonen2015sobolev}.
In turn, notice that if $\varrho$ is supported on finitely many points (e.g., an empirical measure) or if $\varrho$ is absolutely continuous with respect to the Lebesgue measure, then the measure $\varrho$ is locally doubling.

More generally, if $(\domain,\metric)$ is a finite-dimensional smooth Riemannian manifold (intrinsically defined without the need of an Euclidean ambient space) with a Riemannian volume form $\omega$, and finite total volume, then $\nu$ can be taken to be of the form $\nu = \varrho + \omega$.
\nc
\end{remark}
Using such a measure $\nu$ we can state the following proposition which says that a) the infimum in the definition of the perimeter \labelcref{eq:perimeter} is attained, and b) that the perimeter can be expressed as the essential perimeter with respect to $\nu$.
\begin{proposition}\label{prop:representative_Per}
Under \cref{ass:nu_eps} for any $A\in\B(\domain)$ there exists $A^\star\in\B(\domain)$ with $A\sim_{\nu}A^\star$ such that
\begin{align}
    \Per[\nu]_\eps(A;\mu) = \PrePer_\eps(A^\star;\mu).
\end{align}
Furthermore, the perimeter admits the characterization
\begin{align}
    \Per[\nu]_\eps(A;\mu) =
    \frac{w_0}{\eps}\int_\domain  \esssup[\nu]_{B_\eps(\cdot)}1_A-1_A \de\rho_0 
    + \frac{w_1}{\eps}\int_\domain  1_{A}-\essinf[\nu]_{B_\eps(\cdot)}1_{A} \de\rho_1.
\end{align}
\end{proposition}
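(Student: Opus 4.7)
The plan is to build an explicit attaining set $A^\star$ for the infimum defining $\Per[\nu]_\eps(A;\mu)$ via $\nu$-Lebesgue density points, and to show simultaneously that its pre-perimeter agrees with the essential perimeter expression on the right-hand side. Optimality within the $\sim_\nu$-equivalence class will then follow from a sandwich: any competitor $B \sim_\nu A$ already satisfies $\sup \geq \esssup[\nu]$ and $\inf \leq \essinf[\nu]$ pointwise on every open ball, while $\esssup[\nu]1_B = \esssup[\nu]1_A$ and $1_B = 1_A$ $\rho_i$-a.e.\ (using $B \sim_\nu A$ together with $\rho_i \ll \rho \ll \nu$), so $\PrePer_\eps(B;\mu)$ already dominates the essential perimeter expression.

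\textbf{Construction of $A^\star$.} I would set
\[
A^\star := \Bigl\{ x \in \supp \nu \st \lim_{r \downarrow 0} \frac{\nu(A \cap B_r(x))}{\nu(B_r(x))} = 1 \Bigr\},
\]
which is Borel since it can be written through $\liminf/\limsup$ over rational radii of Borel functions of $x$. Assumption \labelcref{ass:nu_eps}(3) is precisely what is needed to invoke the Lebesgue differentiation theorem on metric measure spaces (see \cite{heinonen2015sobolev}), yielding $1_{A^\star} = 1_A$ $\nu$-a.e.\ and hence $A^\star \sim_\nu A$; via $\rho_i \ll \nu$ this also gives $1_{A^\star} = 1_A$ pointwise $\rho_0$- and $\rho_1$-a.e.

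\textbf{Pointwise identification of $\sup/\inf$ with $\esssup[\nu]/\essinf[\nu]$.} The crux is to verify that for every $x \in \supp \rho$,
\[
\sup_{B_\eps(x)} 1_{A^\star} = \esssup[\nu]_{B_\eps(x)} 1_A, \qquad \inf_{B_\eps(x)} 1_{A^\star} = \essinf[\nu]_{B_\eps(x)} 1_A.
\]
Since both sides take values in $\{0,1\}$, these reduce to equivalences between nonemptiness of $A^\star \cap B_\eps(x)$ (resp.\ $(A^\star)^c \cap B_\eps(x)$) and positivity of $\nu(A \cap B_\eps(x))$ (resp.\ $\nu(A^c \cap B_\eps(x))$). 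Assumption \labelcref{ass:nu_eps}(2) guarantees $B_\eps(x) \subseteq \supp \nu$, so every $y \in B_\eps(x)$ has $\nu(B_r(y)) > 0$ for all $r > 0$. One implication in each equivalence is easy: use $A^\star \sim_\nu A$ to transfer positivity of measure on $B_\eps(x)$ from $A$ to $A^\star$ and conclude nonemptiness. The converse uses a density argument: if $y \in A^\star \cap B_\eps(x)$, openness of $B_\eps(x)$ gives $B_r(y) \subseteq B_\eps(x)$ for small $r$, and the defining density-one condition forces $\nu(A \cap B_r(y)) > 0$, hence $\nu(A \cap B_\eps(x)) > 0$, and dually for the $\inf$ side.

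\textbf{Conclusion and main obstacle.} Plugging these pointwise identities into \labelcref{eq:pre-perimeter}, together with $1_{A^\star} = 1_A$ $\rho_i$-a.e., gives $\PrePer_\eps(A^\star;\mu) = $ RHS of the claimed formula, which combined with the sandwich from the strategy paragraph yields $\Per[\nu]_\eps(A;\mu) = \PrePer_\eps(A^\star;\mu)$ and the essential perimeter representation. The delicate step is the converse direction of the $\inf$ identity, specifically a point $y \in (A^\star)^c \cap B_\eps(x)$ at which the $\nu$-density of $A$ fails to exist: Lebesgue differentiation only declares the set of such points $\nu$-null, but the non-essential supremum of $1_{(A^\star)^c}$ still detects it. The resolution is that $y \in (A^\star)^c \cap \supp \nu$ forces $\liminf_{r \downarrow 0} \nu(A \cap B_r(y))/\nu(B_r(y)) < 1$ (either the limit exists and is strictly less than one, or it does not exist, in which case $\liminf < \limsup \leq 1$), producing a sequence $r_n \downarrow 0$ with $\nu(A^c \cap B_{r_n}(y)) > 0$; since $B_{r_n}(y) \subseteq B_\eps(x)$ for large $n$, we conclude $\nu(A^c \cap B_\eps(x)) > 0$, as needed.
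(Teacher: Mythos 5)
Your proof is correct, and the final sandwich step ($\sup\geq\esssup$, $\inf\leq\essinf$, combined with $\nu$-invariance of the essential quantities over the $\sim_\nu$-class) is essentially the same inequality chain the paper uses to close its proof of this proposition. What differs, and genuinely so, is the construction of the representative $A^\star$. The paper's \cref{lem:ModifySets} performs a ``surgical'' modification: it identifies the sets $D_\pm\subseteq\supp\rho$ where $1_A$ is $\nu$-essentially constant (value $1$ resp.\ $0$) on $B_\eps(\cdot)$, dilates them to $D_\pm^\eps$, shows disjointness, forces $1_{A^\star}$ to be constant there, and leaves $A$ untouched outside $D_+^\eps\cup D_-^\eps$; Lebesgue differentiation is then used to show the modification is $\nu$-null. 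You instead take $A^\star$ to be the set of $\nu$-density-one points of $A$ within $\supp\nu$ and verify the identity $\sup_{B_\eps(x)}1_{A^\star}=\esssup[\nu]_{B_\eps(x)}1_A$ (and its $\inf$ counterpart) directly by a density argument, carefully handling the points where the density limit fails to exist (your last paragraph). Both constructions rest on the same two assumptions --- doubling of $\nu$ for Lebesgue differentiation and $B_\eps(\supp\rho)\subseteq\supp\nu$ --- and your $A^\star$ is precisely the set the paper later calls $A^-$ in \cref{sec:RegularitySolutions}, where they indeed note $(A^-)^\star=A^-$, consistent with your calculation. For this proposition your route is arguably cleaner and more canonical. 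The paper's modification-based construction is tailored to carry over smoothly to \cref{lem:ModifyFcts}, where one must build a \emph{nested} family of representatives $A^\star_{t_k}$ for level sets; your density-one sets would also be automatically nested (density is monotone under set inclusion), so either construction could in principle serve there too.
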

For the proof of the proposition we need a preparatory lemma which deals with the construction of the representative set.
\begin{lemma}
\label{lem:ModifySets}
Under \cref{ass:nu_eps} for any $A\in\B(\domain)$ there exists $A^\star\in\B(\domain)$ with $A\sim_{\nu}A^\star$ such that
\begin{equation}
\label{eqn:PropertySupEsssup}
  \sup_{B_\eps(x)} {1}_{A^\star} = \esssup[\nu]_{B_\eps(x)} {1}_{A^\star}, \quad  \inf_{B_\eps(x)} {1}_{A^\star} =  \essinf[\nu]_{B_\eps(x)} {1}_{A^\star}, \quad \forall x\in \supp\rho.
\end{equation}
\end{lemma}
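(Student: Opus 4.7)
The plan is to take $A^\star$ to be the set of Lebesgue density-one points of $A$ with respect to $\nu$, leveraging \cref{ass:nu_eps}(3) which guarantees that the Lebesgue differentiation theorem holds in $(\domain,\metric,\nu)$. Concretely, I would define
\begin{equation*}
    A^\star := \left\{ y \in \supp \nu \st \lim_{r \downarrow 0} \frac{\nu(A \cap B_r(y))}{\nu(B_r(y))} = 1 \right\},
\end{equation*}
interpreted via a Borel-measurable representative of the density function, which is available because the limsup over a countable dense sequence of radii is Borel and by doubling coincides off a $\nu$-null set with the pointwise limit. Lebesgue differentiation yields that $\nu$-a.e.\ this density equals $1_A$, so $\nu(A \triangle A^\star)=0$, i.e.\ $A \sim_\nu A^\star$. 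Intersecting with $\supp\nu$ is harmless since $\nu((\supp\nu)^c)=0$.

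Both equalities in \labelcref{eqn:PropertySupEsssup} reduce to nontrivial directions, as one always has $\sup \geq \esssup[\nu]$ and $\inf \leq \essinf[\nu]$. For the supremum at $x \in \supp \rho$, assume $\sup_{B_\eps(x)} 1_{A^\star}=1$, so some $y \in A^\star \cap B_\eps(x)$ exists. Openness of $B_\eps(x)$ gives $B_r(y) \subseteq B_\eps(x)$ for small $r$, and density one together with $y \in \supp \nu$ gives $\nu(A \cap B_r(y)) > 0$; using $A \sim_\nu A^\star$ this yields $\nu(A^\star \cap B_\eps(x))>0$ and so $\esssup[\nu]_{B_\eps(x)} 1_{A^\star}=1$. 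For the infimum, assume some $y \in (A^\star)^c \cap B_\eps(x)$ exists. Since $\dist(y,\supp \rho) < \eps$, \cref{ass:nu_eps}(2) places $y \in \supp \nu$. Because $y \notin A^\star$, its density is not $1$, which forces $\liminf_{r \downarrow 0} \nu(A \cap B_r(y))/\nu(B_r(y)) < 1$ (otherwise, combined with $\limsup \leq 1$, the limit would equal $1$, a contradiction). Hence there is a sequence $r_n \downarrow 0$ with $\nu(A^c \cap B_{r_n}(y)) > 0$ and $B_{r_n}(y) \subseteq B_\eps(x)$; combined with $A^c \sim_\nu (A^\star)^c$ this gives $\nu((A^\star)^c \cap B_\eps(x))>0$ and thus $\essinf[\nu]_{B_\eps(x)} 1_{A^\star}=0$.

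The main subtle point I expect is the infimum equality at a $y$ lying in the $\nu$-null set where Lebesgue differentiation fails; a priori such a $y$ in $(A^\star)^c$ might not carry $A^c$-mass nearby. The resolution is that ``$y \notin A^\star$'' is only the one-sided statement that the upper density of $A$ at $y$ is not forced to one, and this one-sided information already suffices to extract positive $A^c$-mass in arbitrarily small balls around $y$. Here \cref{ass:nu_eps}(2) is used critically to ensure $y \in \supp \nu$ so that $\nu(B_r(y))>0$ and the density ratios are well-defined; without it, a point $y \in B_\eps(x)$ outside $\supp\nu$ would have all local density ratios degenerate and the argument would break.
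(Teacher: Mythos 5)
Your proof is correct, and it takes a genuinely different route from the paper's. The paper's argument defines two sets $D_\pm \subseteq \supp\rho$ by the extremal conditions $\esssup[\nu]_{B_\eps(x)} 1_A = \essinf[\nu]_{B_\eps(x)} 1_A = 1$ (resp.\ $=0$), dilates them to $D_\pm^\eps$, proves $D_+^\eps \cap D_-^\eps = \emptyset$ (the place where openness of the balls and \cref{ass:nu_eps}(2) enter), and then declares $1_{A^\star}$ equal to $1$ on $D_+^\eps$, $0$ on $D_-^\eps$, and $1_A$ elsewhere; the verification of \labelcref{eqnBlindPropertySupEsssup} is by a case split on whether $x \in D_+$, $x \in D_-$, or neither. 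Your construction instead takes $A^\star$ to be the $\nu$-density-one set of $A$ intersected with $\supp\nu$ — equivalently the set the paper later calls $A^-$ — and verifies the two equalities directly. The subtle direction you flagged (the infimum at a point $y \in (A^\star)^c \cap B_\eps(x)$ where differentiation may fail) and its resolution via the one-sided bound $\liminf_{r\downarrow 0}\nu(A\cap B_r(y))/\nu(B_r(y)) < 1$ together with \cref{ass:nu_eps}(2) to place $y \in \supp\nu$ is precisely right, and you are also correct that the sup direction does not need \cref{ass:nu_eps}(2) since $y \in A^\star$ is already in $\supp\nu$ by fiat. One trade-off: the paper's $D_\pm^\eps$ construction is tailored to extend to \cref{lem:ModifyFcts}, where one has to build a monotone family of level-set representatives indexed by rational thresholds and needs local control of how the modifications of $\{u \geq t_k\}$ and $\{u\geq t_l\}$ relate; your canonical density-one representative would make that step cleaner conceptually (density-one representatives of nested sets are nested) but would still require re-deriving the necessary monotonicity. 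Two minor imprecisions worth noting, neither fatal: first, your measurability remark is slightly off — one reduces to rational radii by left-continuity of $r \mapsto \nu(B_r(y))$ (since $B_r(y) = \bigcup_{s<r}B_s(y)$), which gives exact equality of the rational and full suprema rather than mere $\nu$-a.e.\ agreement, and doubling is not what makes this work; second, the phrase ``the upper density of $A$ at $y$ is not forced to one'' should read ``the lower density of $A$ at $y$ is strictly less than one,'' which is the inequality you actually use.

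\newcommand{\eqnBlindPropertySupEsssup}{eqn:PropertySupEsssup}
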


\begin{proof}
Let $u= {1}_A$ and let $D_+, D_-$ be the sets defined by:
\begin{align*}
    D_+ &:= \left\{ x \in \supp\varrho \: : \: \esssup[\nu]_{B_\eps (x)} u =1 , \quad \essinf[\nu]_{B_\eps (x)} u =1   \right\},
    \\
    D_- &:= \left\{ x \in \supp\varrho \: : \: \esssup[\nu]_{B_\eps (x)} u =0 , \quad \essinf[\nu]_{B_\eps (x)} u =0   \right\}.
\end{align*}
Also, let $D_+^\eps $ and $D_-^\eps$ be the sets:
\begin{align*}
    D_\pm^\eps := \left\{ x \in \R^d \: : \: \dist(x, D_\pm) < \eps  \right \}
\end{align*}
We claim that $D_+^\eps$ and $D_-^\eps$ are disjoint. Indeed, suppose for the sake of contradiction that there is a point $\tilde x$ in their intersection. Then, we would be able to find $x_1 \in D_+$ and $x_0 \in D_-$ such that $ \tilde x \in B_\eps(x_1)$ and $\tilde{x} \in B_\eps(x_0) $. In particular, we could find $\delta>0$ small enough such that
\[ B_\delta(\tilde x)  \subseteq B_\eps(x_1) \cap B_\eps(x_0). \]
In addition, since $D_+^\eps$ (or $D_-^\eps$) is by \cref{ass:nu_eps} a subset of the support of $\nu$, we would conclude that $\tilde x $ belongs to the support of $\nu$ and thus $\nu(B_\delta(\tilde x))>0 $. However, this would be a contradiction, because the above inclusion implies that, for example, $\essinf[\nu]_{B_\eps(x_1)} u =0$, contrary to the fact that $x_1 \in D_+$. 

Since $D_+^\eps$ and $D_-^\eps$ are disjoint we can now define the function $u^\star $ as:
\[ u^\star (x) := \begin{cases} 1 \quad \text{ if }  x \in D_+^\eps \\ 0 \quad \text{ if } x \in D_- ^\eps  \\ u(x) \quad \text{ if } x \in \R^d \setminus  (D_+^\eps \cup D_-^\eps ). \end{cases} \]
Notice that the function $u^\star$ is Borel measurable since the sets $D^\eps_{\pm}$ are open sets. We claim that $\nu$-a.e. it holds $u =u^\star$. To see this, notice that it suffices to show that $u(x) =1$ for $\nu$-a.e. $x \in D_+^\eps$ and that $u(x)=0$ for $\nu$-a.e. $x \in D_-^\eps$; we can focus on the first case as the second one is completely analogous. 
By definition of $D_+^\eps$ it holds
\[ 
\left\{ x \in D_+^\eps \st u(x)=0  \right\}
\subseteq 
\left\{ x \in D_+^\eps \st u(x) \not = \lim_{r \rightarrow 0}   \frac{1}{\nu(B_r(x))} \int_{B_r(x)} u(\tilde x) \de \nu(\tilde x ) \right\}. \]
Notice that this is the case since for $r>0$ small enough $\nu$-a.e. it holds $u=1$ in $B_r(x)$ when $x \in D_+^\eps$.
However, by the Lebesgue differentiation theorem applied to the measure $\nu$ and the measurable function $u$ (which is possible thanks to \cref{ass:nu_eps}, \editscolor see \cref{thm:Lebesgue}\nc) the latter set must have $\nu$ measure zero. This implies our claim. 

On the other hand, for every $x \in D_+$, by definition of $u^\star$ we have
\[ \sup_{B_\eps(x) } u^\star  = 1=  \esssup[\nu]_{B_\eps (x)} u^\star , \quad \inf_{B_\eps(x) } u^\star  = 1=  \essinf[\nu]_{B_\eps (x)} u^\star  \]
and for every $x \in D_-$  
\[ \inf_{B_\eps(x) } u^\star  = 0=\essinf[\nu]_{B_\eps (x)} u^\star , \quad \sup_{B_\eps(x) } u^\star  = 0=  \esssup[\nu]_{B_\eps (x)} u^\star .   \]
Finally, if $x \in \supp (\varrho) \setminus (D_+ \cup D_- )$ we have:
\[  \esssup[\nu]_{B_\eps(x)} u^\star = 1 , \quad  \essinf[\nu]_{B_\eps(x)} u^\star = 0.  \]
In particular, we also have:
\[ \esssup[\nu]_{B_\eps(x)} u^\star =1= \sup_{B_\eps(x)} u^\star  , \quad  \essinf[\nu]_{B_\eps(x)} u^\star =0= \inf_{B_\eps(x)} u^\star .   \]
The set $A^\star$ is now defined as $A^\star:=(u^\star )^{-1}(\{ 1\})$. This concludes the proof.
\end{proof}

\begin{remark}
\label{rem:OnOpenvsClosedBalls}
Notice that in the previous proof, specifically when we state that there is a $\delta>0$ such that $B_\delta(\tilde x) \subseteq B_\eps(x_1) \cap B_\eps(x_0)$, we implicitly use the fact that the adversarial model was defined in terms of \textit{open} balls $B_\eps$ as opposed to closed balls. The bottom line is that the construction of $u^*$ in the proof would not carry through if we replaced open with closed balls since in that case the sets $D_{\pm}^\eps$(appropriately modified) would not necessarily be disjoint. 
\end{remark}
\nc

Now we are ready to prove \cref{prop:representative_Per}.
\begin{proof}[Proof of \cref{prop:representative_Per}]
Using the construction from \cref{lem:ModifySets}, the definition of the perimeter \labelcref{eq:perimeter}, and the fact that $\sup\geq\esssup$, we compute
\begin{align*}
    &\;\phantom{=}
    \frac{w_0}{\eps}\int_\domain  \esssup[\nu]_{B_\eps(\cdot)}1_A-1_A \de\rho_0 
    + \frac{w_1}{\eps}\int_\domain  1_A-\essinf[\nu]_{B_\eps(\cdot)}1_{A} \de\rho_1
    \\
    &=
    \frac{w_0}{\eps}\int_\domain  \esssup[\nu]_{B_\eps(\cdot)}1_{A^\star}-1_{A^\star} \de\rho_0 
    + \frac{w_1}{\eps}\int_\domain  1_{A^\star}-\essinf[\nu]_{B_\eps(\cdot)}1_{A^\star} \de\rho_1
    \\
    &=
    \frac{w_0}{\eps}\int_\domain  \sup_{B_\eps(\cdot)}1_{A^\star}-1_{A^\star} \de\rho_0 
    + \frac{w_1}{\eps}\int_\domain  1_{A^\star}-\inf_{B_\eps(\cdot)}1_{A^\star} \de\rho_1
    \\
    &\geq
    \Per[\nu]_\eps(A;\mu)
    \\
    &=
    \inf_{\substack{B\in\B(\domain)\\A\sim_\nu B}}\frac{w_0}{\eps}\int_\domain  \sup_{B_\eps(\cdot)}1_B-1_B \de\rho_0 
    + \frac{w_1}{\eps}\int_\domain  1_B-\inf_{B_\eps(\cdot)}1_{B} \de\rho_1
    \\
    &\geq
    \inf_{\substack{B\in\B(\domain)\\A\sim_\nu B}}
    \frac{w_0}{\eps}\int_\domain  \esssup[\nu]_{B_\eps(\cdot)}1_B-1_B \de\rho_0 
    + \frac{w_1}{\eps}\int_\domain  1_B-\essinf[\nu]_{B_\eps(\cdot)}1_{B}\de\rho_1
    \\
    &=
    \frac{w_0}{\eps}\int_\domain  \esssup[\nu]_{B_\eps(\cdot)}1_A-1_A \de\rho_0 
    + \frac{w_1}{\eps}\int_\domain  1_A-\essinf[\nu]_{B_\eps(\cdot)}1_{A} \de\rho_1.
\end{align*}
Hence, equality holds everywhere, which completes the proof.
\end{proof}

\subsection{Properties of the Total Variation}

\editscolor
We start with an elementary homogeneity property of the total variation $\TV[\nu](\cdot;\mu)$ which follow immediately from its definition.
\begin{proposition}\label{prop:elementary_TV}
The functional $\TV[\nu](\cdot;\mu)$ defined in \labelcref{eq:TV} satisfies the following for all measurable functions $u:\domain\to\R$, $c\in\R$, and $\alpha\geq 0$:
\begin{align*}
    \TV[\nu](\alpha u + c;\mu) = \alpha\TV[\nu](u;\mu).
\end{align*}
\end{proposition}
\begin{proof}
The proof is trivial and we omit it.
\end{proof}
\nc

Now we prove the analogous result of \cref{prop:representative_Per} for the total variation.
We rely heavily on the construction from \cref{lem:ModifySets}.
\begin{proposition}\label{prop:representative_TV}
Under \cref{ass:nu_eps}, for any $u\in L^\infty(\domain;\nu)$ there exists $u^\star\in L^\infty(\domain;\nu)$ such that $u=u^\star$ holds $\nu$-almost everywhere and
\begin{align}
    \TV[\nu]_\eps(u;\mu) = \PreTV_\eps(u^\star;\mu).
\end{align}
Furthermore, the total variation admits the characterization
\begin{align}\label{eq:TV_repr}
    \TV[\nu]_\eps(u;\mu) =
    \frac{w_0}{\eps}\int_\domain  \esssup[\nu]_{B_\eps(\cdot)}u-u \de\rho_0 
    + \frac{w_1}{\eps}\int_\domain  u-\essinf[\nu]_{B_\eps(\cdot)}u \de\rho_1.
\end{align}
\end{proposition}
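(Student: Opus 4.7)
The plan is to mirror the argument of \cref{prop:representative_Per} with $1_A$ replaced by the general bounded measurable function $u$. The critical preparatory step is a function-valued analog of \cref{lem:ModifySets}: for any $u\in L^\infty(\domain;\nu)$, construct a modification $u^\star\in L^\infty(\domain;\nu)$ with $u^\star=u$ $\nu$-a.e.\ such that
\[
\sup_{\tilde x\in B_\eps(x)} u^\star(\tilde x) = \esssup[\nu]_{\tilde x\in B_\eps(x)} u^\star(\tilde x), \qquad \inf_{\tilde x\in B_\eps(x)} u^\star(\tilde x) = \essinf[\nu]_{\tilde x\in B_\eps(x)} u^\star(\tilde x)
\]
for every $x\in\supp\rho$.

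Assuming such a $u^\star$ is available, both \labelcref{eq:TV_repr} and the identity $\TV[\nu]_\eps(u;\mu)=\PreTV_\eps(u^\star;\mu)$ will follow from the same chain of (in)equalities used in the proof of \cref{prop:representative_Per}. Denote the right-hand side of \labelcref{eq:TV_repr} by $R(u)$. First, $R(u)=R(u^\star)$, since $\esssup[\nu]$ and $\essinf[\nu]$ depend only on $\nu$-equivalence classes. Next, $R(u^\star)=\PreTV_\eps(u^\star;\mu)$ by the defining property of $u^\star$, using that $\supp\rho_i\subseteq\supp\rho$ so the pointwise identity transfers to the integrals against $\rho_0$ and $\rho_1$. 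By admissibility of $u^\star$ in the infimum defining $\TV[\nu]_\eps(u;\mu)$, one has $\PreTV_\eps(u^\star;\mu)\geq \TV[\nu]_\eps(u;\mu)$. Finally, for any competitor $v$ with $v=u$ $\nu$-a.e.\ one gets $\PreTV_\eps(v;\mu)\geq R(v)=R(u)$ from the trivial bounds $\sup\geq\esssup[\nu]$ and $\inf\leq\essinf[\nu]$, so $\TV[\nu]_\eps(u;\mu)\geq R(u)$. The chain closes and equality holds throughout.

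The main obstacle is the construction of $u^\star$, which I plan to carry out by applying \cref{lem:ModifySets} to the super-level sets of $u$. Fix a countable dense subset $Q\subset[-\norm{u}_{L^\infty(\nu)},\norm{u}_{L^\infty(\nu)}]$ and, for each $t\in Q$, let $A_t^\star\in\B(\domain)$ be a representative of $\{u\geq t\}$ produced by \cref{lem:ModifySets}. To enforce monotonicity in $t$ (which the $A_t^\star$ need not a priori satisfy), replace $A_t^\star$ by $\bigcap_{s\in Q,\,s\leq t}A_s^\star$; a countable union of $\nu$-null sets being $\nu$-null, this intersection still belongs to the $\sim_\nu$-class of $\{u\geq t\}$, and it inherits the key $\sup=\esssup[\nu]$ identity since taking intersections of indicator functions preserves both $\sup_{B_\eps(x)}$ and $\esssup[\nu]_{B_\eps(x)}$. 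Then define
\[
u^\star(x):=\sup\{t\in Q\st x\in A_t^\star\},
\]
with the empty-sup convention being $\inf Q$. A density/layer-cake argument transfers the level-set identity to $u^\star$ itself: if $s<\sup_{B_\eps(x)}u^\star$, then some $\tilde x\in B_\eps(x)$ and $t\in Q$ with $t>s$ satisfy $\tilde x\in A_t^\star$; by the property of $A_t^\star$ this forces $\nu(A_t^\star\cap B_\eps(x))>0$, and since $u^\star\geq t>s$ on $A_t^\star$, we obtain $\esssup[\nu]_{B_\eps(x)} u^\star\geq s$. Letting $s\uparrow \sup_{B_\eps(x)}u^\star$ gives the $\sup$-to-$\esssup[\nu]$ identity; the $\inf$-to-$\essinf[\nu]$ case is symmetric. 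The delicate technical points are the Borel measurability of $u^\star$ and the simultaneous passage from the countable collection of level-set identities to the uncountable collection of pointwise identities at every $x\in\supp\rho$.
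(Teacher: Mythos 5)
Your approach is correct and reaches the same conclusion, but the mechanism you use to enforce monotonicity of the level-set representatives is genuinely different from the paper's. The paper (\cref{lem:ModifyFcts}) builds the sets $A^\star_{t_1}, A^\star_{t_2},\dots$ by \emph{induction}, at each stage patching the output of \cref{lem:ModifySets} on a $\nu$-null set so that the new set sits between its already-constructed neighbours pointwise on $\supp\nu$; this requires a separate argument that the patch does not break property \labelcref{eqn:PropertySupEsssup}, exploiting how the dilated sets $D_\pm^\eps(\cdot)$ nest. You instead post-process by replacing $A^\star_t$ with $B_t:=\bigcap_{s\in Q,\,s\le t}A^\star_s$, which enforces monotonicity for free. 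Conceptually this is cleaner, but your justification that ``taking intersections of indicator functions preserves both $\sup_{B_\eps(x)}$ and $\esssup[\nu]_{B_\eps(x)}$'' is not literally true and masks where the work actually happens: the identity $\sup_{B_\eps(x)}1_{B_t}=\esssup[\nu]_{B_\eps(x)}1_{B_t}$ holds because (i) the level sets $\{u\ge s\}$ are $\nu$-nested, so each $A^\star_{s'}\setminus A^\star_s$ ($s<s'$) is $\nu$-null and thus $B_t\sim_\nu A^\star_t$ while $B_t\subseteq A^\star_s$ for all $s\le t$; then (ii) for the sup side one chains $\sup 1_{B_t}\le\sup 1_{A^\star_t}=\esssup 1_{A^\star_t}=\esssup 1_{B_t}\le\sup 1_{B_t}$, and for the inf side one must pass through the \emph{specific} index $s\le t$ at which a point escapes $B_t$ and use $\essinf 1_{A^\star_s}=\inf 1_{A^\star_s}$ together with $B_t\subseteq A^\star_s$. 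Once this is filled in, your Steps 2--4 (measurability via $\{u^\star>r\}=\bigcup_{t\in Q,\,t>r}B_t$, transfer of the sup/esssup identity to $u^\star$, and $u^\star=u$ $\nu$-a.e.\ via density of $Q$) go through exactly as you sketch, and the reduction of the proposition to the lemma is a faithful replay of the chain of inequalities in \cref{prop:representative_Per}.
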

\begin{proof}
The proof works just as the proof of \cref{prop:representative_Per}, however, using \cref{lem:ModifyFcts} below.
\end{proof}

\first 
The following lemma extends the construction of \cref{lem:ModifyFcts} from sets to functions. 
The proof is given in \cref{sec:appendix_proofs}.
\nc 

\begin{lemma}\label{lem:ModifyFcts}
Under \cref{ass:nu_eps} for any Borel measurable function $u\in L^\infty(\domain;\nu)$ there exists $u^\star:\domain\to\R$ such that $u=u^\star$ holds $\nu$-almost everywhere and
\begin{equation}
    \sup_{B_\eps(x)} u^\star = \esssup[\nu]_{B_\eps(x)} u^\star , \quad  \inf_{B_\eps(x)} u^\star  = \essinf[\nu]_{B_\eps(x)} u^\star , \quad \forall x \in \supp\rho.  
\end{equation}
\end{lemma}

In fact, the nonlocal perimeter and total variation are connected via a coarea formula, as it is the case for their local counterparts.
Thanks to the characterizations as essential perimeter and total variation from \cref{prop:representative_Per,prop:representative_TV} the proof becomes very simple.

\begin{proposition}[Coarea formula]\label{prop:coarea}
Under \cref{ass:nu_eps} it holds \editscolor for any $u\in L^\infty(\domain;\nu)$ that \nc \begin{align}
    \TV[\nu]_\eps(u;\mu) = \int_\R\Per[\nu]_\eps(\{u\geq t\};\mu)\de t.
\end{align}
\end{proposition}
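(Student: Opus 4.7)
The plan is to reduce the coarea formula to a pointwise layer-cake identity by first invoking the essential characterizations established in \cref{prop:representative_Per,prop:representative_TV}. Once both sides are written in their essential form, the identity will boil down to an interchange of integrals justified by Tonelli's theorem.

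After writing out both sides in their essential form, the left-hand side is
\begin{align*}
    \TV[\nu]_\eps(u;\mu) = \frac{w_0}{\eps}\int_\domain  \left[\esssup[\nu]_{B_\eps(x)}u - u(x)\right]\de\rho_0(x) + \frac{w_1}{\eps}\int_\domain  \left[u(x) - \essinf[\nu]_{B_\eps(x)}u\right]\de\rho_1(x),
\end{align*}
while the right-hand side is the integral over $t\in\R$ of the analogous expression with $u$ replaced by $1_{\{u\geq t\}}$. The identity will therefore follow once I verify, for each fixed $x\in\domain$, the two pointwise relations
\begin{align*}
    \esssup[\nu]_{B_\eps(x)}u - u(x)
    &= \int_{-\infty}^\infty \left[\esssup[\nu]_{B_\eps(x)}1_{\{u\geq t\}} - 1_{\{u\geq t\}}(x)\right]\de t, \\
    u(x) - \essinf[\nu]_{B_\eps(x)}u
    &= \int_{-\infty}^\infty \left[1_{\{u\geq t\}}(x) - \essinf[\nu]_{B_\eps(x)}1_{\{u\geq t\}}\right]\de t,
\end{align*}
and then swap the $t$-integral with the spatial integrals via Tonelli's theorem, which applies because all integrands are non-negative.

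The pointwise identities themselves are a consequence of two elementary facts. First, for any two bounded real numbers $v_1\geq v_2$, the standard layer-cake decomposition gives $v_1 - v_2 = \int_{-\infty}^\infty [1_{\{v_1\geq t\}} - 1_{\{v_2\geq t\}}]\de t$; applying this with $v_1 = \esssup[\nu]_{B_\eps(x)}u$ and $v_2 = u(x)$ (both bounded since $u\in L^\infty(\domain;\nu)$) yields an analogous identity but with indicators of sublevel sets of the essential supremum on the right. Second, for Lebesgue-a.e. $t\in\R$ one has $1_{\{\esssup[\nu]_{B_\eps(x)}u\,\geq\, t\}} = \esssup[\nu]_{B_\eps(x)}1_{\{u\geq t\}}$, the only possible point of discrepancy being $t = \esssup[\nu]_{B_\eps(x)}u$ (which is Lebesgue null for any fixed $x$); the corresponding identity for $\essinf[\nu]$ in fact holds for every $t$. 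Substituting these equivalences into the layer-cake decomposition produces the desired pointwise formulas.

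The one technical point requiring care, and where I would focus most of my attention, is the joint measurability in $(x,t)$ of the integrand $\esssup[\nu]_{B_\eps(x)}1_{\{u\geq t\}}$ needed for Tonelli. I would handle this by first replacing $u$ by the pointwise representative $u^\star$ supplied by \cref{lem:ModifyFcts}: by construction, essential suprema and infima over $B_\eps(x)$ agree with their ordinary counterparts for every $x\in\supp\rho$, and joint measurability in $(x,t)$ then follows from the openness of the balls $B_\eps(x)$ exactly as in \cref{rem:Measurability}. With measurability secured, Tonelli reassembles the $t$-integrals of the two pieces into $\Per[\nu]_\eps(\{u\geq t\};\mu)$ and delivers the coarea formula.
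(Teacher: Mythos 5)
Your proposal is correct and follows essentially the same route as the paper's proof: both start from the essential characterizations in \cref{prop:representative_Per,prop:representative_TV}, rewrite the $\TV$-integrands via a layer-cake decomposition, and then appeal to Tonelli to interchange the $t$-integral with the spatial integrals. There are two worthwhile refinements relative to the paper's argument. First, you apply the layer-cake identity to the \emph{difference} $v_1 - v_2 = \int_{-\infty}^\infty \bigl[1_{\{v_1 \geq t\}} - 1_{\{v_2 \geq t\}}\bigr]\,\mathrm{d}t$ for $v_1 \geq v_2$, which handles signed $u$ in one pass; the paper instead restricts to $u \geq 0$ and then treats the general case by the decomposition $u = u_+ - u_-$ together with additivity of $\TV[\nu]_\eps$. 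Second, the paper's step of commuting the essential supremum with the $t$-integral is attributed tersely to ``monotone convergence,'' whereas you make the underlying mechanism explicit: for each fixed $x$ and Lebesgue-a.e.\ $t$ one has $1_{\{\esssup[\nu]_{B_\eps(x)} u \,\geq\, t\}} = \esssup[\nu]_{B_\eps(x)} 1_{\{u\geq t\}}$, the only possible exceptional value being $t = \esssup[\nu]_{B_\eps(x)} u$. Your discussion of joint $(x,t)$-measurability for Tonelli, which the paper does not address explicitly, is a legitimate concern; passing to the representative $u^\star$ of \cref{lem:ModifyFcts} and exploiting the openness of balls as in \cref{rem:Measurability} is a reasonable plan, though spelling it out carefully would require a bit more work (e.g., verifying that $(x,t)\mapsto \sup_{B_\eps(x)} 1_{\{u^\star\geq t\}}$ is a jointly Borel function, which does not follow immediately from the fixed-$t$ statement in \cref{rem:Measurability}). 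None of this constitutes a gap relative to the paper's own level of rigor.
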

\begin{proof}
\second
Let us first assume that $u\geq 0$.
\nc
Using \cref{prop:representative_TV,prop:representative_Per}, the layer cake representation, monotone convergence to swap integrals and supremum/infima, and Tonelli's theorem to swap integrals we can compute
\begin{align*}
    \TV[\nu]_\eps(u;\mu) 
    &=
    \frac{w_0}{\eps} \int_\domain \esssup[\nu]_{B_\eps(\cdot)}u - u \de\rho_0 + 
    \frac{w_1}{\eps} \int_\domain u - \essinf[\nu]_{B_\eps(\cdot)}u \de\rho_1
    \\
    &=
    \frac{w_0}{\eps} \int_\domain \esssup[\nu]_{B_\eps(\cdot)}\int_0^\infty 1_{\{u\geq t\}}\de t - \int_0^\infty 1_{\{u\geq t\}}\de t \de\rho_0 
    \\
    &\qquad
    + 
    \frac{w_1}{\eps} \int_\domain \int_0^\infty 1_{\{u\geq t\}}\de t - \essinf[\nu]_{B_\eps(\cdot)}\int_0^\infty 1_{\{u\geq t\}}\de t \de\rho_1
    \\
    &=
    \int_0^\infty
    \left(
    \frac{w_0}{\eps} \int_\domain \esssup[\nu]_{B_\eps(\cdot)} 1_{\{u\geq t\}}- 1_{\{u\geq t\}} \de\rho_0 \right.
    \\
    &\qquad
    + 
    \left.
    \frac{w_1}{\eps} \int_\domain 1_{\{u\geq t\}} - \essinf[\nu]_{B_\eps(\cdot)}1_{\{u\geq t\}} \de\rho_1
    \right)
    \de t
    \\
    &=
    \int_0^\infty 
    \Per[\nu]_\eps(\{u\geq t\})\de t.
\end{align*}
\second 
In the general case we have that $\nu$-a.e. it holds $m\leq u \leq M$ for some $m,M\in\R$ with $m\leq M$.
We can define the function $\tilde u := u-m$ which satisfies $\tilde u \geq 0$ and hence
\begin{align}\label{eq:coarea_rescaling}
    \TV[\nu](\tilde u;\mu) = \int_0^\infty\Per[\nu](\{\tilde u\geq t\};\mu)\de t.
\end{align}
Using \cref{prop:elementary_TV} it holds
\begin{align*}
    \TV[\nu](\tilde u;\mu)
    =
    \TV[\nu](u-m;\mu)
    =
    \TV[\nu](u;\mu).
\end{align*}
Furthermore, the perimeter integral satisfies
\begin{align*}
    \int_0^\infty\Per[\nu](\{\tilde u\geq t\};\mu)\de t
    =
    \int_0^\infty\Per[\nu](\{u\geq t + m\};\mu)\de t
    =
    \int_m^\infty\Per[\nu](\{u\geq t\};\mu)\de t.
\end{align*}
Plugging these two reformulations into \labelcref{eq:coarea_rescaling} shows
\begin{align*}
    \TV[\nu](u;\mu) = \int_m^\infty\Per[\nu](\{u\geq t\};\mu)\de t
    =
    \int_\R\Per[\nu](\{u\geq t\};\mu)\de t.
\end{align*}
\nc 
\end{proof}

The main consequence of the previous properties of the perimeter and the total variation is that the the latter constitutes a convex and weak-* lower semicontinuous functional on $L^\infty(\domain;\nu)$.

Showing the weak-* lower semicontinuity on $L^\infty(\domain;\nu)$ requires a little bit more work.
For this we need a couple of preparatory lemmas.
These depend on the validity of the Lebesgue differentiation theorem which requires the doubling condition in \cref{ass:nu_eps}.

\begin{lemma}\label{lem:wstar_pointwise}
Assume that $(\domain,\metric,\nu)$ is a Vitali metric measure space, meaning that $\nu$ satisfies \labelcref{eq:local_doubling}, assume that $\nu$ is $\sigma$-finite, and suppose that $u_k\wsto u$ in $L^\infty(\domain;\nu)$.
Then for $\nu$-almost every $x\in\domain$ and all $\eps>0$
\begin{align*}
    \limsup_{k\to\infty}
    \essinf[\nu]_{B_\eps(x)}u_k
    \leq
    u(x)
    \leq\liminf_{k\to\infty}
    \esssup[\nu]_{B_\eps(x)}u_k.
\end{align*}
\end{lemma}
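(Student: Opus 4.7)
The plan is to combine weak-* convergence tested against (normalized) indicators of small balls with the Lebesgue differentiation theorem, which is available on a sigma-finite Vitali metric measure space (see~\cite{heinonen2015sobolev}). I will focus on the upper inequality $u(x)\le \liminf_{k\to\infty}\esssup[\nu]_{B_\eps(x)}u_k$; the lower one follows from the analogous argument applied to $-u_k$ or by swapping the role of $\esssup[\nu]$ and $\essinf[\nu]$ with the reverse inequality between an average and the essential infimum.

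First I would fix $\eps>0$ and restrict attention to points $x\in\supp\nu$ for which the local doubling condition \labelcref{eq:local_doubling} holds and for which there exists $r_0(x)\in(0,\eps)$ with $0<\nu(B_r(x))<\infty$ for every $r\in(0,r_0(x))$; by \cref{ass:nu_eps} and sigma-finiteness, the complement of this set is $\nu$-null. For such $x$ and $r\in(0,r_0(x))$, the function $\phi_{x,r}:=1_{B_r(x)}/\nu(B_r(x))$ lies in $L^1(\domain;\nu)$, so weak-* convergence gives
\begin{align*}
\frac{1}{\nu(B_r(x))}\int_{B_r(x)} u_k\de\nu \;\xrightarrow[k\to\infty]{}\; \frac{1}{\nu(B_r(x))}\int_{B_r(x)} u\de\nu.
\end{align*}
Since $B_r(x)\subseteq B_\eps(x)$, the left-hand side is bounded above by $\esssup[\nu]_{B_\eps(x)}u_k$ for each $k$, so passing to the limit inferior yields
\begin{align*}
\frac{1}{\nu(B_r(x))}\int_{B_r(x)} u\de\nu \;\le\; \liminf_{k\to\infty}\esssup[\nu]_{B_\eps(x)}u_k.
\end{align*}

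Next I would let $r\downarrow 0$ and apply the Lebesgue differentiation theorem, valid by the Vitali and sigma-finiteness hypotheses, to conclude that the left-hand side converges to $u(x)$ for $\nu$-a.e. $x$. This establishes the upper inequality at every $x$ outside a $\nu$-null set $N_\eps$. To obtain a single null set that works for all $\eps>0$ simultaneously, I would exploit monotonicity: $\eps\mapsto\esssup[\nu]_{B_\eps(x)}u_k$ is non-decreasing, so if the inequality holds for $\eps=1/n$ it holds for every $\eps\ge 1/n$. Setting $N:=\bigcup_{n\in\N} N_{1/n}$, any $x\in\domain\setminus N$ satisfies the bound for every $\eps>0$.

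The main obstacle, conceptually minor but requiring care, is ensuring that $\nu(B_r(x))$ is both strictly positive and finite for the small radii over which the averaging argument runs. Positivity is automatic on $\supp\nu$, and finiteness at small scales follows from the local doubling condition together with sigma-finiteness, except on a $\nu$-null set of points; this is precisely why the Vitali hypothesis, rather than mere finiteness of $\nu$, is what drives the argument.
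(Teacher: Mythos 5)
Your proof is correct and follows essentially the same route as the paper: test weak-* convergence against normalized ball indicators $1_{B_r(x)}/\nu(B_r(x))\in L^1(\domain;\nu)$, bound the resulting averages by $\esssup[\nu]_{B_\eps(x)}u_k$, and let $r\downarrow 0$ via the Lebesgue differentiation theorem (which requires the Vitali/doubling hypothesis). The extra care you take with integrability of small balls and the union of null sets over $\eps=1/n$ is harmless but not strictly needed — the differentiation theorem already produces a single null set independent of $\eps$, and the $\eps$-dependence enters only through a deterministic monotonicity of $\esssup[\nu]_{B_\eps(x)}$.
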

\begin{proof}
Since $\nu$ is $\sigma$-finite, $L^\infty(\domain;\nu)$ is the dual of $L^1(\domain;\nu)$ and hence by definition of weak-* convergence \editscolor(see \cref{sec:appendix_technical_defs}) \nc it holds
\begin{align*}
    \int_\domain u\,\phi\de\nu = \lim_{k\to\infty}\int_\domain u_k\,\phi\de\nu,\quad\forall\phi\in L^1(\domain;\nu).
\end{align*}
Choosing $\phi=\frac{1}{\nu(B_r(x))}1_{B_r(x)}$ for $r>0$ it holds that
\begin{align*}
    \frac{1}{\nu(B_r(x))}\int_{B_r(x)}u\de\nu =
    \lim_{k\to\infty}
    \frac{1}{\nu(B_r(x))}\int_{B_r(x)}u_k\de\nu.
\end{align*}
Hence, using \cref{thm:Lebesgue} we obtain for $\nu$-a.e. $x\in\domain$ any $\eps>0$ 
\begin{align*}
    u(x) 
    &= 
    \lim_{r\downarrow 0}\frac{1}{\nu(B_r(x))}\int_{B_r(x)}u\de\nu
    \\
    &=
    \lim_{r\downarrow 0}
    \lim_{k\to\infty}
    \frac{1}{\nu(B_r(x))}\int_{B_r(x)}u_k\de\nu
    \\
    &\leq 
    \limsup_{r\downarrow 0}
    \liminf_{k\to\infty}
    \esssup[\nu]_{B_r(x)}u_k
    \\
    &\leq
    \liminf_{k\to\infty}
    \esssup[\nu]_{B_\eps(x)}u_k.
\end{align*}
Similarly, one establishes the inequality $u(x)\geq\limsup_{k\to\infty}\essinf[\nu]_{B_\eps(x)}u_k$.
\end{proof}
\begin{lemma}\label{lem:lsc_osc}
Under the conditions of \cref{lem:wstar_pointwise} it holds for $\nu$-a.e. $x\in\domain$ and all $\eps>0$
\begin{align*}
     \esssup[\nu]_{B_\eps(x)}u
     &\leq
     \liminf_{k\to\infty}
     \esssup[\nu]_{B_\eps(x)}u_k,
     \\
     \essinf[\nu]_{B_\eps(x)}u
     &\geq
     \limsup_{k\to\infty}
     \essinf[\nu]_{B_\eps(x)}u_k.
\end{align*}
\end{lemma}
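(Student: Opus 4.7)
The plan is to derive both inequalities from \cref{lem:wstar_pointwise} by a localization-plus-monotonicity argument, exploiting the fact that $B_\eps(x)$ is open, so that around $\nu$-a.e. point inside it we can fit a small ball that is still contained in $B_\eps(x)$.

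For the first inequality, fix $x \in \domain$. For any $y \in B_\eps(x)$, since the ball is open, the radius $\delta_y := \eps - \metric(x,y)$ is strictly positive, and by the triangle inequality $B_{\delta_y}(y) \subseteq B_\eps(x)$. By \cref{lem:wstar_pointwise}, for $\nu$-a.e.\ $y$ (and applied with radius $\delta_y$) we have
\begin{align*}
u(y) \leq \liminf_{k\to\infty} \esssup[\nu]_{B_{\delta_y}(y)} u_k \leq \liminf_{k\to\infty} \esssup[\nu]_{B_\eps(x)} u_k,
\end{align*}
where the second inequality uses monotonicity of the essential supremum with respect to set inclusion. Taking the essential supremum over $y \in B_\eps(x)$ on the left-hand side yields the claimed inequality
\begin{align*}
\esssup[\nu]_{B_\eps(x)} u \leq \liminf_{k\to\infty} \esssup[\nu]_{B_\eps(x)} u_k.
\end{align*}

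For the second inequality, the cleanest route is to apply the first inequality to the sequence $-u_k$, which still converges weak-$\ast$ to $-u$ in $L^\infty(\domain;\nu)$, and then use the identity $\esssup(-v) = -\essinf(v)$. Alternatively, one can mimic the argument above using the other half of \cref{lem:wstar_pointwise}: for $\nu$-a.e.\ $y \in B_\eps(x)$, pick $\delta_y = \eps - \metric(x,y) > 0$, apply the inequality $u(y) \geq \limsup_{k\to\infty}\essinf[\nu]_{B_{\delta_y}(y)} u_k$, and bound the right-hand side from below by $\limsup_{k\to\infty}\essinf[\nu]_{B_\eps(x)} u_k$ (again by monotonicity); finally take the essential infimum over $y \in B_\eps(x)$.

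The only subtlety, and arguably the sole nontrivial point, is that \cref{lem:wstar_pointwise} guarantees the pointwise bound only on a $\nu$-null complement; I must verify that this null set is harmless when passing to the essential supremum/infimum over $B_\eps(x)$. This is immediate from the definition of $\esssup[\nu]$/$\essinf[\nu]$, which ignores $\nu$-null sets, so restricting the comparison to the full-measure subset on which \cref{lem:wstar_pointwise} applies does not change either side. No further ingredient is needed; in particular, the openness of $B_\eps(x)$ (a consequence of our choice of open balls throughout, cf.\ \cref{rem:OnOpenvsClosedBalls}) is what makes the localization $B_{\delta_y}(y) \subseteq B_\eps(x)$ possible for every interior point.
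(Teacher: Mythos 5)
Your argument is correct and follows essentially the same localization-plus-monotonicity strategy as the paper, the only stylistic difference being that you let the inner radius $\delta_y = \eps - \metric(x,y)$ vary with $y$ and take the essential supremum over all of $B_\eps(x)$ directly, whereas the paper fixes a single $\delta$, takes the essential supremum over the smaller ball $B_{\eps-\delta}(x)$, and then sends $\delta\downarrow 0$. Your variant avoids the final limiting step and is slightly more economical; it hinges on the fact, made explicit in the statement of \cref{lem:wstar_pointwise}, that the exceptional $\nu$-null set is uniform over all radii (it is the set of non-Lebesgue points), so it is legitimate to apply that lemma at each $y$ with a $y$-dependent radius. Your remark about the null set being absorbed by the definition of $\esssup[\nu]/\essinf[\nu]$ is exactly the right observation and closes the argument; the reduction of the second inequality to the first via $-u_k$ matches the paper's treatment.
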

\begin{proof}
Let us choose $0<\delta<\eps$.
For $\nu$-almost every $y\in\domain$ \cref{lem:wstar_pointwise} implies
\begin{align*}
    u(y) \leq \liminf_{k\to\infty}\esssup[\nu]_{B_{\delta}(y)}u_k.
\end{align*}
Taking the $\esssup[\nu]$ over $y\in B_{\eps-\delta}(x)$ yields
\begin{align*}
    \esssup[\nu]_{B_{\eps-\delta}(x)} u
    &\leq 
    \esssup[\nu]_{y\in B_{\eps-\delta}(x)}
    \liminf_{k\to\infty}
    \esssup[\nu]_{B_{\delta}(y)}u_k
    \\
    &\leq
    \liminf_{k\to\infty}
    \esssup[\nu]_{y\in B_{\eps-\delta}(x)}
    \esssup[\nu]_{B_{\delta}(y)}u_k
    \\
    &\leq
    \liminf_{k\to\infty}
    \esssup[\nu]_{B_{\eps}(x)}u_k    .
\end{align*}
Choosing $\delta>0$ arbitrarily small yields
\begin{align*}
    \esssup[\nu]_{B_\eps(x)}u \leq \liminf_{k\to\infty}\esssup[\nu]_{B_\eps(x)}u_k.
\end{align*}
Applying this reasoning to $-u_k$ one shows analogously that
\begin{align*}
    \limsup_{k\to\infty}\essinf[\nu]_{B_\eps(x)}u_k 
    \leq 
    \essinf[\nu]_{B_\eps(x)}u.
\end{align*}
\end{proof}

Now we are ready to prove the following proposition which states important properties of the total variation.
\begin{proposition}\label{prop:TV_prop}
Under \cref{ass:nu_eps} the functional $\TV[\nu]_\eps(\cdot;\mu)$, defined in \labelcref{eq:TV}, is a positively homogeneous, weak-* lower semicontinuous, and convex functional on $L^\infty(\domain;\nu)$.
Lower semicontinuity is understood in the sense that
\begin{align*}
    u_k \wsto u \text{ in }L^\infty(\domain;\nu) \implies \TV[\nu]_\eps(u;\mu)\leq\liminf_{k\to\infty}\TV[\nu]_\eps(u_k;\mu).
\end{align*}
\end{proposition}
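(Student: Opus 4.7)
The plan is to derive all three properties directly from the explicit characterization provided by \cref{prop:representative_TV}, namely
\begin{align*}
    \TV[\nu]_\eps(u;\mu)
    =
    \frac{w_0}{\eps}\int_\domain \left(\esssup[\nu]_{B_\eps(\cdot)} u - u\right)\de\rho_0
    +
    \frac{w_1}{\eps}\int_\domain \left(u - \essinf[\nu]_{B_\eps(\cdot)} u\right)\de\rho_1.
\end{align*}
Positive homogeneity and convexity follow by elementary manipulation. For each fixed $x$, the functional $v\mapsto \esssup[\nu]_{B_\eps(x)} v$ on $L^\infty(\domain;\nu)$ is positively homogeneous and sublinear, while $v\mapsto \essinf[\nu]_{B_\eps(x)} v$ is positively homogeneous and superlinear. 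Hence, for $\lambda\geq 0$, all integrands scale by $\lambda$, giving positive homogeneity; and the integrands $v\mapsto \esssup[\nu]_{B_\eps(\cdot)} v - v$ and $v\mapsto v - \essinf[\nu]_{B_\eps(\cdot)} v$ are pointwise convex in $v$, so integration against the positive measures $\rho_0,\rho_1$ yields convexity.

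For weak-$\ast$ lower semicontinuity, suppose $u_k\wsto u$ in $L^\infty(\domain;\nu)$. The plan is to handle the $\esssup$/$\essinf$ terms and the linear terms separately. By the uniform boundedness principle there is $M>0$ with $\|u_k\|_{L^\infty(\nu)}\leq M$, so the quantities $\esssup[\nu]_{B_\eps(x)} u_k$ and $\essinf[\nu]_{B_\eps(x)} u_k$ lie in $[-M,M]$ uniformly in $k$ and $x$. \cref{lem:lsc_osc} provides the pointwise bounds
\begin{align*}
    \esssup[\nu]_{B_\eps(x)} u &\leq \liminf_{k\to\infty}\esssup[\nu]_{B_\eps(x)} u_k, \\
    \essinf[\nu]_{B_\eps(x)} u &\geq \limsup_{k\to\infty}\essinf[\nu]_{B_\eps(x)} u_k
\end{align*}
for $\nu$-a.e.\ $x$, and hence also $\rho_i$-a.e.\ since $\rho_i\ll\nu$. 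Then Fatou's lemma applied to the nonnegative sequence obtained by adding $M$ (respectively reverse Fatou, using the upper bound $M$) promotes these pointwise inequalities to the corresponding integrated inequalities against $\rho_0$ and $\rho_1$.

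It remains to treat the two linear terms $\int u_k\de\rho_i$. Since $\rho_i$ is finite with $\rho_i\ll\nu$, the Radon--Nikodym derivative $f_i:=\de\rho_i/\de\nu$ lies in $L^1(\domain;\nu)$, so testing weak-$\ast$ convergence against $f_i$ yields $\int u_k\de\rho_i\to\int u\de\rho_i$. Combining the four contributions with the elementary identity $\liminf_k(a_k+b_k)=\lim_k a_k+\liminf_k b_k$ whenever $a_k$ converges then delivers $\TV[\nu]_\eps(u;\mu)\leq\liminf_{k\to\infty}\TV[\nu]_\eps(u_k;\mu)$. The substantive difficulty in this chain is already isolated inside \cref{lem:wstar_pointwise,lem:lsc_osc}: obtaining pointwise semicontinuity of $\esssup[\nu]$ and $\essinf[\nu]$ under weak-$\ast$ convergence, which is nontrivial precisely because weak-$\ast$ convergence does not imply pointwise convergence, and hinges on the Lebesgue differentiation theorem and thus on the locally doubling property built into \cref{ass:nu_eps}. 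Everything past those lemmas is routine Fatou-type bookkeeping.
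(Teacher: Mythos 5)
Your proof is correct, and the convexity argument is genuinely different from the paper's. The paper deduces convexity from the submodularity of $\Per[\nu]_\eps$ together with the coarea formula (\cref{prop:coarea}) and lower semicontinuity, following a Lov\'asz-extension-type argument \`a la Chambolle--Darbon. You instead observe that in the explicit representation from \cref{prop:representative_TV}, each integrand $v\mapsto \esssup[\nu]_{B_\eps(x)} v - v(x)$ and $v\mapsto v(x) - \essinf[\nu]_{B_\eps(x)} v$ is pointwise convex in $v$ (sublinearity of $\esssup[\nu]$ plus a linear term), and integration against the nonnegative measures $\rho_0,\rho_1$ preserves convexity. This is more elementary, does not route through the coarea formula, and---unlike the paper's argument---does not use lower semicontinuity as an ingredient for convexity. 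Your lower semicontinuity argument follows the same chain as the paper (\cref{lem:wstar_pointwise,lem:lsc_osc}, then Fatou), but you are more explicit about two points the paper glosses over: that Fatou requires a uniform lower bound, supplied here by the uniform boundedness principle applied to the weak-$\ast$ convergent sequence; and that continuity of the linear terms $\int u_k\de\rho_i$ requires testing against the Radon--Nikodym densities $\de\rho_i/\de\nu\in L^1(\nu)$ rather than the constant function $1$. Both added details are legitimate improvements. One very minor remark: your positive homogeneity argument uses \cref{prop:representative_TV}, whereas the paper reads it directly off the infimum definition \labelcref{eq:TV}; either is fine, though the latter does not need \cref{ass:nu_eps}.
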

\begin{proof}
The positive homogeneity \editscolor was already proved in \cref{prop:elementary_TV}\nc.
To prove lower semicontinuity we use \cref{prop:representative_TV} to write
\begin{align*}
    \TV[\nu]_\eps(u;\mu) 
    &=
    \frac{w_0}\eps\left(\int_\domain \esssup[\nu]_{B_\eps(x)}u\de\rho_0-\int_\domain u \de\rho_0\right)
    \\
    &\qquad
    +
    \frac{w_1}\eps\left(\int_\domain  u\de\rho_1 - \int_\domain \essinf[\nu]_{B_\eps(x)}u\de\rho_1\right)
    \\
    &=
    \frac{w_0}\eps\left(\int_\domain \esssup[\nu]_{B_\eps(x)}u\frac{\de\rho_0}{\de\nu}\de\nu-\int_\domain u \frac{\de\rho_0}{\de\nu}\de\nu\right)
    \\
    &\qquad
    +
    \frac{w_1}\eps\left(\int_\domain  u\frac{\de\rho_1}{\de\nu}\de\nu - \int_\domain \essinf[\nu]_{B_\eps(x)}u\frac{\de\rho_1}{\de\nu}\de\nu\right),
\end{align*}
\editscolor
where $\frac{\de\rho_i}{\de\nu}$ denotes the Radon--Nikod\'ym derivative of $\rho_i$ with respect to $\nu$ (note that $\rho\ll\nu$ and $\rho=w_0\rho_0+w_1\rho_1$ implies $\rho_i\ll\nu$ for $i\in\{0,1\}$).
Let $(u_k)_{k\in\N}\subset L^\infty(\domain;\nu)$ be a sequence such that $u_k\wsto u$ as $k\to\infty$ where $u\in L^\infty(\domain;\nu)$.
Then it holds
\begin{align*}
    \int_\domain \esssup[\nu]_{B_\eps(x)}u\frac{\de\rho_0}{\de\nu}\de\nu
    \leq
    \int_\domain 
    \liminf_{k\to\infty}
    \esssup[\nu]_{B_\eps(x)}u_k\frac{\de\rho_0}{\de\nu}\de\nu
    \leq
    \liminf_{k\to\infty}
    \int_\domain 
    \esssup[\nu]_{B_\eps(x)}u_k\frac{\de\rho_0}{\de\nu}\de\nu.
\end{align*}
Note that, being a weakly-* convergent sequence, $\{u_k\}_{k\in\N}$ is uniformly bounded in $L^\infty(\domain;\nu)$ by some constant $C>0$.
Furthermore, $\int_\domain C \frac{\de\rho_0}{\de\nu}\de\nu=C\rho_0(\domain)<\infty$ which justifies an application of Fatou's lemma to the sequence $\esssup[\nu]_{B_\eps(x)}u_k + C$ for the second inequality.
One argues analogously for the other integral containing the $\essinf[\nu]$, using the reverse Fatou lemma.

Furthermore, since $\frac{\de\rho_i}{\de\nu}\in L^1(\domain;\nu)$ for $i\in\{0,1\}$, weak-* convergence of $u_k$ to $u$ directly implies
\begin{align*}
    \int_\domain u\frac{\de\rho_i}{\de\nu}\de\nu = 
    \lim_{k\to\infty}\int_\domain u_k\frac{\de\rho_i}{\de\nu}\de\nu,\qquad i\in\{0,1\}.
\end{align*}
Hence, we have established weak-* lower semicontinuity of $\TV[\nu]$.
\nc 

Convexity is a direct consequence of the submodularity of the perimeter, the coarea formula from \cref{prop:coarea}, and the lower semicontinuity; the proof works just as in~\cite[Prop. 3.4]{chambolle2010continuous}.
\end{proof}

\subsection{Existence of Solutions}
\label{sec:ExistenceSolutions}
We have completed all preparations to finally state our existence result for the adversarial problem \labelcref{eq:baseline_adv}.
The proof uses the direct method to establish existence of a minimizer of the variational problem \labelcref{eq:adv_perimeter}.
Then we use the representative constructed in \cref{prop:representative_Per} to turn this minimizer into a minimizer of the original problem \labelcref{eq:baseline_adv}.
This last step is shown in the following lemma.
\begin{lemma}\label{lem:minimizer2minimizer}
Let $A\in\B(\domain)$ be a solution of \labelcref{eq:adv_perimeter}.
Then $A^\star$, \first constructed in \cref{prop:representative_Per}\nc, is a solution of \labelcref{eq:baseline_adv}.
\end{lemma}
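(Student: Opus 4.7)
The plan is to assemble the three results already established in the paper: the equivalence of optimal values from \cref{prop:perimeter_problem}, the decomposition of the adversarial risk from \cref{prop:adv_risk_per}, and the representative construction from \cref{prop:representative_Per}. Since a solution $A$ of \labelcref{eq:adv_perimeter} has a value that matches the infimum in \labelcref{eq:baseline_adv}, the only task is to produce a concrete set achieving this value when the adversarial risk is expressed with the actual supremum (not the $\nu$-essential one). The set $A^\star$ from \cref{prop:representative_Per} is tailor-made for exactly this purpose.

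More precisely, first I would replace $A$ by $A^\star$ in the unregularized risk without changing its value: since $A\sim_\nu A^\star$ we have $\nu(A\triangle A^\star)=0$, and from the absolute continuity $\rho\ll\nu$ of \cref{ass:nu_eps} (part (1)) it follows that $\rho(A\triangle A^\star)=0$, hence
\begin{align*}
    \Exp{(x,y)\sim\mu}{\abs{1_A(x)-y}} = \Exp{(x,y)\sim\mu}{\abs{1_{A^\star}(x)-y}}.
\end{align*}
Second, I would invoke \cref{prop:representative_Per} to rewrite the perimeter term via the pre-perimeter of the representative, giving $\Per[\nu]_\eps(A;\mu)=\PrePer_\eps(A^\star;\mu)$. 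Combining these two identities shows that the value of $A$ in \labelcref{eq:adv_perimeter} equals $\Exp{(x,y)\sim\mu}{\abs{1_{A^\star}(x)-y}}+\eps\PrePer_\eps(A^\star;\mu)$.

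Third, I would apply \cref{prop:adv_risk_per} with $B=A^\star$ to recognize the latter expression as the adversarial risk of $A^\star$, that is,
\begin{align*}
    \Exp{(x,y)\sim\mu}{\abs{1_{A^\star}(x)-y}}+\eps\PrePer_\eps(A^\star;\mu)=\Exp{(x,y)\sim\mu}{\sup_{\tilde x\in B_\eps(x)}\abs{1_{A^\star}(\tilde x)-y}}.
\end{align*}
Finally, by \cref{prop:perimeter_problem} the minimum value of \labelcref{eq:adv_perimeter} coincides with the infimum in \labelcref{eq:baseline_adv}, so the chain of equalities shows that $A^\star$ achieves this infimum, i.e., $A^\star$ is a solution of \labelcref{eq:baseline_adv}.

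No real obstacle arises here; the lemma is essentially a bookkeeping step that pieces together already-proved identities. The only subtlety worth flagging explicitly is the transfer $\nu(A\triangle A^\star)=0\Rightarrow\rho(A\triangle A^\star)=0$ that requires $\rho\ll\nu$, which is why the assumption on $\nu$ enters and why the unregularized risk is unaffected by passing to the representative.
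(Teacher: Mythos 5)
Your proposal is correct and matches the paper's own proof step for step: both rely on the same three ingredients (\cref{prop:perimeter_problem}, \cref{prop:representative_Per}, \cref{prop:adv_risk_per}), and both use $\rho\ll\nu$ to transfer the equivalence $A\sim_\nu A^\star$ to the unregularized risk term. If anything, you spell out the $\nu$-to-$\rho$ nullset transfer more explicitly than the paper does, but the argument is otherwise the same bookkeeping chain of equalities.
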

\begin{proof}
Using \cref{prop:perimeter_problem,prop:representative_Per}, we get
\begin{align*}
    \Exp{(x,y)\sim\mu}{\abs{1_{A^\star}-y}}+\eps\PrePer_\eps(A^\star;\mu)
    &=
    \Exp{(x,y)\sim\mu}{\abs{1_{A}-y}}+\eps\Per[\nu]_\eps(A;\mu)
    \\
    &=
    \inf_{A\in\B(\domain)}
    \Exp{(x,y)\sim\mu}{\abs{1_{A}-y}}+\eps\Per[\nu]_\eps(A;\mu)
    \\
    &=
    \inf_{A\in\B(\domain)}
    \Exp{(x,y)\sim\mu}{\abs{1_{A}-y}}+\eps\PrePer_\eps(A;\mu),
\end{align*}
which, thanks to \cref{prop:adv_risk_per}, is equivalent to $A^\star$ solving \labelcref{eq:baseline_adv}.
\end{proof}

\begin{theorem}[Existence of Minimizers]\label{thm:existence}
Under \cref{ass:nu_eps} there exists a solution $A\in\B(\domain)$ of problem \labelcref{eq:baseline_adv}.
\end{theorem}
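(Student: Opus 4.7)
The plan is to apply the direct method of the calculus of variations to the convex relaxation \labelcref{eq:TV_problem} and then convert the resulting minimizer back into a minimizing \emph{set} via a level-set argument combined with \cref{lem:minimizer2minimizer}.

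First, I would establish existence of a minimizer $u^\star$ of \labelcref{eq:TV_problem}. The admissible set
\[
K := \{u \in L^\infty(\domain;\nu) \st 0 \leq u \leq 1 \text{ $\nu$-a.e.}\}
\]
is weak-* closed in the unit ball of $L^\infty(\domain;\nu)$, hence weak-* compact by Banach--Alaoglu, using that $\nu$ being finite gives $L^\infty(\domain;\nu) = L^1(\domain;\nu)^\ast$. The regularizer $\veps\TV[\nu]_\eps(\cdot;\mu)$ is weak-* lower semicontinuous on $L^\infty(\domain;\nu)$ by \cref{prop:TV_prop}. For the fidelity term, for $u \in K$ I would disintegrate and write
\[
\Exp{(x,y)\sim\mu}{\abs{u(x)-y}} = w_0 \int_\domain u \de\rho_0 + w_1 \int_\domain (1-u) \de\rho_1,
\]
which, since $\rho_0,\rho_1 \ll \nu$ with densities in $L^1(\domain;\nu)$ (as the $\rho_i$ are probability measures and $\nu$ is finite), is a weak-* continuous affine functional of $u$. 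Consequently the full objective is weak-* lower semicontinuous and bounded below on the weak-* compact set $K$, and attains its infimum at some $u^\star \in K$.

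Next I would extract a minimizing set from $u^\star$. Since $0 \leq u^\star \leq 1$, the layer-cake identity together with Fubini yields
\[
w_0 \int_\domain u^\star \de\rho_0 + w_1 \int_\domain (1-u^\star) \de\rho_1 = \int_0^1 \Exp{(x,y)\sim\mu}{\abs{1_{\{u^\star \geq t\}}(x)-y}} \de t,
\]
while the coarea formula \cref{prop:coarea} gives $\TV[\nu]_\eps(u^\star;\mu) = \int_0^1 \Per[\nu]_\eps(\{u^\star \geq t\};\mu) \de t$. Adding, the optimal value of \labelcref{eq:TV_problem} equals the average over $t \in (0,1)$ of the perimeter-regularized objective of \labelcref{eq:adv_perimeter} evaluated on the level set $\{u^\star \geq t\}$. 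Since any $A \in \B(\domain)$ gives rise to a feasible $1_A \in K$ whose TV-objective coincides with the perimeter-objective of $A$, the minima of \labelcref{eq:TV_problem} and \labelcref{eq:adv_perimeter} agree, and a mean-value argument produces $t^\star \in (0,1)$ such that $A := \{u^\star \geq t^\star\}$ is a minimizer of \labelcref{eq:adv_perimeter}.

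Finally, \cref{lem:minimizer2minimizer} applied to this $A$ yields that the representative $A^\star$ from \cref{prop:representative_Per} is a minimizer of the original adversarial training problem \labelcref{eq:baseline_adv}. The main technical content has already been absorbed into \cref{prop:TV_prop}, whose weak-* lower semicontinuity argument is where the doubling hypothesis in \cref{ass:nu_eps} is essential (via the Lebesgue differentiation theorem), and into \cref{prop:representative_Per} and \cref{lem:minimizer2minimizer}, which let us pass from $\sim_\nu$-equivalence classes back to genuine minimizers of the problem formulated with the \emph{non-essential} supremum. Given these ingredients, the remaining steps are bookkeeping; the subtlety to watch is ensuring that the densities $\de\rho_i/\de\nu$ lie in $L^1(\domain;\nu)$, so that the fidelity term is genuinely weak-* continuous.
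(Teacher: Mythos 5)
Your proof is correct and takes essentially the same approach as the paper: weak-* compactness in $L^\infty(\domain;\nu)$ via Banach--Alaoglu, weak-* lower semicontinuity of $\TV[\nu]_\eps$ from \cref{prop:TV_prop}, a layer-cake/coarea argument to pass from a function minimizer to a minimizing level set, and finally \cref{lem:minimizer2minimizer} to obtain a minimizer of \labelcref{eq:baseline_adv}. The only cosmetic difference is that the paper takes a minimizing \emph{sequence of sets} for \labelcref{eq:adv_perimeter} and extracts a weak-* limit function (using \cref{lem:wstar_pointwise} to see $0 \le u \le 1$), whereas you minimize the relaxation \labelcref{eq:TV_problem} directly over the weak-* compact convex set $K$ and then match infima; the mean-value step recovering a minimizing level set is identical in both.
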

\begin{proof}
Let $(A_k)_{k\in\N}\subseteq\B(\domain)$ be a minimizing sequence of \labelcref{eq:adv_perimeter} \third which is trivially bounded in $L^\infty(\domain;\nu)$\nc.
Using weak-* \third precompactness of bounded subsets of $L^\infty(\domain;\nu)$ (see \cref{thm:Banach-Alaoglu} in \cref{sec:appendix_technical_defs}) \nc we know that there exists $u\in L^\infty(\domain;\nu)$ such that a subsequence (which we don't relabel) satisfies $1_{A_k}\wsto u$ in $L^\infty(\domain;\nu)$.
Furthermore, from \cref{lem:wstar_pointwise} we know that $0\leq u(x)\leq 1$ for $\nu$-a.e. $x\in\domain$.

\editscolor
Let us first show that the empirical risk $\Exp{(x,y)\sim\mu}{\abs{u(x)-y}}$ is weak-* lower semicontinuous, in fact even continuous, along this sequence. 
For this we compute is as
\begin{align*}
    \Exp{(x,y)\sim\mu}{\abs{u(x)-y}} 
    &=
    w_0\int_\domain \abs{u(x)}\de\rho_0(x) + w_1\int_\domain\abs{u(x)-1}\de\rho_1(x)
    \\
    &=
    w_0\int_\domain u(x)\frac{\de\rho_0}{\de\nu}\de\nu(x) + w_1\int_\domain (1-u(x))\frac{\de\rho_1}{\de\nu}\de\nu(x)
    \\
    &=
    \lim_{k\to\infty}
    w_0\int_\domain 1_{A_k}(x)\frac{\de\rho_0}{\de\nu}\de\nu(x) + w_1\int_\domain (1-1_{A_k}(x))\frac{\de\rho_1}{\de\nu}\de\nu(x)
    \\
    &=
    \lim_{k\to\infty}
    \Exp{(x,y)\sim\mu}{\abs{1_{A_k}(x)-y}}.
\end{align*}
\nc 
Using \cref{prop:TV_prop} and the fact that $\TV[\nu](1_A;\mu)=\Per[\nu]_\eps(A;\mu)$ for all $A\in\B(\domain)$ we infer that
\begin{align}\label{eq:u_minimizer}
\begin{split}
    \Exp{(x,y)\sim\mu}{\abs{u(x)-y}} + \eps\TV[\nu]_\eps(u;\mu)
    &\leq
    \liminf_{k\to\infty}\Exp{(x,y)\sim\mu}{\abs{1_{A_k}(x)-y}} + \eps\Per[\nu]_\eps(1_{A_k};\mu)
    \\
    &=
    \inf_{A\subseteq\B(\domain)}
    \Exp{(x,y)\sim\mu}{\abs{1_A(x)-y}} + \eps\Per[\nu]_\eps(A;\mu).
\end{split}    
\end{align}

For $t\in[0,1]$ define the set $A_t :=\{u \geq t\}$.
It trivially holds
\begin{align*}
    \inf_{A\subseteq\B(\domain)}
    \Exp{(x,y)\sim\mu}{\abs{1_A(x)-y}} + \eps\Per[\nu]_\eps(A;\mu)
    \leq
    \Exp{(x,y)\sim\mu}{\abs{1_{A_t}(x)-y}} + \eps\Per[\nu]_\eps(A_t;\mu).
\end{align*}
Aiming for a contradiction we assume this inequality to be strict on a subset of $[0,1]$ with positive Lebesgue measure.
Integrating over $t\in[0,1]$ and using \cref{prop:coarea} we get
\begin{align*}
    &\phantom{<}
    \inf_{A\subseteq\B(\domain)}
    \Exp{(x,y)\sim\mu}{\abs{1_A(x)-y}} + \eps\Per[\nu]_\eps(A;\mu)
    \\
    &<
    \int_0^1 
    \Exp{(x,y)\sim\mu}{\abs{1_{A_t}(x)-y}} + \eps\Per[\nu]_\eps(A_t;\mu)\de t
    \\
    &=
    \Exp{(x,y)\sim\mu}{\abs{u(x)-y}} + \eps\TV[\nu]_\eps(u;\mu),
\end{align*}
which contradicts \labelcref{eq:u_minimizer}.
Hence, the inequality is an equality which shows that also $A_t$ is a minimizer of \labelcref{eq:adv_perimeter} for almost all $t\in[0,1]$.
In particular, \cref{lem:minimizer2minimizer} shows that $A_t^\star$ solves \labelcref{eq:baseline_adv} for almost every $t\in[0,1]$.
\end{proof}

The previous proposition establishes the existence of minimizers of the adversarial problem \labelcref{eq:baseline_adv}. 
However, it is not yet clear whether minimizers are unique or regular (of course considering equivalence classes modulo $\nu$). 
However, since the problem is not strictly convex in nature---cf. the relaxation \labelcref{eq:TV_problem}---in general uniqueness cannot be expected. 
This can trivially arise due to a separation between the supports of $\rho_0$ and $\rho_1$, as evidenced by the following example.

\first

\begin{example}\label{ex:four-balls}
Fixing $\e > 0$, suppose that $\mu$ is given by four Dirac masses centered at $(\pm \e, \pm \e)$ in $\R^2$, and that opposing corners are (deterministically) given the same label, namely $\rho_0 = \frac{1}{2} \delta_{(\e, -\e)} + \frac{1}{2}\delta_{(-\e,\e)}$ and $\rho_1 = \frac{1}{2} \delta_{(-\e, -\e)} + \frac{1}{2}\delta_{(\e,\e)}$, and $w_0 = w_1 = \frac{1}{2}$. Then it is straightforward to check that any set $A$ such that $B_\e((\e,\e)) \subset A$, $B_\e( (-\e,-\e)) \subset A$, $B_\e((-\e,\e)) \cap A = \emptyset$ and $B_\e((\e,-\e)) \cap A = \emptyset$ will be minimizers of the adversarial risk: indeed any such set has zero risk. The largest and smallest such sets (in blue color) are demonstrated in \cref{fig:four-balls}.
\end{example}

\begin{figure}
     \centering
     \begin{subfigure}[b]{0.45\textwidth}
         \centering
         \includegraphics[width=\textwidth]{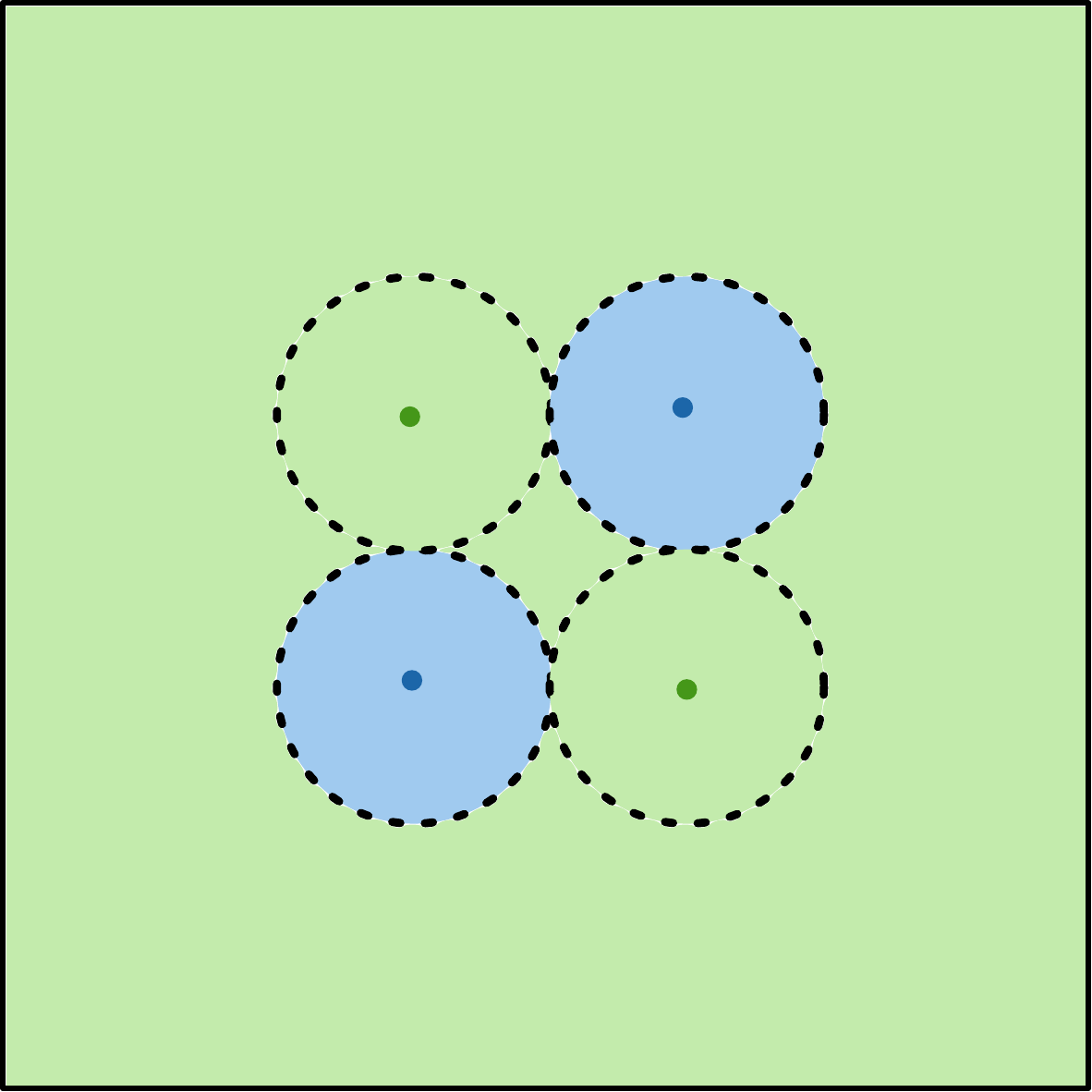}
         \caption{Smallest adversarial minimizer}
         \label{fig:four-balls-a}
     \end{subfigure}
     \hfill
     \begin{subfigure}[b]{0.45\textwidth}
         \centering
         \includegraphics[width=\textwidth]{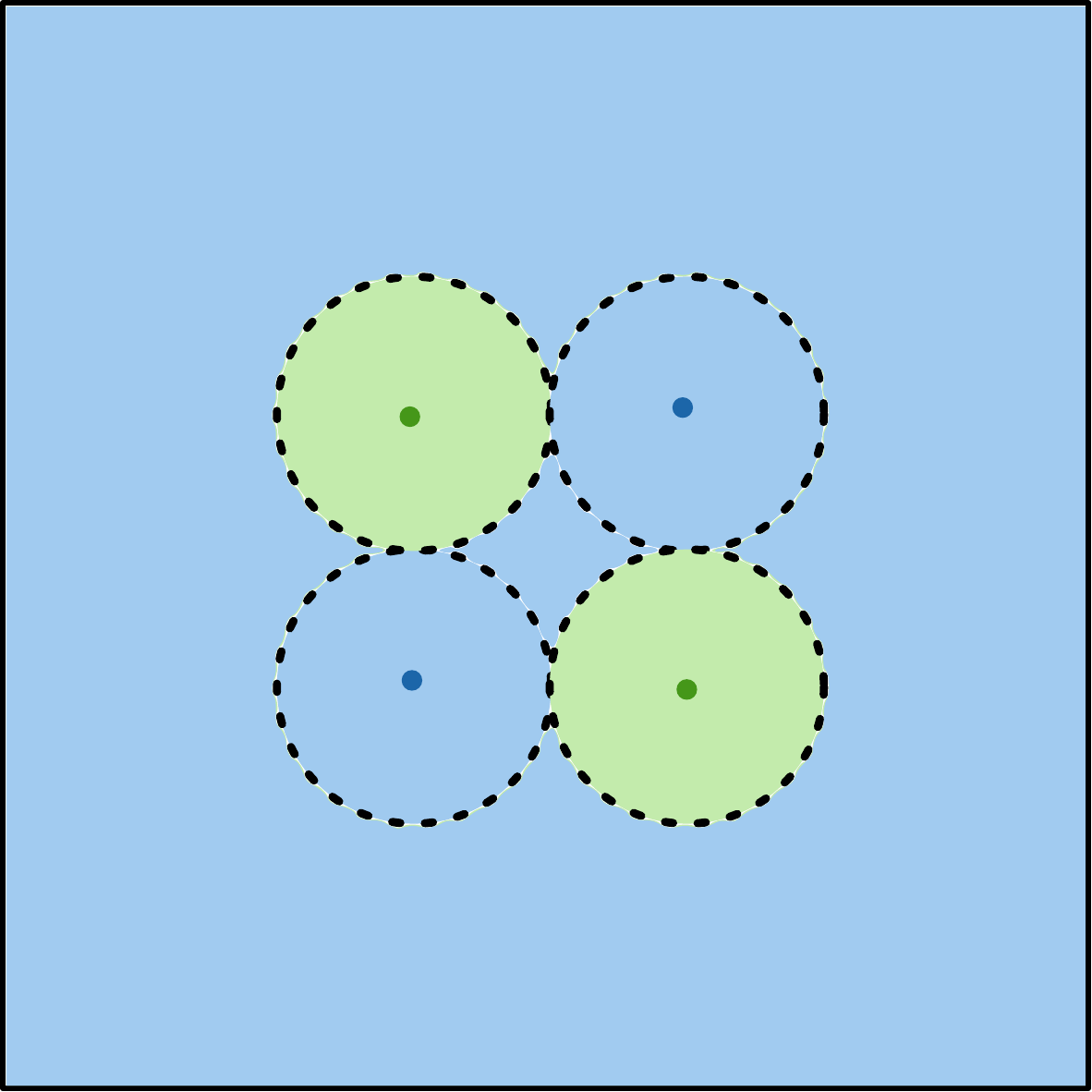}
         \caption{Largest adversarial minimizer}
         \label{fig:four-balls-b}
     \end{subfigure}
        \caption{\first Situation from \cref{ex:four-balls}  with non-unique, smooth minimizers. Here the four Dirac masses are displayed as well as the balls of radius $\e$ which surround them. Infinitely many minimizers of the adversarial risk exist, we display here the largest and smallest possible minimizers in blue color.}
        \label{fig:four-balls}
\end{figure}

\nc

The previous example demonstrates that one cannot hope for any type of uniqueness, or even that \textit{all} minimizers will necessarily be regular. Although the previous example utilized Dirac masses for simplicity, we suspect that many of the same issues can arise for distributions with smooth densities.

Despite the previous considerations, it is possible to obtain some positive results.
In particular, we can define notions of maximal and minimal minimizers to \labelcref{eq:baseline_adv} which are then shown to be unique.
Moreover, we show that although there may be irregular minimizers, we can always find regular minimizers provided that we define an appropriate notion of regularity relative to the metric $\metric$.
\editscolor 
Proving this will be the content of the following two sections.
\nc 

\subsection{Extremal Solutions}
\label{sec:ExtremalSolutions}

For notational convenience, we define the adversarial risk associated to \labelcref{eq:baseline_adv} as
\begin{align}\label{eq:adv_risk}
    \widetilde R_\eps(A)
    :=
    \Exp{(x,y)\sim\mu}{\sup_{\tilde x\in B_\eps(x)}\abs{1_A(\tilde x)-y}}
    =
    \Exp{(x,y)\sim\mu}{\abs{1_A(x)-y}}+\eps\PrePer_\eps(A;\mu).
\end{align}
To begin, we prove submodularity of the adversarial risk and show that the set of minimizers is closed under unions and intersections.
\begin{lemma}\label{lem:risk_submod}
The adversarial risk is submodular, meaning that it satisfies
\begin{align}
    \widetilde R_\eps(A \cup B) + \widetilde R_\eps(A \cap B) \leq \widetilde R_\eps(A) + \widetilde R_\eps(B),\quad\forall A,B\in\B(\domain).
\end{align}
\end{lemma}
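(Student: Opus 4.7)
The plan is to exploit the decomposition already established in \cref{prop:adv_risk_per}, which writes
\[
    \widetilde R_\e(A) = \Exp{(x,y)\sim\mu}{\abs{1_A(x)-y}}+\eps\PrePer_\eps(A;\mu),
\]
and to handle the two summands separately. First I would observe that the loss term is actually \emph{modular} (that is, additive under union/intersection): disintegrating $\mu$ gives
\[
    \Exp{(x,y)\sim\mu}{\abs{1_A(x)-y}} = w_0\,\rho_0(A) + w_1\,\rho_1(A^c),
\]
and since any measure satisfies $\rho_i(A\cup B)+\rho_i(A\cap B)=\rho_i(A)+\rho_i(B)$ and similarly for the complements (using $(A\cup B)^c = A^c\cap B^c$), the loss term contributes an equality
\[
    \Exp{(x,y)\sim\mu}{\abs{1_{A\cup B}-y}}+\Exp{(x,y)\sim\mu}{\abs{1_{A\cap B}-y}} = \Exp{(x,y)\sim\mu}{\abs{1_A-y}}+\Exp{(x,y)\sim\mu}{\abs{1_B-y}}.
\]

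Second, I would appeal to the submodularity of $\PrePer_\eps(\cdot;\mu)$. This is not stated as a standalone result but is contained in \cref{prop:submodularity,rem:submod_preper}: choosing $\nu$ to be the measure assigning $\infty$ to every non-empty Borel set makes $\Per[\nu]_\eps(\cdot;\mu)=\PrePer_\eps(\cdot;\mu)$, and hence submodularity of $\Per[\nu]_\eps$ transfers to the pre-perimeter. Thus
\[
    \PrePer_\eps(A\cup B;\mu)+\PrePer_\eps(A\cap B;\mu) \leq \PrePer_\eps(A;\mu)+\PrePer_\eps(B;\mu).
\]
Adding the modular identity for the loss term to $\eps$ times the pre-perimeter submodularity immediately yields the desired inequality $\widetilde R_\e(A\cup B)+\widetilde R_\e(A\cap B) \leq \widetilde R_\e(A)+\widetilde R_\e(B)$.

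There is no real obstacle here; the hard work has already been done upstream. The only thing worth double-checking is that the pointwise submodularity \labelcref{ineq:pointwise_char_fun} used in the proof of \cref{prop:submodularity} was genuinely for the \emph{sup} (not the essential sup) applied to indicator functions, so it applies directly to $\PrePer_\eps$ without any measure-theoretic caveats. If one preferred a self-contained argument, a completely analogous proof can be given at the level of the adversarial risk itself: one shows the pointwise inequality
\[
    \sup_{B_\e(x)}|1_{A\cup B}-y| + \sup_{B_\e(x)}|1_{A\cap B}-y| \leq \sup_{B_\e(x)}|1_A-y| + \sup_{B_\e(x)}|1_B-y|
\]
by splitting on $y\in\{0,1\}$ (for $y=0$ this is exactly \labelcref{ineq:pointwise_char_fun}; for $y=1$ apply the same inequality to the complements, using $(A\cup B)^c = A^c\cap B^c$) and then integrate against $\mu$. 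Either route is short, but the decomposition-based plan is cleanest given what has been developed earlier in the section.
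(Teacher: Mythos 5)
Your proof is correct and follows essentially the same route as the paper: decompose $\widetilde R_\e$ into the (modular) expected loss term plus $\eps$ times the pre-perimeter, and invoke the submodularity of $\PrePer_\eps$ via \cref{rem:submod_preper}. The only cosmetic difference is how modularity of the loss term is verified — you use additivity of the measures $\rho_0,\rho_1$ after disintegration, while the paper decomposes $\domain$ into $A\cap B$, $A\setminus B$, $B\setminus A$, $(A\cup B)^c$ and reassembles — but these are the same elementary fact.
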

\begin{proof}
We first notice that
\begin{align*}
\Exp{(x,y)\sim\mu}{\abs{1_A(x)- y}} + \Exp{(x,y)\sim\mu}{\abs{1_B(x)- y}} = \Exp{(x,y)\sim\mu}{\abs{1_{A\cap B}(x)- y}} + \Exp{(x,y)\sim\mu}{\abs{1_{A \cup B}(x)- y}}.    
\end{align*}
This fact can be directly proved by decomposing $\domain$ into $A\cap B$, $B \setminus A$, $A \setminus B$ and $\domain \setminus (A \cup B)$, splitting the integrals, and then reassembling.
Together with the submodularity of the pre-perimeter (cf. \cref{rem:submod_preper}) this implies the assertion.
\end{proof}
\begin{proposition}
\label{prop:ClosureUnderUnion}
Let $A$ and $B$ be minimizers of the adversarial problem \labelcref{eq:baseline_adv} with parameter $\eps\geq0$. 
Then both $A \cap B$ and $A \cup B$ are both also minimizers.
\end{proposition}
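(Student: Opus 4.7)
The plan is to deduce this directly from the submodularity of the adversarial risk established in \cref{lem:risk_submod}, which is the nontrivial ingredient and has already been set aside. The argument is a standard ``sandwich'' using that any minimizer attains the infimum value.

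More precisely, let $m := \inf_{C \in \B(\domain)} \widetilde R_\e(C)$ denote the optimal adversarial risk, so that by hypothesis $\widetilde R_\e(A) = \widetilde R_\e(B) = m$. First I would invoke \cref{lem:risk_submod} applied to the sets $A$ and $B$ to obtain
\begin{align*}
    \widetilde R_\e(A \cup B) + \widetilde R_\e(A \cap B) \;\leq\; \widetilde R_\e(A) + \widetilde R_\e(B) \;=\; 2m.
\end{align*}
On the other hand, since $m$ is the infimum, each of $\widetilde R_\e(A \cup B)$ and $\widetilde R_\e(A \cap B)$ is bounded below by $m$, so their sum is at least $2m$. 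Combining the two bounds forces both inequalities to be equalities, and in particular
\begin{align*}
    \widetilde R_\e(A \cup B) \;=\; \widetilde R_\e(A \cap B) \;=\; m,
\end{align*}
which is precisely the statement that $A \cup B$ and $A \cap B$ are minimizers.

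There is essentially no obstacle here beyond invoking the submodularity lemma; the subtlety has been isolated in \cref{lem:risk_submod} (via submodularity of the pre-perimeter from \cref{prop:submodularity} together with the elementary identity for the $L^1$ fitting term expressed through the partition of $\domain$ into $A \cap B$, $A \setminus B$, $B \setminus A$, and $\domain \setminus (A \cup B)$). The only thing I would emphasize in the write-up is that we are working with the pre-perimeter version $\widetilde R_\e$ (i.e., with the actual supremum inside the expectation, not any $\nu$-essential modification), which is the object for which submodularity was proved and which equals the original adversarial objective \labelcref{eq:baseline_adv} at every Borel set, so no measure-theoretic cleaning-up of representatives is needed at this step.
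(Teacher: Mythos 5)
Your proof is correct and is essentially the same argument as the paper's: invoke the submodularity of $\widetilde R_\e$ from \cref{lem:risk_submod}, then use that each of $\widetilde R_\e(A\cup B)$ and $\widetilde R_\e(A\cap B)$ is bounded below by the minimal risk $m$ to squeeze both to equal $m$. Your presentation as a single two-sided sandwich is a bit cleaner than the paper's case split, but the substance is identical; the remark about $\widetilde R_\e$ being the pre-perimeter version so that no $\nu$-equivalence bookkeeping is needed is a nice clarification and consistent with the paper.
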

\begin{proof}
Using \cref{lem:risk_submod}, it is immediate that, for any $A,B$, either $\widetilde R_\eps(A\cup B) \leq \frac{\widetilde R_\eps(A) + \widetilde R_\eps( B)}{2}$ or $\widetilde R_\eps(A\cap B) \leq \frac{\widetilde R_\eps(A) + \widetilde R_\eps( B)}{2}$: suppose that the former is true. Then if $A$ and $B$ are both minimizers then we immediately obtain that $A\cup B$ is also a minimizer. Subtracting the minimal risk from both sides then also implies that $\widetilde R_\eps(A \cap B) = \widetilde R_\eps(A)$. The other case is completely analogous.
\end{proof}
We now proceed to introduce the setting under which we can make sense of maximal and minimal solutions to problem \labelcref{eq:baseline_adv}.
In fact, we first work with the relaxed problem \labelcref{eq:adv_perimeter} and then use \cref{lem:minimizer2minimizer} to obtain statements about the original problem \labelcref{eq:baseline_adv}.
We introduce the following notation for sets $A,A'\in\B(\domain)$:
\begin{align}
   A \preceq_\nu A'
   :\iff
   1_{A}(x) \leq 1_{A'}(x)\quad\text{for $\nu$-a.e. $x\in\domain$}.
\end{align}
Notice that the relation $\preceq_\nu$ above induces a partial order in the set of equivalence classes of~$\sim_{\nu}$, in other words in the quotient $\sigma$-algebra $\B_{\nu}(\domain)$.
We now define maximal and minimal solutions and show their existence and uniqueness, \editscolor see \cref{fig:maximal-ex1} for an example. \nc 
\begin{definition}[Maximal (minimal) solutions]
We say that $A\in\B(\domain)$ is a maximal (minimal) solution of \labelcref{eq:adv_perimeter} if $A$ is a solution with the property that any other solution $A'\in\B(\domain)$ to \labelcref{eq:adv_perimeter} satisfying $A \preceq_\nu A'$ ($A' \preceq_\nu A$) must satisfy $A \sim_{\nu} A'$.
\end{definition}
\begin{proposition}\label{prop:max_min_sol}
Assume that $\nu$ is a finite measure on $\domain$.
Then there exists a unique maximal (minimal) solution to problem \labelcref{eq:adv_perimeter} up to $\nu$-equivalence. The maximal solution is denoted with $A_\mathrm{max}$ while the minimal solution is denoted with $A_\mathrm{min}$. 
\end{proposition}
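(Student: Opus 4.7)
The plan is to build $A_{\max}$ as a $\nu$-maximal minimizer by iterating the closure-under-unions property and then passing to the limit via lower semicontinuity. First I would observe that the relaxed objective
\[
F(A) := \Exp{(x,y)\sim\mu}{\abs{1_A(x)-y}} + \eps\Per[\nu]_\eps(A;\mu)
\]
is submodular on $\B(\domain)$: the data-fidelity term is modular (split $\domain$ into $A\cap B$, $A\setminus B$, $B\setminus A$, and their joint complement), and the perimeter is submodular by \cref{prop:submodularity}. Repeating the argument of \cref{prop:ClosureUnderUnion} verbatim (which uses only submodularity of the objective) shows that the set of minimizers of \labelcref{eq:adv_perimeter} is closed under unions and intersections modulo $\sim_\nu$.

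Next, since $\nu$ is finite, the number $m := \sup\{\nu(A) \st A \text{ is a minimizer of }\eqref{eq:adv_perimeter}\}$ is finite. Choose minimizers $(A_n)$ with $\nu(A_n)\to m$ and set $B_n := \bigcup_{k=1}^n A_k$. By induction (closure under unions) each $B_n$ is a minimizer, and $B_n \uparrow A_{\max} := \bigcup_{n\in\N} B_n \in \B(\domain)$ monotonically. From $1_{B_n}\uparrow 1_{A_{\max}}$ pointwise and $\abs{1_{B_n}}\leq 1$, dominated convergence yields $\int 1_{B_n}\phi\de\nu \to \int 1_{A_{\max}}\phi\de\nu$ for every $\phi \in L^1(\domain;\nu)$, i.e., $1_{B_n} \wsto 1_{A_{\max}}$ in $L^\infty(\domain;\nu)$. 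The data-fidelity term is weak-$*$ \emph{continuous} along this sequence, since $\rho_0,\rho_1 \ll \nu$ and $\nu$ finite implies the densities $\mathrm{d}\rho_i/\mathrm{d}\nu$ lie in $L^1(\domain;\nu)$; the perimeter term is weak-$*$ lower semicontinuous by \cref{prop:TV_prop} together with the identification $\Per[\nu]_\eps(A)=\TV[\nu]_\eps(1_A)$. Passing to the limit gives $F(A_{\max}) \leq \liminf_n F(B_n) = \inf F$, so $A_{\max}$ is a minimizer, and by construction $\nu(A_{\max}) = \lim_n \nu(B_n) = m$.

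Maximality is then automatic: if $A'$ is a minimizer with $A_{\max}\preceq_\nu A'$, then $m = \nu(A_{\max}) \leq \nu(A') \leq m$, and combined with $\nu(A_{\max}\setminus A')=0$ this forces $\nu(A'\triangle A_{\max})=0$, i.e., $A'\sim_\nu A_{\max}$. Uniqueness follows by taking two maximal solutions $A_{\max}^1, A_{\max}^2$: their union is a minimizer dominating each, so by the maximal property each is $\nu$-equivalent to the union, hence to each other. The minimal solution $A_{\min}$ is constructed in exactly the same way, replacing unions by intersections and $m$ by $\inf\{\nu(A) \st A \text{ minimizer}\}$. The main delicate point of the argument is the limit step for $A_{\max}$: one must verify that the monotone indicator sequence produces weak-$*$ convergence in $L^\infty(\domain;\nu)$ and that the data-fidelity part is in fact weak-$*$ continuous along this sequence, which is precisely where the finiteness of $\nu$ and the absolute continuity $\rho\ll\nu$ enter crucially.
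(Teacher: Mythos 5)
Your proof is correct and follows essentially the same route as the paper's, which also defines $m:=\sup\{\nu(A)\st A\text{ minimizes}\}$, takes a maximizing sequence, replaces it by the monotone sequence of finite unions using closure under unions, and passes to the increasing limit. Two points where you are more careful than the paper: (i) you explicitly re-derive closure under unions for the relaxed objective $F$ in \labelcref{eq:adv_perimeter} from submodularity of $\Per[\nu]_\eps$, whereas the paper cites \cref{prop:ClosureUnderUnion}, which is stated for the unrelaxed problem \labelcref{eq:baseline_adv}; and (ii) the paper asserts that $A:=\bigcup_n A_n$ is a minimizer is ``immediate'' from the strong $L^1(\nu)$-convergence $1_{B_n}\to 1_A$, while you spell out the actual mechanism (weak-$*$ convergence via dominated convergence on the monotone sequence, weak-$*$ continuity of the data term using $\rho\ll\nu$, and weak-$*$ lower semicontinuity of the perimeter from \cref{prop:TV_prop}). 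Both gaps are small and easily filled, so the argument is the same in substance.
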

\begin{proof}
We follow an argument in \cite{chambolle2015nonlocal} and proceed as follows. Since $\nu$ is a finite measure
\[ 
m:= \sup \{ \nu(A) \: : \: A \text{ solution of }  \labelcref{eq:adv_perimeter}\}<\infty.
\]
Take a maximizing sequence $\{A_n \}_{n \in \N}\subseteq\B(\domain)$ in the definition of $m$ so that $\lim_{n \to \infty} \nu(A_n) = m $. From \cref{prop:ClosureUnderUnion} we know that for each $n\in\N$ the set $\bigcup_{k=1}^n A_k$ is also a solution to problem \labelcref{eq:adv_perimeter}. 
Let $A:=\bigcup_{k=1}^\infty A_k $, then it holds
\[ 
1_{\bigcup_{k=1}^n A_k} \to 1_{A}, \quad \text{in $L^1(\domain;\nu)$ as } n \to \infty. 
\]
From the above strong convergence it is immediate that $A$ is also a solution to \labelcref{eq:adv_perimeter} and we have
\[ 
m = \lim_{n \rightarrow \infty} \nu(A_n) \leq \lim_{n \rightarrow \infty} \nu\left(\bigcup_{k=1}^n A_k \right) = \nu(A) \leq m. 
\]
Now, notice that if there was a solution $A'$ such that $A \preceq_\nu A'$ and $A \not \sim_{\nu} A' $, then we would have $m= \nu(A)< \nu(A')$ which would contradict the definition of $m$. Likewise, if there were two solutions $A,A'$ with $\nu(A)=m = \nu(A')$ and the two sets were not equivalent, then by taking their union we would be able to obtain a solution with $\nu$-volume strictly larger than $m$. This shows the existence and uniqueness of maximal solutions. 
A similar proof can be used to deduce the existence and uniqueness of minimal solutions.
\end{proof}

We can also introduce a notion of maximality and minimality of solutions for problem \labelcref{eq:baseline_adv}, at least when restricting to a class of solutions obtained by considering specific representatives of solutions $A\in\B(\domain)$ to \labelcref{eq:baseline_adv}. 
In contrast to the definition of $A^\star$ in \cref{lem:ModifySets}, which in general is representative dependent, the following notions are independent of the representative of $A$ in the quotient $\sigma$-algebra $\B_{\nu}(\domain)$. 
Given $A\in\B(\domain)$ we define  Borel sets $A^+$ and $A^-$ through their indicators according to the formulas: 
\begin{align*}
1_{A^+}(x)
&:= 
\begin{cases} 
1 \text{ if } x\in \supp(\nu) \text{ and } \limsup_{r \downarrow 0} \frac{\nu( B_r(x) \cap A)}{\nu(B_r(x))} >0, \\ 
1 \text{ if } x \notin \supp(\nu), \\ 
0 \text{ if }  x\in \supp(\nu) \text{ and } \limsup_{r \downarrow 0} \frac{\nu( B_r(x) \cap A)}{\nu(B_r(x))}= 0 , \end{cases} 
\quad x\in \domain,
\\
1_{A^-}(x)
&:= 
\begin{cases} 
1 \text{ if } x\in \supp(\nu) \text{ and } \liminf_{r \downarrow 0} \frac{\nu( B_r(x) \cap A)}{\nu(B_r(x))} =1,  \\ 
0 \text{ if } x \notin \supp(\nu), \\ 
0 \text{ if }  x\in \supp(\nu) \text{ and } \liminf_{r \downarrow 0} \frac{\nu( B_r(x) \cap A)}{\nu(B_r(x))} < 1,
\end{cases} 
\quad x\in \domain.
\end{align*}
Notice that for any Borel set $A$ we have $A^-\subseteq A^+$. In addition, notice that $A^+ = (A^+)^\star$ as well as $A^- = (A^-)^\star$. In particular, if $A$ is a solution to problem \labelcref{eq:adv_perimeter}, then both $A^+$ and $A^-$ are solutions to problem \labelcref{eq:baseline_adv} according to \cref{lem:minimizer2minimizer}.

\begin{figure}
     \centering
     \begin{subfigure}[b]{0.3\textwidth}
         \centering
         \includegraphics[width=\textwidth]{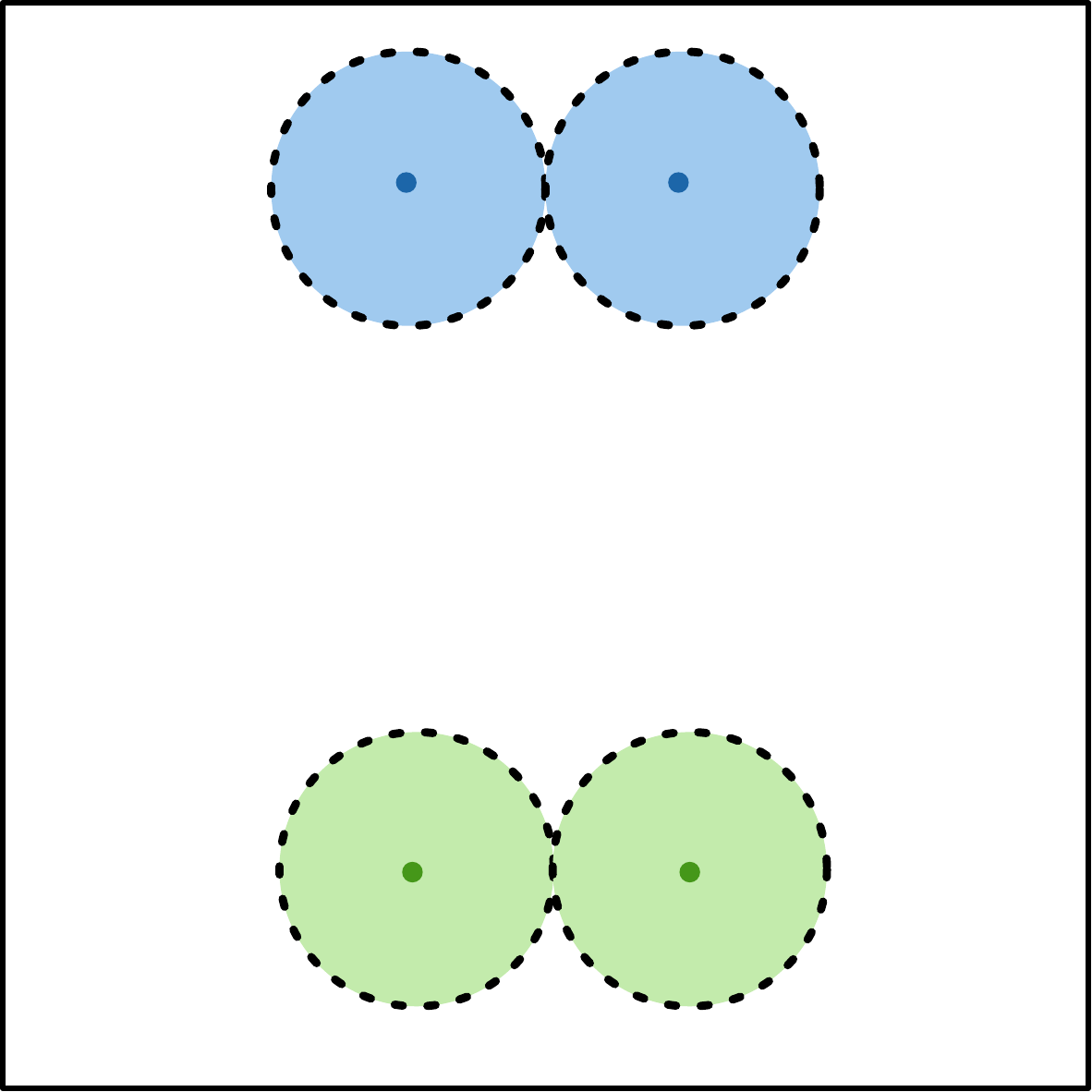}
         \caption{Data surrounded by $\eps$-balls}
         \label{fig:maximal-ex1a}
     \end{subfigure}
     \hfill
     \begin{subfigure}[b]{0.3\textwidth}
         \centering
         \includegraphics[width=\textwidth]{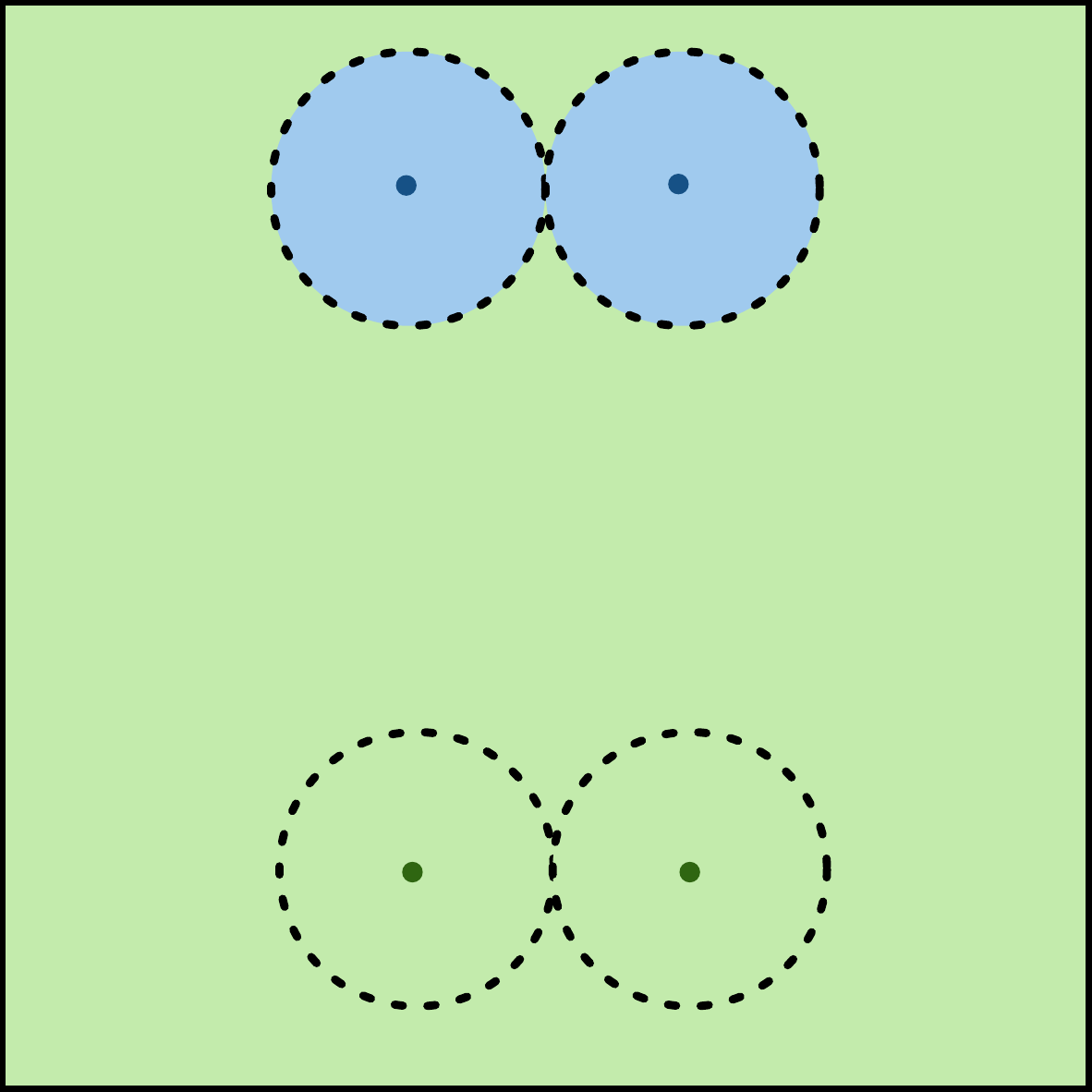}
         \caption{Minimal set}
         \label{fig:maximal-ex1b}
     \end{subfigure}
     \hfill
     \begin{subfigure}[b]{0.3\textwidth}
         \centering
         \includegraphics[width=\textwidth]{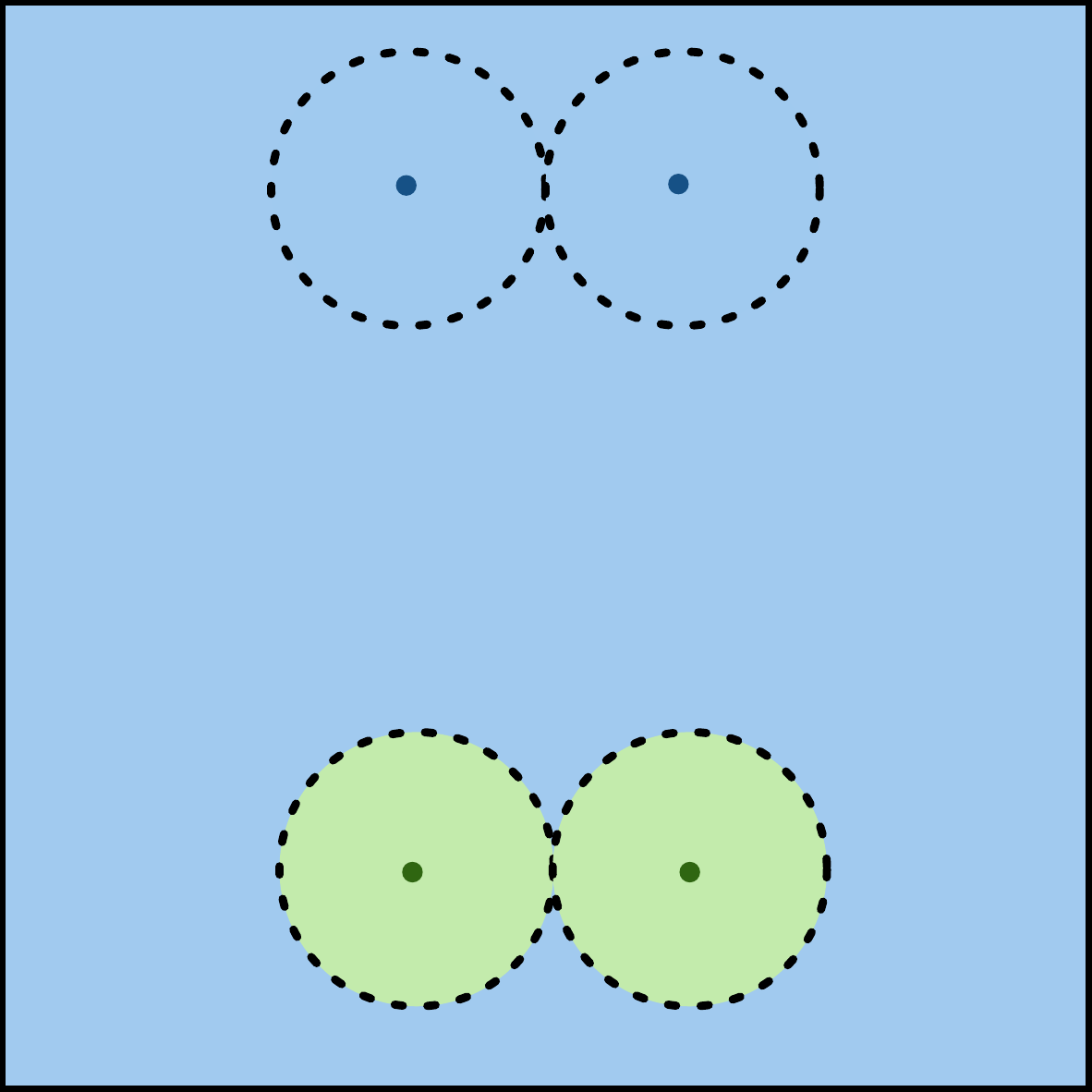}
         \caption{Maximal set}
         \label{fig:maximal-ex1c}
     \end{subfigure}
        \caption{The maximal and minimal sets \editscolor(in blue color) \nc associated with a particular distribution of point masses. Here the maximal and minimal sets have boundaries that cannot even be represented as graphs of a function at every point. In this case any intermediate set, in the sense of inclusion, will also be a minimizer, and many smooth minimizers are possible.}
        \label{fig:maximal-ex1}
\end{figure}

The next is an immediate consequence of \cref{prop:max_min_sol} and the above definitions.

\begin{corollary}\label{cor:max_min_sets}
\editscolor Assume that $\nu$ is a finite measure.
\nc
Among the set of solutions to problem \labelcref{eq:baseline_adv} of the form $A^+$ for some solution $A\in\B(\domain)$ of problem \labelcref{eq:adv_perimeter}, $A_\mathrm{max}^+$ is maximal in the sense of inclusions. 
Likewise, among the set of solutions to problem \labelcref{eq:baseline_adv} of the form $A^-$ for some solution $A$ of problem \labelcref{eq:adv_perimeter}, $A_\mathrm{min}^-$ is maximal in the sense of inclusions. In addition, $A_\mathrm{min}^- \subseteq A_\mathrm{max}^+$.
\end{corollary}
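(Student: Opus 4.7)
The plan is to derive the corollary from \cref{prop:max_min_sol} by means of two monotonicity properties of the operations $A \mapsto A^+$ and $A \mapsto A^-$, together with the fact that these operations yield admissible representatives for the passage from \labelcref{eq:adv_perimeter} back to \labelcref{eq:baseline_adv}.

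The first preliminary step is to verify that whenever $A$ solves \labelcref{eq:adv_perimeter}, both $A^+$ and $A^-$ solve \labelcref{eq:baseline_adv}. By \cref{lem:minimizer2minimizer} it suffices to check that $A^+$ and $A^-$ satisfy the sup-equals-esssup identity \labelcref{eqn:PropertySupEsssup} on $\supp\rho$, so that they qualify as $(\cdot)^\star$-type representatives in the sense of \cref{lem:ModifySets}. For $x \in \supp\rho$, part (2) of \cref{ass:nu_eps} forces $B_\eps(x) \subseteq \supp\nu$. Hence if $y \in A^+ \cap B_\eps(x)$, then $y$ lies in $\supp\nu$ and, since $B_r(y) \subseteq B_\eps(x)$ for small $r>0$ (using the open-ball convention, cf.\ \cref{rem:OnOpenvsClosedBalls}), the defining property of $A^+$ gives $\nu(B_r(y)\cap A)>0$. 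Combined with the $\nu$-a.e.\ equality $1_A = 1_{A^+}$, which comes from the Lebesgue differentiation theorem for the doubling measure $\nu$, this yields $\nu(B_\eps(x)\cap A^+)>0$, so $\esssup[\nu]_{B_\eps(x)} 1_{A^+} = 1 = \sup_{B_\eps(x)} 1_{A^+}$. The argument for $A^-$ is symmetric.

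The second ingredient is the monotonicity statement: if $A \preceq_\nu A'$, then $A^+ \subseteq (A')^+$ and $A^- \subseteq (A')^-$. This follows immediately from $\nu(B_r(x)\cap A) \leq \nu(B_r(x)\cap A')$ for every $x$ and $r$, the monotonicity of $\limsup$ and $\liminf$, and the fact that both operations treat points outside $\supp\nu$ uniformly (always included by $\cdot^+$, always excluded by $\cdot^-$).

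With these facts the corollary is essentially formal. A careful reading of the proof of \cref{prop:max_min_sol} shows that every solution $A$ of \labelcref{eq:adv_perimeter} in fact satisfies $A_\mathrm{min} \preceq_\nu A \preceq_\nu A_\mathrm{max}$, since otherwise \cref{prop:ClosureUnderUnion} applied to $A \cup A_\mathrm{max}$ or $A \cap A_\mathrm{min}$ would contradict the extremal $\nu$-volume characterizations used to construct $A_\mathrm{max}$ and $A_\mathrm{min}$. The monotonicity then delivers $A^+ \subseteq A_\mathrm{max}^+$ and $A_\mathrm{min}^- \subseteq A^-$ for every such $A$, which gives the two extremality claims. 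Lastly, the chain $A_\mathrm{min}^- \subseteq A_\mathrm{min}^+ \subseteq A_\mathrm{max}^+$, using the trivial pointwise inclusion $A^- \subseteq A^+$ on $\supp\nu$ (coming from $\liminf \leq \limsup$) together with monotonicity applied to $A_\mathrm{min} \preceq_\nu A_\mathrm{max}$, establishes the final inclusion. The main technical obstacle is the first step: ensuring that the sup-equals-esssup identity for $A^+$ and $A^-$ holds on all of $\supp\rho$ rather than merely $\nu$-almost everywhere, as this is what elevates $A^+$ and $A^-$ to genuine solutions of \labelcref{eq:baseline_adv}. Beyond that, the argument is purely order-theoretic bookkeeping.
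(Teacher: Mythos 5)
Your proof is correct and follows essentially the same route as the paper: monotonicity of $A\mapsto A^{\pm}$ under $\preceq_\nu$, the sandwich $A_\mathrm{min}\preceq_\nu A\preceq_\nu A_\mathrm{max}$ coming from \cref{prop:max_min_sol} via \cref{prop:ClosureUnderUnion}, and the chain $A_\mathrm{min}^-\subseteq A_\mathrm{min}^+\subseteq A_\mathrm{max}^+$; you also spell out the preliminary observation (stated in the paper just before the corollary) that $A^+$ and $A^-$ themselves satisfy \labelcref{eqn:PropertySupEsssup} and hence solve \labelcref{eq:baseline_adv}. The one imprecision is in that preliminary step: you verify only the $\sup$-half of \labelcref{eqn:PropertySupEsssup} for $A^+$ and then invoke symmetry for $A^-$, whereas both the $\sup$- and the $\inf$-conditions must be checked for each of $A^+$ and $A^-$ (both enter the formula for $\PrePer_\eps$); the missing checks are entirely analogous to the one you give, so this is a routine completion rather than a gap in the approach.
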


\begin{proof}
Notice that if $A \preceq_\nu A'$ we immediately have $A^+ \subseteq (A')^+$ and $A^- \subseteq (A')^-$.
Since we have $A_\mathrm{min} \preceq_\nu A \preceq_\nu A_\mathrm{max}$ for any solution $A$ of \labelcref{eq:adv_perimeter}, the first part of the corollary follows. 
The inclusion $A_\mathrm{min}^- \subseteq A_\mathrm{max}^+$ follows from $A_\mathrm{min}^- \subseteq A_\mathrm{min}^+ \subseteq A_\mathrm{max}^+$.
\end{proof}
\nc

\subsection{Regularity}
\label{sec:RegularitySolutions}

\first
The goal of this section is to prove that, in an Euclidean setting, it is possible to construct a smooth minimizer of the adversarial problem. We offer a direct construction, under which normal vectors of the boundary are Hölder continuous and shall prove the following statement. 

\begin{theorem}\label{thm:regular-sol}
Consider the case where $\domain = \R^d$, equipped with the standard Euclidean metric. Then for any $\eps>0$ there exists a minimizer $B\in\B(\Rd)$ to the adversarial problem \labelcref{eq:baseline_adv} which is locally the graph of a $C^{1,1/3}$ function.
\end{theorem}
\nc

\editscolor
We will first deduce a series of regularity properties of minimizers that, although not as strong as those in \cref{thm:regular-sol}, hold for general metric measure spaces $(\mathcal{X}, \de, \nu)$ satisfying \cref{ass:nu_eps}, before proceeding to the proof of \cref{thm:regular-sol}. 
\nc 
We start by introducing some fundamental concepts of mathematical morphology. In particular, we define the following important concepts from mathematical morphology (see, e.g., Chapter 2 in \cite{serra1986introduction}).
\begin{definition}[Morphology]
Let $A\subseteq\domain$ be a set and $\eps>0$. We define its
\begin{itemize}
    \item dilation as $A^{\eps}:= \{ x \in \domain \st \dist(x,A) < \eps \}$,
    \item erosion as $A^{-\eps}:= \{ x \in \domain \st \dist(x,A^c)\geq \eps \}$,
    \item closing as $\cl_\eps(A):=(A^\eps)^{-\eps}$,
    \item opening as $\op_\eps(A):=(A^{-\eps})^{\eps}$.
\end{itemize}
\end{definition}
Notice that all these sets are measurable as they are open or closed sets. 
In the following proposition we collect a couple of important properties of these operations, which can be proved in a straightforward way (see \cite{haralock1991computer}).
\begin{proposition}\label{prop:morphology}
The following statements hold true:
\begin{itemize}
    \item $\cl_\eps(A)$ is a closed set that contains $A$,
    \item $\op_\eps(A)$ is an open set contained in $A$,
    \item $\cl_\eps(A)^\eps=A^\eps$,
    \item $\op_\eps(A)^{-\eps}=A^{-\eps}$,
    \item $A^{-\eps} = ((A^c)^\eps)^c$,
    \item $\cl_\eps(A^c)=\op_\eps(A)^c$.
\end{itemize}
\end{proposition}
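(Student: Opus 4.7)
The plan is to verify each of the six items directly from the definitions, using as the only analytic input that $x \mapsto \dist(x,C)$ is continuous for every $C \subseteq \domain$, so that $\{\dist(\cdot,C)<\eps\}$ is open and $\{\dist(\cdot,C)\geq \eps\}$ is closed. With this, items (1) and (2) split into two parts each. Closedness of $\cl_\eps(A)=(A^\eps)^{-\eps}$ and openness of $\op_\eps(A)=(A^{-\eps})^\eps$ are immediate from the topological observation applied to $C=(A^\eps)^c$ and $C=A^{-\eps}$. For $A\subseteq \cl_\eps(A)$ I would argue: if $x\in A$ and $z\in (A^\eps)^c$, then $\dist(z,A)\geq \eps$, hence $\metric(x,z)\geq \eps$; taking the infimum yields $\dist(x,(A^\eps)^c)\geq \eps$. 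For $\op_\eps(A)\subseteq A$: given $x\in (A^{-\eps})^\eps$, pick $y\in A^{-\eps}$ with $\metric(x,y)<\eps$; since $\dist(y,A^c)\geq \eps$, no point of $A^c$ lies in $B_\eps(y)$, so $x\in A$.

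Item (5) is just an unpacking of definitions: $x\in A^{-\eps}$ iff $\dist(x,A^c)\geq \eps$ iff $x\notin (A^c)^\eps$. Item (6) then follows by applying (5) twice. Indeed, taking $B=(A^c)^\eps$ in (5) gives $\cl_\eps(A^c)=B^{-\eps}=((B^c)^\eps)^c$, and using (5) a second time on $A$ identifies $B^c=((A^c)^\eps)^c$ with $A^{-\eps}$, yielding $\cl_\eps(A^c)=((A^{-\eps})^\eps)^c=\op_\eps(A)^c$.

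The substantive step is (3); item (4) is its dual and I would either repeat the argument with $A^{-\eps}\subseteq \op_\eps(A)\subseteq A$ in place of $A\subseteq \cl_\eps(A)\subseteq A^\eps$, or deduce it from (3) by taking complements and invoking (5) and (6). For (3) the inclusion $A^\eps\subseteq \cl_\eps(A)^\eps$ is just monotonicity of dilation combined with $A\subseteq \cl_\eps(A)$. The nontrivial inclusion $\cl_\eps(A)^\eps\subseteq A^\eps$ is proved as follows: given $x\in \cl_\eps(A)^\eps$, choose $y\in (A^\eps)^{-\eps}$ with $\metric(x,y)<\eps$; the defining inequality $\dist(y,(A^\eps)^c)\geq \eps$ says precisely that $B_\eps(y)\cap (A^\eps)^c=\emptyset$, so $x\in A^\eps$.

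The main obstacle is the reverse inclusion in (3) (and dually in (4)): everything else is either monotonicity, De Morgan duality, or continuity of the distance function, but (3) and (4) encode the idempotency that makes morphological closing and opening well behaved, and so require the small geometric argument based on the identity $(A^\eps)^{-\eps}=\{y:B_\eps(y)\subseteq A^\eps\}$.
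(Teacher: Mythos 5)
Your proof is correct. The paper itself offers no argument for this proposition, stating only that the properties "can be proved in a straightforward way" and citing a textbook on mathematical morphology, so there is no paper proof to compare against. Your direct verification from the definitions is sound: items (1), (2), (5), (6) are handled by continuity of $\dist(\cdot,C)$, monotonicity, and De Morgan, and your argument for the nontrivial inclusion $\cl_\eps(A)^\eps \subseteq A^\eps$ in (3) — passing through $y \in (A^\eps)^{-\eps}$ so that $B_\eps(y) \subseteq A^\eps$ and then $x \in B_\eps(y)$ — is exactly the right geometric step, with (4) following by duality via (5) and (6). One remark worth keeping in mind when you write it up: the crucial identity you use implicitly, that $A^{-\eps} = \{y : B_\eps(y) \subseteq A\}$, relies on the dilation $A^\eps$ being defined with a strict inequality (open balls), so that $\dist(y, A^c) \geq \eps$ is equivalent to $B_\eps(y) \cap A^c = \emptyset$; this is consistent with the paper's conventions but is the kind of place a sign or strictness error could slip in.
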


The following definition of one-sided regularity of sets is strongly connected to the opening and closing procedures.
\begin{definition}[Inner and outer regularity]\label{def:inner_outer_reg}
A set $A\subseteq\domain$ is called $\eps$ inner regular relative to the metric $\metric$ if, for any point $x \in \partial A$ then there exists a point $y\in\domain$ so that $\metric(x,y) = \eps$ and $B_\eps(y) \subseteq A$. A set $A\subseteq\domain$ is called $\eps$ outer regular relative to the metric $\metric$ if instead we can always find such a $y\in\domain$ satisfying the inclusion $B_\eps(y) \subseteq A^c$.
\end{definition}
Note that by definition, for any set $A\subseteq\domain$, its closing $\cl_\eps(A)$ is $\eps$ outer regular whereas its opening $\op_\eps(A)$ is $\eps$ inner regular.
Furthermore, in $\domain=\Rd$ equipped with the Euclidean metric, it was shown in \cite{lewicka2020domains} that a set which is both $\eps$ inner and outer regular has a $C^{1,1}$ boundary.
\second 
A similar concept of regularity, called pseudo-certifiable robustness, is introduced and used in \cite{awasthi2021existence_extended}. 
There, a set $A$ is called pseudo-certifiably robust if every point in the set (or its complement) is an element of an $\eps$-ball contained in the set (or its complement).
It is easy to show that this notion of regularity implies inner and outer regularity in the sense of \cref{def:inner_outer_reg}.
\nc

We now show that the opening and closing operations do not increase the adversarial risk~\labelcref{eq:adv_risk}.
As a consequence, the operations turn minimizers into minimizers.

\begin{lemma}\label{lem:clos_op}
For $A\in\B(\domain)$ it holds
\begin{align*}
    \widetilde{R}_\eps(\cl_\eps(A) ) \leq \widetilde{R}_\eps(A),
    \qquad
    \widetilde{R}_\eps(\op_\eps(A) ) \leq \widetilde{R}_\eps(A).
\end{align*}
\end{lemma}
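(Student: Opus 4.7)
The plan is to first rewrite the adversarial risk $\widetilde{R}_\varepsilon$ in terms of the morphological operations of dilation and erosion, and then to read off the two inequalities from the properties of closing and opening collected in \cref{prop:morphology}.

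The key observation is that, for any $x\in\domain$,
\[
\sup_{\tilde x\in B_\varepsilon(x)}1_A(\tilde x) = 1 \iff B_\varepsilon(x)\cap A\neq\emptyset \iff \dist(x,A)<\varepsilon \iff x\in A^{\varepsilon},
\]
while
\[
1-\inf_{\tilde x\in B_\varepsilon(x)}1_A(\tilde x) = \sup_{\tilde x\in B_\varepsilon(x)}1_{A^c}(\tilde x) = 1_{(A^c)^\varepsilon}(x) = 1_{(A^{-\varepsilon})^c}(x),
\]
using the identity $A^{-\varepsilon}=((A^c)^\varepsilon)^c$ from \cref{prop:morphology}. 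Disintegrating $\mu$ as in \cref{prop:adv_risk_per} we therefore obtain the compact formula
\[
\widetilde{R}_\varepsilon(A) \;=\; w_0\,\rho_0(A^\varepsilon) \;+\; w_1\,\rho_1\!\left((A^{-\varepsilon})^c\right),
\]
which is the main reduction: the adversarial risk only sees $A$ through its dilation and erosion.

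Given this formula the two inequalities follow immediately from \cref{prop:morphology}. For the closing, $A\subseteq\cl_\varepsilon(A)$ implies $A^{-\varepsilon}\subseteq\cl_\varepsilon(A)^{-\varepsilon}$, hence $(\cl_\varepsilon(A)^{-\varepsilon})^c\subseteq(A^{-\varepsilon})^c$, while the identity $\cl_\varepsilon(A)^\varepsilon=A^\varepsilon$ shows the $\rho_0$-term is unchanged. Summing these contributions yields $\widetilde{R}_\varepsilon(\cl_\varepsilon(A))\leq\widetilde{R}_\varepsilon(A)$. For the opening, the roles are reversed: $\op_\varepsilon(A)\subseteq A$ gives $\op_\varepsilon(A)^\varepsilon\subseteq A^\varepsilon$ and hence a $\leq$ on the $\rho_0$-term, while $\op_\varepsilon(A)^{-\varepsilon}=A^{-\varepsilon}$ leaves the $\rho_1$-term unchanged.

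There is essentially no serious obstacle here; the only care required is in the translation between $\sup$/$\inf$ over open balls and the dilation/erosion sets (being consistent about the strict versus non-strict inequalities in their definitions so that $(A^c)^\varepsilon=(A^{-\varepsilon})^c$ genuinely holds as stated in \cref{prop:morphology}), after which the statement reduces to the two elementary identities $\cl_\varepsilon(A)^\varepsilon=A^\varepsilon$ and $\op_\varepsilon(A)^{-\varepsilon}=A^{-\varepsilon}$ already recorded.
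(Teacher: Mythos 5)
Your proof is correct and follows essentially the same approach as the paper's: the paper likewise rewrites the adversarial risk as $\widetilde R_\eps(A) = w_0\rho_0(A^\eps) + w_1 - w_1\rho_1(A^{-\eps})$ (which equals your $w_0\rho_0(A^\eps) + w_1\rho_1((A^{-\eps})^c)$) and then concludes via the identities $\cl_\eps(A)^\eps=A^\eps$, $\op_\eps(A)^{-\eps}=A^{-\eps}$ together with the monotonicity of dilation/erosion. The only cosmetic difference is that you keep the $\rho_1$-term as a measure of a complement rather than expanding $w_1\rho_1((A^{-\eps})^c) = w_1 - w_1\rho_1(A^{-\eps})$.
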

\begin{proof}
Using \cref{prop:morphology} we can rewrite the adversarial risk as follows:
\begin{align*}
    \widetilde R_\eps(A) 
    &=
    w_0 \int_\domain \sup_{B_\eps(x)} 1_A\de\rho_0(x) 
    + w_1
    \int_\domain \sup_{B_\eps(x)} 1_{A^c}\de\rho_1(x) 
    \\
    &=
    w_0\rho_0(A^\eps) + w_1\rho_1((A^c)^\eps)
    \\
    &=
    w_0\rho_0(A^\eps) + w_1 - w_1\rho_1(((A^c)^\eps)^c)
    \\
    &=
    w_0\rho_0(A^\eps) + w_1 - w_1\rho_1(A^{-\eps}).
\end{align*}
Using \cref{prop:morphology} again we get
\begin{align*}
    \widetilde R_\eps(\cl_\eps(A)) 
    &= 
    w_0\rho_0(\cl_\eps(A)^\eps) + w_1 - w_1\rho_1(\cl_\eps(A)^{-\eps})
    \\
    &\leq
    w_0\rho_0(A^\eps) + w_1 - w_1\rho_1(A^{-\eps}) = \widetilde R_\eps(A),
    \\
    \widetilde R_\eps(\op_\eps(A))
    &= 
    w_0\rho_0(\op_\eps(A)^{\eps}) + w_1 - w_1\rho_1(\op_\eps(A)^{-\eps})
    \\
    &\leq
    w_0\rho_0(A^\eps) + w_1 - w_1\rho_1(A^{-\eps}) = \widetilde R_\eps(A).
\end{align*}
\end{proof}
\begin{corollary}\label{cor:clos_op_minim}
Let $A\in\B(\domain)$ be a minimizer of \labelcref{eq:baseline_adv}.
Then $\op_\eps(A)$ and $\cl_\eps(A)$ are also minimizers.
\end{corollary}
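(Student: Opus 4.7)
The plan is to simply combine the risk-monotonicity lemma \cref{lem:clos_op} with the definition of a minimizer. Since $A$ is a minimizer of \labelcref{eq:baseline_adv}, we have $\widetilde R_\veps(A) \leq \widetilde R_\veps(B)$ for every $B \in \B(\domain)$. I would first observe that $\cl_\eps(A)$ and $\op_\eps(A)$ are legitimate competitors in the infimum defining \labelcref{eq:baseline_adv}: by \cref{prop:morphology} they are respectively closed and open subsets of $\domain$, hence Borel measurable. This is the only measurability point worth flagging explicitly.

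Next, I would apply the minimality of $A$ to these two specific competitors, yielding $\widetilde R_\veps(A) \leq \widetilde R_\veps(\cl_\eps(A))$ and $\widetilde R_\veps(A) \leq \widetilde R_\veps(\op_\eps(A))$. Combined with the reverse inequalities provided by \cref{lem:clos_op}, namely $\widetilde R_\veps(\cl_\eps(A)) \leq \widetilde R_\veps(A)$ and $\widetilde R_\veps(\op_\eps(A)) \leq \widetilde R_\veps(A)$, we conclude equality throughout. Thus both $\cl_\eps(A)$ and $\op_\eps(A)$ achieve the optimal risk and are themselves minimizers.

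There is no real obstacle here; the entire content of the corollary has already been absorbed into \cref{lem:clos_op}, and this statement is just a sandwich argument. The only conceptual point to make, if desired, is that this immediately produces minimizers enjoying one-sided regularity: $\cl_\eps(A)$ is $\veps$-outer regular and $\op_\eps(A)$ is $\veps$-inner regular, which sets up the subsequent investigation of $C^{1,1/3}$ or $C^{1,1}$ regularity via intersecting these two constructions on the maximal and minimal solutions.
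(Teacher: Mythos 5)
Your proof is correct and is exactly the intended argument: the paper states the corollary without proof as a direct consequence of \cref{lem:clos_op}, and your sandwich between minimality of $A$ and the monotonicity inequalities is the only possible reading. The explicit remark on Borel measurability of the opening and closing is a reasonable thing to flag, though the paper already notes it just after introducing the morphological operations.
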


We can now show that one can always construct a closed and $\eps$ outer regular maximal set and an open and $\eps$ inner regular minimal set which solves the adversarial problem \labelcref{eq:baseline_adv}.

\begin{proposition}
\editscolor Assume that $\nu$ is a finite measure.
\nc
There exist two solutions $A'_+$ and $A'_-$ to \labelcref{eq:baseline_adv} with the following properties:
\begin{enumerate}
    \item $A_-'  \subseteq A_+'$.
    \item $A'_+$ is a closed set and $A'_-$ is an open set.
    \item $A'_+$ is $\eps$ outer regular relative to the metric $\metric$ and $A'_-$ is $\eps$ inner regular with respect to the metric $\metric$.
\item $A_\mathrm{max} \sim_{\nu} A_+'$  and $A_\mathrm{min} \sim_{\nu} A_-'$.
\end{enumerate}
\end{proposition}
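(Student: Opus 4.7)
The plan is to build both regular solutions by post-processing the canonical representatives $A_{\mathrm{max}}^+$ and $A_{\mathrm{min}}^-$ introduced after \cref{prop:max_min_sol} using the morphological operations. Specifically, I would define
\begin{equation*}
A'_+ := \cl_\eps(A_{\mathrm{max}}^+), \qquad A'_- := \op_\eps(A_{\mathrm{min}}^-).
\end{equation*}
Property (2) is then immediate from \cref{prop:morphology}, and property (3) holds by the very definitions of closing and opening (any boundary point of $\cl_\eps(A)$ lies at distance exactly $\veps$ from the center of some ball in $A^c$, and dually for $\op_\eps$). Property (1) also drops out easily: by \cref{cor:max_min_sets} we have $A_{\mathrm{min}}^- \subseteq A_{\mathrm{max}}^+$, and combining with the inclusions $\op_\eps(A_{\mathrm{min}}^-) \subseteq A_{\mathrm{min}}^-$ and $A_{\mathrm{max}}^+ \subseteq \cl_\eps(A_{\mathrm{max}}^+)$ from \cref{prop:morphology} gives $A'_- \subseteq A'_+$.

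Next I would verify that $A'_+$ and $A'_-$ are in fact solutions of \labelcref{eq:baseline_adv}. By the discussion following \cref{cor:max_min_sets}, $A_{\mathrm{max}}^+$ and $A_{\mathrm{min}}^-$ solve \labelcref{eq:baseline_adv}, and then \cref{cor:clos_op_minim} ensures that applying $\cl_\eps$ or $\op_\eps$ preserves this property. So both $A'_+$ and $A'_-$ are solutions.

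The main work is property (4), which is what forces us to use $A_{\mathrm{max}}^+$ rather than an arbitrary representative of $A_{\mathrm{max}}$. The strategy has two ingredients. First, one observes that $A_{\mathrm{max}}^+ \sim_\nu A_{\mathrm{max}}$ and $A_{\mathrm{min}}^- \sim_\nu A_{\mathrm{min}}$; this is a consequence of the Lebesgue differentiation theorem applied to $\nu$ (which is valid by the Vitali condition in \cref{ass:nu_eps}), since for $\nu$-a.e. $x \in \supp\nu$ the density $\lim_{r \downarrow 0} \nu(B_r(x) \cap A)/\nu(B_r(x))$ equals $1_A(x)$, while $\supp\nu^c$ is $\nu$-null. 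Second, since $A'_+$ solves \labelcref{eq:baseline_adv}, combining \cref{prop:adv_risk_per} and \cref{prop:perimeter_problem} forces its $\sim_\nu$-equivalence class to be a solution of the relaxed problem \labelcref{eq:adv_perimeter}, and so by maximality of $A_{\mathrm{max}}$ we obtain $\nu(A'_+ \setminus A_{\mathrm{max}}) = 0$. The reverse inclusion $\nu(A_{\mathrm{max}} \setminus A'_+) = 0$ follows from the chain $A_{\mathrm{max}} \sim_\nu A_{\mathrm{max}}^+ \subseteq \cl_\eps(A_{\mathrm{max}}^+) = A'_+$. These two null-set inclusions yield $A'_+ \sim_\nu A_{\mathrm{max}}$. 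The argument for $A'_- \sim_\nu A_{\mathrm{min}}$ is entirely analogous, using minimality of $A_{\mathrm{min}}$ and the inclusion $A'_- = \op_\eps(A_{\mathrm{min}}^-) \subseteq A_{\mathrm{min}}^-$.

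The main obstacle is really only this last step: one must exclude the possibility that the closing inflates $A_{\mathrm{max}}^+$ into a strictly larger set on a non-$\nu$-negligible region, and dually that the opening deflates $A_{\mathrm{min}}^-$ too much. The key conceptual point is that both morphological operations keep us in the class of solutions of \labelcref{eq:baseline_adv} (by \cref{cor:clos_op_minim}), so these enlargements/shrinkings cannot escape the $\nu$-equivalence class of the maximal/minimal solution of the relaxed problem, thereby pinning the new sets to the right equivalence class while simultaneously upgrading their regularity.
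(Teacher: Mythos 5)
Your construction and argument coincide with the paper's own proof: same choice $A'_+ := \cl_\eps(A_{\mathrm{max}}^+)$, $A'_- := \op_\eps(A_{\mathrm{min}}^-)$, same reliance on \cref{prop:morphology}, \cref{cor:max_min_sets}, and \cref{cor:clos_op_minim} for properties (1)--(3), and the same maximality/minimality pinning argument for property (4), only slightly more spelled out. No substantive difference.
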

\begin{proof}
Let $A_+':= \cl_\eps(A_\mathrm{max}^+)$ and let $A_-':=\op_\eps(A_\mathrm{min}^-)$. 
Notice that by \cref{cor:clos_op_minim,cor:max_min_sets} we have:
\[  A_-' \subseteq A_\mathrm{min}^- \subseteq  A_\mathrm{max}^+ \subseteq A_+'.\]
(2) and (3) on the other hand follow directly from the definitions of $A'_\pm$ as closing and opening.
Finally, since $A_\mathrm{max}$ (and hence also $A_\mathrm{max}^+$) is a maximal solution of \labelcref{eq:adv_perimeter} and by definition $A'_+\supseteq A_\mathrm{max}^+$, it has to hold $A'_+\sim_{\nu}A_\mathrm{max}^+\sim_{\nu}A_\mathrm{max}$.
An analogous argument applies to $A'_-$.
\end{proof}

\begin{remark}
In the case where $\domain=\R^d$ under the standard Euclidean metric, one can directly conclude some mild regularity of maximal and minimal sets. For example, using the results in \cite{oleksiv1985finiteness} one may conclude that the boundaries of the maximal and minimal sets are sets of locally finite classical perimeter of order $\eps^{-1}$; see \cite{ambrosio2000functions} for a definition of classical perimeter. Similar results were examined in \cite{jog2021reverse}. Furthermore, at any point where curvatures are defined the outer (inner) regularity provides a uniform $\eps^{-2}$ upper (lower) bound on the sectional curvatures. However, as manifest in the example in \cref{fig:maximal-ex1}, the maximal and minimal sets need not have boundaries that are even graphs of functions at every point.
\end{remark}
\nc




\nc 

The next statement asserts that any intermediate set between the opening and the closing of a minimizer is again a minimizer.
\begin{proposition}\label{prop:intermediate}
Let $A\in\B(\domain)$ be a minimizer of \labelcref{eq:baseline_adv} and let $B\in\B(\domain)$ satisfy
\begin{align*}
    \op_\eps(A)\subseteq B \subseteq \cl_\eps(A).
\end{align*}
Then $B$ is also a minimizer of \labelcref{eq:baseline_adv}.
\end{proposition}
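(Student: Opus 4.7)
The plan is to reuse the reformulation of the adversarial risk in terms of dilations and erosions that appeared in the proof of \cref{lem:clos_op}, namely
\begin{align*}
    \widetilde R_\eps(C) = w_0\rho_0(C^\eps) + w_1 - w_1\rho_1(C^{-\eps}),\qquad C\in\B(\domain),
\end{align*}
and then exploit monotonicity of the dilation and erosion operations together with the identities $\cl_\eps(A)^\eps = A^\eps$ and $\op_\eps(A)^{-\eps} = A^{-\eps}$ listed in \cref{prop:morphology}.

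Concretely, I would first observe that if $C\subseteq D$ then $C^\eps\subseteq D^\eps$ and $C^{-\eps}\subseteq D^{-\eps}$: the first is immediate from the definition of dilation, while the second follows because $D^c\subseteq C^c$ implies $\dist(x,C^c)\leq\dist(x,D^c)$ pointwise. Applying these to the chain $\op_\eps(A)\subseteq B\subseteq\cl_\eps(A)$ gives
\begin{align*}
    B^\eps \subseteq \cl_\eps(A)^\eps = A^\eps,\qquad A^{-\eps} = \op_\eps(A)^{-\eps}\subseteq B^{-\eps}.
\end{align*}

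Plugging this into the risk formula yields
\begin{align*}
    \widetilde R_\eps(B)
    = w_0\rho_0(B^\eps) + w_1 - w_1\rho_1(B^{-\eps})
    \leq w_0\rho_0(A^\eps) + w_1 - w_1\rho_1(A^{-\eps})
    = \widetilde R_\eps(A).
\end{align*}
Since $A$ is a minimizer of \labelcref{eq:baseline_adv}, the reverse inequality is automatic, so $\widetilde R_\eps(B) = \widetilde R_\eps(A)$ and $B$ is a minimizer as well.

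There is essentially no obstacle here: the only point worth double-checking is the monotonicity direction for erosions and the correct use of the formulas from \cref{prop:morphology}; both were already implicitly used in the proof of \cref{lem:clos_op}, so the argument amounts to running that computation for $B$ sandwiched between $\op_\eps(A)$ and $\cl_\eps(A)$.
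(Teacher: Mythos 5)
Your proof is correct and uses exactly the same machinery as the paper: the morphological reformulation $\widetilde R_\eps(C) = w_0\rho_0(C^\eps) + w_1 - w_1\rho_1(C^{-\eps})$ from the proof of \cref{lem:clos_op}, the identities $\cl_\eps(A)^\eps = A^\eps$ and $\op_\eps(A)^{-\eps} = A^{-\eps}$ from \cref{prop:morphology}, and monotonicity of dilation and erosion. In fact your version is logically tighter than the paper's. The paper's proof asserts the set-theoretic equality $\cl_\eps(A)^{-\eps} = \op_\eps(A)^{-\eps}$ (and hence that $B^{-\eps}$ is literally constant over all admissible $B$), but this is not one of the identities listed in \cref{prop:morphology} and does not hold as a pointwise identity for general Borel sets: in $\R$ with $\eps=1$ and $A=[0,1]\cup\{2\}$ one has $A^{-\eps}=\emptyset$ while $\cl_\eps(A)^{-\eps}=[0,2]^{-\eps}=\{1\}$. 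You sidestep this by only establishing the one-sided inclusions $B^\eps\subseteq A^\eps$ and $A^{-\eps}\subseteq B^{-\eps}$, then invoking minimality of $A$ to upgrade $\widetilde R_\eps(B)\leq\widetilde R_\eps(A)$ to equality --- which is precisely what is needed and requires no appeal to \cref{cor:clos_op_minim}. So the two arguments run the same computation, but your reliance on inclusions plus minimality is the cleaner and more robust route.
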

\begin{proof}
We abbreviate $\hat A = \cl_\eps(A)$ and $\tilde A = \op_\eps(A)$ and notice that $\tilde A \subseteq \hat A$. Furthermore, we notice that by the definition of closing and opening we have
\[
(\hat A)^{-\eps} = (\tilde A)^{-\eps},
\]
which in turn implies that for any set $B\in\B(\domain)$ with $\tilde A \subseteq B \subseteq \hat A$ it holds $B^{-\eps} = (\hat A)^{-\eps}$. Similarly, we have $\tilde A^\eps \subseteq B^\eps \subseteq (\hat A)^\eps$.
We then note that as in the proof of \cref{lem:clos_op}
\begin{align*}
    \widetilde R_\eps(A) 
    = 
    w_0\rho_0(A^\eps) + w_1\rho_1((A^{-\eps})^c).
\end{align*}
However, given the previous set inclusions, and the fact that thanks to \cref{cor:clos_op_minim} it holds $\widetilde R_\eps(\hat A) = \widetilde R_\eps(\tilde A)$, this then gives that $\widetilde R_\eps(B) = \widetilde R_\eps(\tilde A)$ or in other words $B$ also minimizes the adversarial risk. 
\end{proof}
\editscolor
It might be tempting to think that one could just consider the opening of the closing (or vice versa) of a set to generate a minimizer which is both outer and inner regular.
However, that this approach fails in general, as the following example shows:
\begin{example}
In this example we consider the set $A$, given by the union of two balls with radius $\eps>0$ with two non-convex triangles, as depicted in \cref{fig:smooth_balls}.
The set can be defined as $A:=\cl_\eps\left(B_\eps(-\eps,0)\cup B_\eps(\eps,0)\right)$.
It satisfies $\op_\eps(A)=B_\eps(-\eps,0)\cup B_\eps(\eps,0)$ and hence $\cl_\eps(\op_\eps(A)) = A$.
Still, it is not inner regular since the boundary points which are contained in the two triangles do not possess a touching ball with radius $\eps$ that is contained in $A$.
\begin{figure}[hbt]
    \centering
    \includegraphics[width=0.5\textwidth]{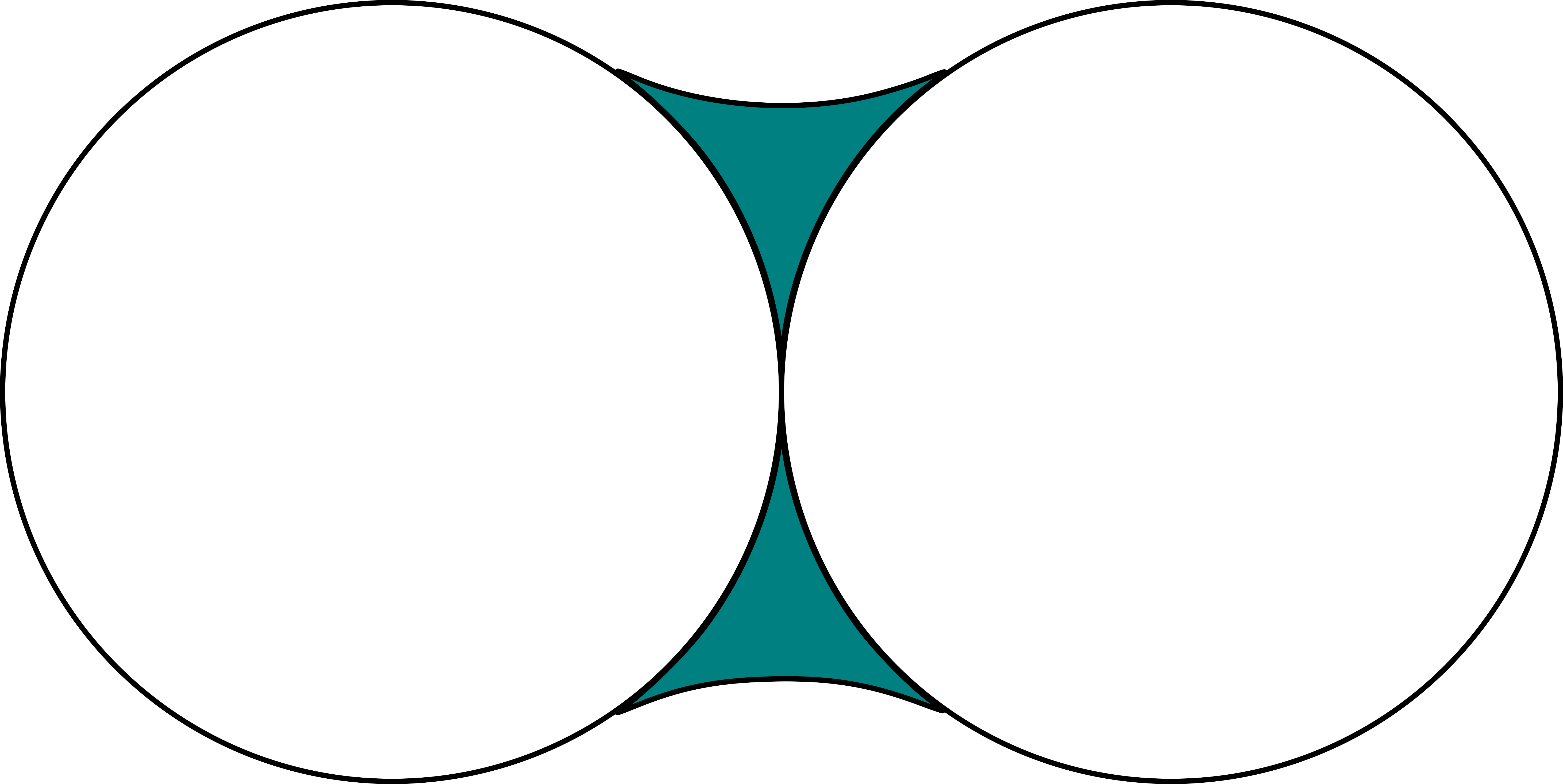}
    \caption{A set which satisfies $\cl_\eps(\op_\eps(A))=A$ but is not inner regular.}
    \label{fig:smooth_balls}
\end{figure}
\end{example}
\nc

\first
The previous example demonstrates that it is not possible to generate $\e$-regular sets by solely utilizing the opening and closing of a set. The example shown in \cref{fig:maximal-ex1} demonstrates that the maximal and minimal sets need not have boundaries that are even locally the graph of a function. Finally, revisiting \cref{fig:four-balls} in \cref{ex:four-balls} shows that in some cases there is no adversarial minimizer which is both $\e$ inner and outer regular: indeed, a minimizer of that problem can be at most $(\sqrt{2}-1)\e \approx 0.41 \e$ inner and outer regular. Hence, care must be taken in order to demonstrate the existence of a regular minimizer to the adversarial classification problem.

\first 
We now proceed to prove our central regularity result, \cref{thm:regular-sol}.
Note that, although generating an inner and outer regular minimizer through morphological operations is not possible, it \second is \nc plausible that one could construct a minimizer which is more regular (for example, possessing a $C^{1,1}$ boundary), but we leave that question to later work. 
\nc
\begin{proof}[Proof of \cref{thm:regular-sol}]

Again we abbreviate $\hat A = \cl_\eps(A)$ and $\tilde A = \op_\eps(A)$. 
We notice that $\hat A$ is $\eps$ outer regular, while $\tilde A$ is $\eps$ inner regular, and that $\tilde A \subseteq \hat A$. 
Thanks to \cref{prop:intermediate} any $B\in\B(\Rd)$ with $\tilde A \subseteq B \subseteq \hat A$ is a minimizer, see \cref{fig:sets} for an illustration.
We now turn to constructing $B$ with the desired properties.

We recall (cf.~\cite[Section 13.1]{leoni2017first} or originally in \cite{lieberman1985regularized}) that for any open set $V$ there exists a \emph{regularized (signed) distance function} $d_r \in C^{\infty}((\partial V)^c)$ satisfying
\begin{equation}\label{eqn:reg-dist-prop}
\frac{1}{2} \leq \frac{d_r(x)}{\bar d(x,V)} \leq \frac{3}{2},\qquad |\partial^\alpha d_r(x)| \leq \frac{c_\alpha}{|d_r(x)|^{|\alpha|-1}}.
\end{equation}
Here $\bar d(x,V)$ is the signed distance with respect to the Euclidean metric, namely
\[
\bar d(x,V) := \begin{cases}
\inf_{y \in V} \abs{x-y} &\text{ if } x \in V^c \\
-\inf_{y \in V^c} \abs{x-y} &\text{ if } x \in V
\end{cases}.
\]
As we will need to check a more detailed property of the function $d_r$, we briefly give its definition: We let $\phi\in C_c^\infty(\R^d)$ be a non-negative function with support on the unit ball and integral $1$. We define
\[
G(x,t) = \int_{\R^d} \bar d\left(x- t\frac{y}{2},V\right) \phi(y) \de y.
\]
One can show that there exists a unique solution to $G(x,t) = t$, and we let $d_r$ be that unique solution, namely $G(x,d_r(x)) = d_r(x)$. \second Indeed, as proved in \cite{lieberman1985regularized}, this follows from Banach fixed point theorem and the fact that $G(x, \cdot)$ is Lipschitz with Lipschitz constant strictly less than 1.  \nc

\begin{figure}[thb]
    \centering
    \begin{tikzpicture}
    \node[inner sep=0pt,anchor=center] (sets) at (0,0)
    {\includegraphics[width=.7\textwidth]{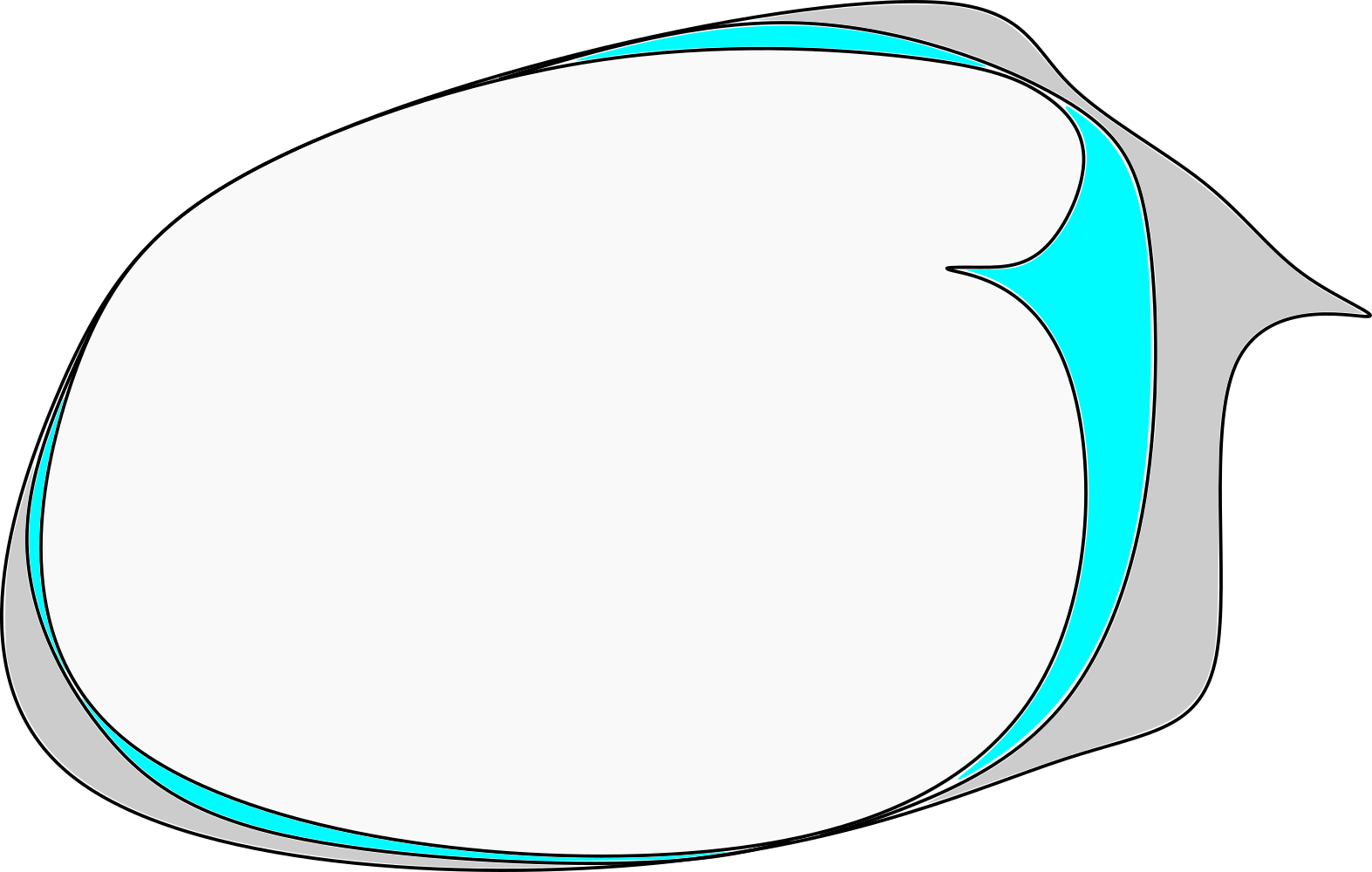}};
    \node at (sets) {\nc$\tilde A$};
    \node at (4.5,1.2) {\nc$\hat A$};
    \node at (3.2,1.2) {\nc$B$};
    \end{tikzpicture}
    \caption{Nested sets $\tilde A\subseteq B\subseteq\hat A$ in the proof of \cref{thm:regular-sol}. $\tilde A$ is inner regular and $\hat A$ is outer regular. The whole region between $\partial\tilde A$ and $\partial\hat A$ is $(\tilde A\cup\hat A^c)^c$ and the boundary of $B$ is by construction a smooth curve contained in that set.}
    \label{fig:sets}
\end{figure}

We let $\tilde d_1$ and $\tilde d_2$ be regularized distance functions for the sets $\tilde A$ and $\hat A^c$ respectively. By considering the function ${\tilde d_1}/{\tilde d_2}$ we may use Sard's theorem, which applies as the function is $C^\infty$, to find a $\kappa \in [1/2,2]$ so that $\kappa$ is a regular value of ${\tilde d_1}/{\tilde d_2}$ on the set $(\tilde A \cup \hat A^c)^c$.
here we recall that a regular value of a function is one so that the gradient does not vanish on the entire set $\{x\in\Rd\st{\tilde d_1(x)}/{\tilde d_2(x)} = \kappa\}$.
Our candidate set $B$ will now be $B:=\{x\in\Rd\st\tilde d_1(x) \leq \kappa \tilde d_2(x)\}$. 
Due to the first part of \labelcref{eqn:reg-dist-prop} we know that the signs of the original distance functions to the sets $\tilde A $, $\hat{A}^c$ and the signs of their regularized versions coincide\nc. From this observation it is now straightforward to see that $\tilde A \subseteq B \subseteq \hat A$, and hence $B$ is a minimizer of the adversarial problem according to \cref{prop:intermediate}: 
Thanks to the fact that $\kappa$ is a regular value, anywhere in the interior of  $(\tilde A \cup \hat A^c)^c$ we may express the boundary of $B$ as the graph of a $C^\infty$ function. In light of the main result in \cite{lewicka2020domains}, we also have that this set is locally the graph of a $C^{1,1}$ function away from $(\tilde A \cup \hat A^c)^c$. Thus it only remains to check the regularity up to the boundary points of $(\tilde A \cup \hat A^c)^c$.

To this end, we need to establish regularity estimates on $\nabla \tilde d_1$ and $\nabla \tilde d_2$ which hold uniformly at points on the boundary of $B$ near where the boundaries of $\tilde A$ and $\hat A$ coincide. To begin, we notice that
\[
\partial_i d_r(x) = \frac{\int_{\R^d} \partial_i \bar d\left(x-d_r(x)\frac{y}{2},V\right) \phi(y)\de y}{1 + \frac{1}{2}\int_{\R^d} \partial_i \bar d\left(x-d_r(x)\frac{y}2,V\right) \phi(y)\de y}.
\]
Using the fact that $|\nabla \bar d| \leq 1$ a.e. and that $z\mapsto\frac{z}{1+z/2}$ is uniformly Lipschitz on $[-1, \infty)$, 
it then suffices to estimate the continuity of $x\mapsto\int_{\Rd} \nabla \bar d(x-d_r(x)\frac{y}{2},V) \phi(y)\de y$. To this end, let us consider $x_1,x_2$ in the set where $1/2 < \frac{\tilde d_1}{\tilde d_2}<2$. For such points, let us denote
\[
D(x_1,x_2) =\min(\tilde d_1(x_1),\tilde d_2(x_1),\tilde d_1(x_2),\tilde d_2(x_2)).
\]
We notice that, by the choice of $x_1,x_2$, it holds that $D(x_1,x_2)\geq 0$ and
\begin{equation}\label{eqn:max-est}
2 D(x_1,x_2) \geq \max(\tilde d_1(x_1),\tilde d_2(x_1),\tilde d_1(x_2),\tilde d_2(x_2)).
\end{equation}
We consider separately two cases. First, if
\[
\eps^{\alpha}|x_1-x_2|^{\alpha} \leq D(x_1,x_2),
\]
we then use the classical estimate \labelcref{eqn:reg-dist-prop} to show that
\[
|\nabla d_r(x_1) - \nabla d_r(x_2)| \leq C\frac{|x_1-x_2|}{\min(\tilde d_1(x_1),\tilde d_2(x_1),\tilde d_1(x_2),\tilde d_2(x_2))} \leq C\eps^{-\alpha}|x_1-x_2|^{1-\alpha}.
\]
On the other hand, for the opposite case where
\[
\eps^\alpha|x_1-x_2|^{\alpha} \geq D(x_1,x_2),
\]
by using the estimate from \cref{lem:extended-spheres} below along with equation \labelcref{eqn:max-est} and the fact that $\phi$ has compact support we then may deduce that
\begin{align*}
|\nabla d_r(x_1) - \nabla d_r(x_2)| 
&\leq 
C
\left| 
\int_{\Rd} \nabla \bar d\left(x_1-d_r(x_1)\frac{y}{2},V\right) \phi(y)\de y 
- 
\int_{\Rd} \nabla \bar d\left(x_2-d_r(x_2)\frac{y}{2},V\right) \phi(y)\de y \right| 
\\
&\leq 
C
\int_{\Rd} 
\eps^{-1}\left(\abs{x_1-d_r(x_1)\frac{y}{2} - x_2+d_r(x_2)\frac{y}{2}} + 2 D(x_1,x_2) \right) \phi(y) \de y \\
&\leq C\eps^{-1} \left( |x_1 -x_2| + \sqrt{D(x_1,x_2)} \right) \\
&\leq C\eps^{-1} |x_1-x_2| + C\eps^{\alpha/2-1} |x_1-x_2|^{\alpha/2}.
\end{align*}
Setting $\alpha=2/3$ and applying the result to the regularized distance functions $\tilde d_1,\tilde d_2$ then establishes the fact that the function defining $\partial B$ is uniformly $C^{1,1/3}$, even up to the boundary, concluding the proof.
\end{proof}

We notice that in the previous proof the dependence on $\eps$ in the continuity estimates near the boundary is explicit, and improves as $\eps$ increases. This intuitively makes sense, and although our current estimates in the ``interior'' of our bad region do not give explicit dependence upon $\eps$, it seems plausible that the dependence on $\eps$ should be good. 

We now give the central geometric lemma used in the proof of \cref{thm:regular-sol}.

\begin{lemma}\label{lem:extended-spheres}
Let $\tilde A\subseteq\Rd$ be $\eps$ inner regular, let $\hat A\subseteq\Rd$ be $\eps$ outer regular, and let $\tilde A \subseteq \hat A$. Let $x,y\in\hat A\setminus \tilde A$, and let both be points of differentiability of the distance function from both $\tilde A$ and $\hat A$. Define
\[
D(x,y) := \max(d(x,\tilde A),d(x,\hat A),d(y,\hat A),d(y,\tilde A))
\]
Then
\[
|\nabla d(x,\tilde A) - \nabla d(y,\tilde A)| \leq C\eps^{-1}\left( |x-y| + \sqrt{D(x,y)} \right),
\]
at any points $x,y$ where the distance is differentiable (which holds a.e. by Rademacher's theorem).
\end{lemma}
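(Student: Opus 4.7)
The plan is to use inner regularity of $\tilde A$ to identify inscribed balls encoding $\nabla d(\cdot, \tilde A)$, then exploit outer regularity of $\hat A$ to control their relative position. Set $v_x := \nabla d(x, \tilde A)$, $v_y := \nabla d(y, \tilde A)$, $a := \e + d(x, \tilde A)$, $b := \e + d(y, \tilde A)$. Since the distance is differentiable at $x$ and $y$, the projections $p_x, p_y$ onto $\tilde A$ are unique, and inner regularity supplies inscribed balls $B_\e(z_x), B_\e(z_y) \subseteq \tilde A$ with $z_x := p_x - \e v_x$, $z_y := p_y - \e v_y$, so that $x - z_x = a v_x$ and $y - z_y = b v_y$. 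From $x, y \notin \tilde A \supseteq B_\e(z_y) \cup B_\e(z_x)$ one gets $|x - z_y| \geq \e$ and $|y - z_x| \geq \e$, and in particular $|a-b| \leq |x-y|$.

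Subtracting the two defining relations yields the key identity $a v_x - b v_y = (x - y) - (z_x - z_y)$. Squaring, using $|av_x - bv_y|^2 = a^2 + b^2 - 2ab\, v_x\cdot v_y$ together with $a, b \geq \e$, one derives
\[
|v_x - v_y|^2 \;=\; \frac{|(x-y) - (z_x - z_y)|^2 - (a-b)^2}{ab} \;\leq\; \frac{2\bigl(|x-y|^2 + |z_x - z_y|^2\bigr)}{\e^2}.
\]
So it suffices to show $|z_x - z_y| \leq C\bigl(|x-y| + \sqrt{\e D}\bigr)$; once divided by $\e$, the $\sqrt{\e D}$ produces the claimed $\sqrt{D}$ in the statement. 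The inscribed-ball inequalities $|x - z_y|, |y - z_x| \geq \e$ alone only bound $v_x \cdot (z_x - z_y)$ and $v_y \cdot (z_x - z_y)$ up to errors of order $|z_x - z_y|^2/\e$, which is self-referential and therefore insufficient on its own.

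The main obstacle is breaking this circularity, and this is where the outer regularity of $\hat A$ is essential. Applying the dual construction to $\hat A^c$ (which is $\e$-inner regular since $\hat A$ is $\e$-outer regular), one obtains outscribed balls $B_\e(w_x), B_\e(w_y) \subseteq \hat A^c$ with $x - w_x = c_x u_x$, where $u_x := \nabla d(x, \hat A^c)$ and $c_x := \e + d(x, \hat A^c)$, and analogously at $y$. The inclusion $\tilde A \subseteq \hat A$ forces $B_\e(z_\ast) \cap B_\e(w_\ast) = \emptyset$ for every combination, so $|z_\ast - w_\ast| \geq 2\e$. Expanding $|z_x - w_x|^2 = |c_x u_x - a v_x|^2 \geq 4\e^2$ with $a + c_x \leq 2\e + 2D$ gives $1 + v_x \cdot u_x \leq CD/\e$, hence the near-antipodality $|v_x + u_x|\leq C\sqrt{D/\e}$, and likewise $|v_y + u_y| \leq C\sqrt{D/\e}$. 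Substituting $v_x = -u_x + O(\sqrt{D/\e})$ into $z_x = x - a v_x$ and using $|a - c_x| \leq D$, one can write $z_x = 2x - w_x + O(\sqrt{\e D})$, so that
\[
|z_x - z_y| \;\leq\; 2|x - y| + |w_x - w_y| + O(\sqrt{\e D}).
\]
Combining this with the dual disjointness estimates $|z_x - w_y|, |z_y - w_x| \geq 2\e$ (which, after symmetric expansion exploiting the $v \leftrightarrow u$ duality $(\tilde A, \hat A) \leftrightarrow (\hat A^c, \tilde A^c)$, bound $|w_x - w_y|$ by $|z_x - z_y|$ plus terms of order $|x-y|$ and $\sqrt{\e D}$) yields a coupled pair of linear inequalities in $|z_x - z_y|$ and $|w_x - w_y|$ that resolves algebraically to $|z_x - z_y| \leq C(|x - y| + \sqrt{\e D})$. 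Feeding this into the estimate from the second paragraph completes the proof.
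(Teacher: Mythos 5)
Your reduction in the first two paragraphs is correct and elegant: from $x - z_x = a v_x$, $y - z_y = b v_y$ you get the clean identity
\[
|v_x - v_y|^2 \;=\; \frac{|(x-y)-(z_x-z_y)|^2-(a-b)^2}{ab} \;\le\; \frac{2\bigl(|x-y|^2 + |z_x-z_y|^2\bigr)}{\e^2},
\]
so the whole lemma reduces to proving $|z_x - z_y| \le C(|x-y| + \sqrt{\e D})$. The near-antipodality step in the third paragraph is also sound: the disjointness $|z_x - w_x| \ge 2\e$ combined with $a,c_x \in [\e, \e+D]$ gives $1 + u_x\cdot v_x \le \tfrac{(a+c_x-2\e)(a+c_x+2\e)}{2ac_x} \lesssim D/\e$ when $D \lesssim \e$, hence $|v_x + u_x| \lesssim \sqrt{D/\e}$ and $z_x = 2x - w_x + O(\sqrt{\e D})$. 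All of this is a genuinely different, more self-contained route than the paper's proof, which instead dilates the inscribed balls until they touch the escribed balls and invokes the ``four-ball lemma'' of \cite{lewicka2020domains} as a black box.

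The gap is in the final step. The relation $z_x - z_y = 2(x-y) - (w_x - w_y) + E$ with $|E| \lesssim \sqrt{\e D}$ is a single vector identity, and the two scalar inequalities you extract from it,
\[
|z_x - z_y| \le 2|x-y| + |w_x - w_y| + |E|, \qquad |w_x - w_y| \le 2|x-y| + |z_x - z_y| + |E|,
\]
are the same inequality read in two directions: feeding one into the other cancels the unknown with coefficient exactly $1$ and yields only $0 \le 4|x-y| + 2|E|$, which is vacuous. The $v\leftrightarrow u$ duality cannot help here because, by symmetry, it reproduces precisely the same relation with $z$ and $w$ swapped; there is no asymmetry to generate a contraction. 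The cross disjointness $|z_x - w_y|, |z_y - w_x| \ge 2\e$ that you invoke does not supply the missing leverage either: after expanding $|z_x - w_y|^2 \ge 4\e^2$ with $z_x - w_y = (z_x - z_y) + (z_y - w_y)$ and $|z_y - w_y| \in [2\e, 2\e + 2D]$, one obtains only a lower bound of the form $|\delta z|^2 + 2\,\delta z\cdot r \ge -O(\e D)$, which is trivially satisfied for large $|\delta z|$ and therefore cannot produce the required upper bound on $|z_x - z_y|$. In short, nothing in your argument rules out the inscribed centers $z_x, z_y$ being far apart while all your constraints hold, and this is precisely the content that the paper obtains from the four-ball lemma (which exploits the full four-ball configuration and the touching points $\bar x, \bar y$ to get a Lipschitz bound on the directions). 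Either you need to import that lemma, or you need a new geometric inequality that couples $|z_x - z_y|$ to $|x-y|$ directly rather than through the degenerate linear system.

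Two smaller issues worth recording. First, your estimate $1 + u_x\cdot v_x \lesssim D/\e$ and the ensuing algebra all require $D \lesssim \e$; without this you pick up $D^2/\e^2$ terms. This is consistent with the lemma's intended use (it is applied in the regime where $D$ is small), but the lemma as stated has no such hypothesis, so you should either impose it or track the extra terms. Second, the definition of $D$ in the statement uses $d(x,\hat A)$, which vanishes for $x \in \hat A$; the proof (both yours and the paper's) in fact needs $d(x, \hat A^c)$, so your introduction of $c_x = \e + d(x,\hat A^c)$ is the right interpretation but should be flagged as a correction of the notation.
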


\begin{proof}
This lemma is a direct extension of the work in \cite{lewicka2020domains} to our setting, and this proof expands upon the four ball lemma given therein. We recall that the gradient of the distance function is given by the unit vector pointing away from the closest point in the set. Let $u_x = \nabla d(x,\tilde A)$,  $u_y = \nabla d(y,\tilde A)$, $v_x = \nabla d(x,\hat A)$, and $v_y = \nabla d(y,\hat A)$. Let $\tilde x,\tilde y$ be the closest points in $\tilde A$ to $x$ and $y$, and let $\hat x,\hat y$ be the closest points in $\hat A$ to $x$ and $y$. By the regularity conditions, we know that there are four balls 
\begin{align*}
\tilde B_x = B_\eps(\tilde x-\eps u_x), \qquad
\tilde B_y = B_\eps(\tilde y-\eps u_y), \qquad
\hat B_x = B_\eps(\hat x-\eps v_x), \qquad
\hat B_y = B_\eps(\hat y-\eps v_y),
\end{align*}
which satisfy $\tilde B_i \cap \hat B_j = \emptyset$ for any $i,j \in \{x,y\}$, and so that $\tilde x,\tilde y$ do not belong to either $\tilde B_x$ or $\tilde B_y$. We also notice that $|\tilde x - \hat x| \leq 2\max(d(x,\tilde A),d(x,\hat A))$. 

We then choose the smallest positive values of $\tilde \delta_x$ and $\tilde \delta_y$ so that boundaries of the dilations $(\tilde B_x)^{\tilde \delta_x}$ and $(\tilde B_y)^{\tilde \delta_y}$ touch the boundaries  of either $\hat B_x$ or $\hat B_y$ at exactly one point. From here on we will assume that $(\tilde B_x)^{\tilde \delta_x}$ touches $\hat B_x$ and $(\tilde B_y)^{\tilde \delta_y}$ touches $\hat B_y$, as the other cases may be handled analogously. Call the points where those boundaries coincide $\bar x$ and $\bar y$. We note that clearly $\tilde \delta_x \leq 2 \max(d(x,\tilde A),d(x,\hat A)$ and $\tilde \delta_y \leq 2 \max(d(y,\tilde A),d(y,\hat A)$.

We may directly apply the four ball lemma from \cite{lewicka2020domains} to conclude that the unit vectors $\bar u_x,\bar u_y$ from the center of each ball to $\bar x$ and $\bar y$ satisfy
\begin{equation}\label{eqn:4-ball}
|\bar u_x - \bar u_y| \leq C\eps^{-1} |\bar x- \bar y|.
\end{equation}
It then remains only to bound the difference between the ``bar'' variables and the original ones.

Let $\bar u_x$ be the unit vector pointing from the center of $(\tilde B_x)^{\tilde \delta_x}$ to $\bar x$. We then may compute
\[
\cos( \theta(\bar u_x,u_x))(\eps + d(x,\tilde A)) + \cos( \theta(-\bar u_x,v_x)(\eps + d(x,\hat A)) = 2(\eps + \tilde \delta_x),
\]
where we are letting $\theta(\cdot,\cdot)$ denote the angle between the vectors. We may conclude that $\theta(-\bar u_x,v_x)$ and $\theta(\bar u_x,u_x)$ are bounded by a constant times $\sqrt{D(x,y)}$. By using the law of cosines we may compute that $|u_x-\bar u_x| < C\eps^{-1} \sqrt{D(x,y)}$ and that $|\bar x - x| < C\sqrt{D(x,y)}$. Using the triangle inequality and combining with \labelcref{eqn:4-ball} then concludes the proof.



\end{proof}

\section{Other Adversarial Models}
\label{sec:OtherModels}

We finish the paper with a couple of generalizations and a discussion on similar adversarial models, some of which also give rise to $L^1+\TV$ problems. In this section we keep the discussion rather formal in order to not distract from the main messages that we want to convey. We also do not make any attempt to interpret or expound upon these models: the goal is simply to identify alternative adversarial models which have analogous variational forms.

\subsection{Regression Problems}
Instead of studying binary classification one can also study adversarial regression problems of the form
\begin{align}
    \inf_{u\in L^1(\domain;\rho)}\Exp{(x,y)\sim\mu}{\sup_{\tilde x\in B_\eps(x)}\abs{u(\tilde x)-y}},
\end{align}
where $y$ can now take any real value.
Subtracting the empirical risk one can easily show that this problem can be reformulated as 
\begin{align}\label{eq:regression_model}
    \inf_{u\in L^1(\domain;\rho)}\Exp{(x,y)\sim\mu}{\abs{u(x)-y}}+\eps\PreTV_\eps(u;\mu),
\end{align}
where the total variation is now given by
\begin{align}\label{eq:TV_regression}
    \begin{split}
    \PreTV_\eps(u;\mu) 
    &= \frac{1}{\eps} \iint_{\domain\times\mathcal Y}\sup_{\tilde x\in B_\eps(x)}\left[\abs{u(\tilde x)-y}-\abs{u(x)-y}\right]\de\mu(x,y)
    \\
    &=
    \frac{1}{\eps}
    \int_\domain \int_{\pi_1^{-1}(x)}\sup_{\tilde x\in B_\eps(\xi)}\left[\abs{u(\tilde x)-y}-\abs{u(\xi)-y}\right]\de\mu_x(\xi,y)\de\rho(x).
    \end{split}
\end{align}
In the disintegrated formulation, $\pi_1:\domain\times\mathcal{Y}\to\domain$ denotes the projection onto the first factor, $\rho:=(\pi_1)_\sharp\mu$ is the first marginal of $\mu$, and $(\mu_x)_{x\in\domain}\subseteq\P(\domain\times\mathcal{Y})$ is a family of disintegrations of $\mu$.

As before, one has to define an essential version of this total variation to have a well-defined functional and introduce the measure $\nu$ as in \cref{sec:analysis}.
The analysis performed there can be generalized to this regression setting; however, in the regression context the interpretation of the associated perimeter is not obvious. 
Indeed, \labelcref{eq:TV_regression} is a highly data-dependent convex regularization functional. This provides theoretical motivation for recent work which studies data-driven convex regularizers for solving inverse problems; see, e.g., \cite{mukherjee2020learned}.
To make this connection a bit clearer we assume for simplicity that the data $y$ are given by $f(x)$.
In this case, \labelcref{eq:regression_model} reduces to the simpler formula
\begin{align}
    \inf_{u\in L^1(\domain;\rho)}
    \left\lbrace
    \int_\domain \abs{u(x)-f(x)}\de\rho(x) + \int_\domain \sup_{\tilde x\in B_\eps(x)}\left[\abs{u(\tilde x)-f(x)}-\abs{u(x)-f(x)}\right]\de\rho(x)
    \right\rbrace.
\end{align}

\subsection{Random Perturbation}

Let us consider random perturbations, i.e.,
\begin{align}\label{eq:random_pert}
   \inf_{A\in\B(\domain)}\Exp{(x,y)\sim\mu}{\Exp{\tilde x \sim \nu_{x}}{\abs{1_A(\tilde x)-y}}} 
\end{align}
where ``nature'' chooses $\tilde x$ randomly following the law of a family of probability measures $(\nu_{x})_{x\in\domain
}$. One natural candidate for such $\nu_x$ would be associated with a random walk \cite{mazon2020total}.
Since in this setting there is no adversarial attack in the game-theoretic sense, which would involve some sort of min-max structure, one cannot rewrite this problem as a variational regularization problem.
Indeed, subtracting the empirical risk $\Exp{(x,y)\sim\mu}{\abs{1_A(x)-y}}$ from the objective does not yield a non-negative term.

However, in the case that $\domain$ is a vector space, we can use the law of total expectation (disintegration) and a change of variables to obtain
\begin{align*}
    &\phantom{=}
    \Exp{(x,y)\sim\mu}{\Exp{\tilde x \sim \nu_{x}}{\abs{1_A(\tilde x)-y}}} 
    \\
    &= 
    w_0\int_\domain\int_{\domain}1_A(\tilde x)\de\nu_{x}(\tilde x)\de\rho_0(x) + 
    w_1\int_\domain\int_{\domain}1_{A^c}(\tilde x)\de\nu_{x}(\tilde x)\de\rho_1(x) 
    \\
    &=
    w_0\int_\domain\int_{\domain}1_A(x+\tilde x)\de(T_x)_\sharp\nu_{x}(\tilde x)\de\rho_0(x) + 
    w_1\int_\domain\int_{\domain}1_{A^c}(x+\tilde x)\de(T_x)_\sharp\nu_{x}(\tilde x)\de\rho_1(x),
\end{align*}
where $T_x:\domain\to\domain$ is defined by $T_x(\tilde x)=\tilde x-x$.
If the push forward measure $(T_x)_\sharp\nu_{x}$ does not depend on $x$, which is the case, e.g., whenever $\nu_x:=(T_{-x})_\sharp\nu$ for some measure $\nu$, we can abbreviate it by $\nu$, and we can rewrite this as:
\begin{align*}
    \Exp{(x,y)\sim\mu}{\Exp{\tilde x \sim \nu_{x}}{\abs{1_A(\tilde x)-y}}} 
    =
    w_0(\nu\star\rho_0)(A) + 
    w_1(\nu\star\rho_0)(A^c)
    = \Exp{(x,y)\sim \tilde\mu}{\abs{1_A(x)-y}}
\end{align*}
where the measure $\tilde\mu$ has the marginals $(\pi_1)_\sharp\tilde\mu=\nu\star\rho$ and $(\pi_2)_\sharp\tilde\mu=(\pi_2)_\sharp\mu$.

Hence, the random perturbation in \labelcref{eq:random_pert} does not actually lead to an adversarial model, but rather, it replaces the data distribution $\rho$ in the space $\domain$ with the convolution $\nu\star\rho$ and likewise changes the conditionals $\rho_0$ and $\rho_1$ to $\nu\star\rho_0$ and $\nu\star\rho_1$. We note that this structure is still similar to the form of $\tilde R_\eps$ shown in the proof of \cref{lem:clos_op}. Problems with similar structure have also been considered in the context of decentralized optimal control \cite{witsenhausen1968counterexample}.

\subsection{Random Perturbation with Adversarial Decision}

A similar model to the one from the previous section, however containing an adversarial action, is the following:
\begin{align}
\label{eqn:AdvProblem2}
    \inf_{A\in\B(\domain)}\Exp{(x,y)\sim\mu}{\Exp{\xi\sim\nu_{x,\eps}}{
    \max_{\tilde x\in\{\xi,x\}}\abs{1_A(\tilde x)-y}
    }}.
\end{align}
In words, in the above model ``nature'' randomly draws a point $\xi$ according to a probability measure (e.g., determined by a random walk) $\nu_{x,\eps}$, which can depend on $x$ and a parameter $\eps>0$, and the adversary can either use this proposed perturbation or reject it. The adversary's decision is, of course, based on whether the randomly chosen point $\xi$ creates a larger loss than the attacked point $x$.
For example, if the attacked point $x$ lies in~$A$ and this point should have the label~$1$, the adversary can do nothing if it draws another point in~$A$. Only if $\nu_{x,\eps}$ draws a point outside of $A$ will the adversary accept it. This adversarial model is reminiscent of \cite{kohn2006deterministic}, where mean curvature flow is obtained as a limit of a game theoretical problem in which an adversary chooses between alternatives.

As it turns out, problem \labelcref{eqn:AdvProblem2} can be rewritten as
\begin{align}
    \inf_{A\in\B(\domain)}\Exp{(x,y)\sim\mu}{\abs{1_A(x)-y}} + \eps\,\widehat{\TV}_\eps(1_A),
\end{align}
where the total variation functional $\widehat{\TV}_\eps$ takes the form:
\begin{align}
    \begin{split}
    \widehat{\TV}_\eps (u) 
    &:= 
    \frac{w_0}{\eps}\int_{\domain}\int_{\domain}
    \left(u(\tilde x) - u(x)\right)_+  \de \nu_{x,\eps}(\tilde x)   \de \varrho_0(x)
    \\
    &\qquad
    +
    \frac{w_1}{\eps}
    \int_{\domain}\int_{\domain}\left(u(x) - u(\tilde x)\right)_+ \de \nu_{x,\eps}(\tilde x) \de  \varrho_1(x).
    \end{split}
\end{align}
Since here the adversarial decision takes place on the finite set $\{\xi,x\}$, this problem is much easier to analyze and does not require a redefinition using an essential total variation or perimeter.
For instance, if $\nu_{x,\eps}\ll\rho$ for all $x\in\domain$ then one can simply work on $L^\infty(\domain;\rho)$.
Indeed this is a highly relevant case as the following example shows.
\begin{example}
If $\varrho_0=\varrho_1$, $w_0 = w_1$ then this reduces to the following nonlocal total variation energy
\begin{align*}
    \frac{1}{\eps}\int_{\domain}\int_{\domain} |u(x) - u(\tilde x)|\de \nu_{x,\eps}(\tilde x) \de \varrho(x),
\end{align*}
whose properties and associated gradient flow have been analyzed in the framework of metric random walk spaces \cite{mazon2020total}.
This nonlocal total variation functional has furthermore been extensively applied in image processing, see \cite{gilboa2009nonlocal,zhang2010bregmanized}.

If we assume that the random walk $\nu_{x,\eps}$ has the special structure $\frac{\de\nu_{x,\eps}}{\de\rho}(\tilde x)=\eta_\eps(x-\tilde x)$ for some function $\eta_\eps:\domain\to\R$, we obtain
\begin{align}\label{eq:nonlocal_TV}
   \frac{1}{\eps}\int_{\domain}\int_{\domain} \eta_\eps(x-\tilde x) |u(x) - u(\tilde x)|\de \varrho(\tilde x) \de \varrho(x).
\end{align}
For the special case when $\rho=\frac{1}{N}\sum_{i=1}^N\delta_{x_i}$ is an empirical measure, \labelcref{eq:nonlocal_TV} reduces to the graph total variation
\begin{align}
    \frac{1}{\eps}\sum_{i,j=1}^N \eta_\eps(x_i-x_j) |u(x_i) - u(x_j)|.
\end{align}
Total variations of these forms and their limits as $\eps\to 0$ have been intensively analyzed in the context of graph-based clustering methods and trend filtering, see, e.g., \cite{GarciaTrillos2016,trillos2016consistency,garciatrillos_murray_2017}. Typical choices for $\eta_\eps$ are $\eta_\eps( z)=\frac{1}{\eps^d} 1_{B_\eps(x)}(z)$ or $\eta_\eps( z)=\frac{1}{\eps^d}\exp(-\abs{z/\eps}^2)$.
\end{example}

\subsection{General Loss Functions}

One can also study adversarial problems with a more general loss function.
The baseline model for this endeavour is the following generalization of \labelcref{eq:baseline_adv}:
\begin{align}\label{eq:adv_general_loss}
    \inf_{A\in\B(\domain)}\Exp{(x,y)\sim\mu}{\sup_{\tilde x\in B_\eps(x)}\loss(1_A(\tilde x),y)}.
\end{align}
Subtracting the empirical risk, we can decompose the adversarial risk as
\begin{align}
    \Exp{(x,y)\sim\mu}{\sup_{\tilde x\in B_\eps(x)}\loss(1_A(\tilde x),y)}
    =
    \Exp{(x,y)\sim\mu}{\loss(1_A(x),y)}
    +
    \eps\PreTV_\eps(1_A;\mu),
\end{align}
where the total variation is given by
\begin{align}\label{eq:TV_general}
    \begin{split}
    \PreTV_\eps(u;\mu) 
    &=
    \frac{w_0}{\eps}
    \int_{\domain}
    \sup_{\tilde x\in B_\eps(x)}\loss(u(\tilde x),0) 
    - \loss(u(x),0)\de\rho_0(x)
    \\
    &\qquad
    +
    \frac{w_1}{\eps}
    \int_{\domain}
    \sup_{\tilde x\in B_\eps(x)}\loss(u(\tilde x),1) 
    - \loss(u(x),1)\de\rho_1(x).
    \end{split}
\end{align}
For instance, for the cross entropy loss $\ell(u,y)=-y\log u - (1-y) \log(1-u)$ this simplifies to
\begin{align}\label{eq:TV_CE}
\begin{split}
    \PreTV_\eps(u;\mu) 
    &=
    \frac{w_0}{\eps}
    \int_{\domain} \log (1-u(x)) - \inf_{\tilde x\in B_\eps(x)}\log (1-u(\tilde x))
    \de\rho_0(x) \\
    &\qquad
    +
    \frac{w_1}{\eps}
    \int_{\domain}
    \log u(x) - \inf_{\tilde x\in B_\eps(x)}\log u(\tilde x)
    \de\rho_1(x)
    ,\quad  \;0\leq u\leq 1,
    \end{split}
\end{align}
which is similar to our total variation \labelcref{eq:PreTV} applied to $\log u$ instead of $u$. Notice that in general problem \labelcref{eq:adv_general_loss} and its relaxation to functions $0\leq u \leq 1$ may not coincide, as the cross entropy example suggests. 
For general loss functions \labelcref{eq:TV_general} is more difficult to interpret: this is a primary reason that we restricted our analysis to the case $\loss(u,y)=\abs{u-y}$.

\section{Conclusions}
\label{sec:Conclusions}
 
In this paper we have studied adversarial training problems in a variety of non-parametric settings and have established an equivalence with regularized risk minimization problems. The regularization terms in these risk minimization problems are explicitly characterized and correspond to a type of nonlocal perimeter/total variation. Our work provides new conceptual insights for adversarial training problems, and introduces new mathematical tools for their quantitative analysis. In particular, we have used tools from the calculus of variations to rigorously prove the existence of solutions, we have identified a convex structure of the problem that allows us to introduce appropriate notions of maximal and minimal solutions and in turn introduced a convenient notion of uniqueness of solutions, and finally, we have presented a collection of results on the existence of regular solutions to the original adversarial training problem.

Some research directions that stem from this work include: 1) the extension of the analysis presented in this work to multi-label classification settings, 2) investigating a sharper analysis of the regularity properties of solutions to adversarial training problems in both the Euclidean setting, as well as for more general distance functions and spaces. 

In addition, as already discussed in the introduction, part of the motivation for this work came from the work \cite{MurrayNGT} where one of the main objectives was to study the regularization effect of adversarial training on the decision boundaries of optimal robust classifiers (starting with the Bayes classifier at $\eps=0$). The structure of solutions studied in the present paper allows us to make the line of work initiated in \cite{MurrayNGT} more concrete, and to approach it with a larger set of mathematical tools at hand. In particular, this work raises the question of whether it is possible to track maximal (minimal) solutions to adversarial training problems as $\eps$ grows from $0$ to infinity. In words, we are interested in defining a suitable notion of \textit{solution path} $(A_\eps)_{\eps>0}$ for the family of adversarial training problems \labelcref{eq:baseline_adv}. The study of solution paths, in particular their algorithmic use and regularity, has quite some tradition in the field of variational regularization methods, see, e.g., \cite{tibshirani2011solution,bungert2019solution,rosset2007piecewise}, but in the context of adversarial training less is known about their properties. Notice that one important difference with the standard regularization setting is that the equivalent regularization formulation of \labelcref{eq:baseline_adv} has a regularization functional that changes with the regularization parameter $\eps$. This feature makes the analysis more challenging. 

It is also interesting to consider the asymptotics as $\eps \downarrow 0$. In the special case where $(\domain,\metric)$ is $\R^d$ with the Euclidean metric and $w_i\rho_i$ \first is replaced by \nc $\mathcal{L}^d$, the functionals $\Per_\eps(\cdot;\mu)$  are known to $\Gamma$-converge to the classical perimeter as $\eps \downarrow 0$ \cite{chambolle2014remark}.
In our more general setting it is particularly interesting to investigate which information of the measures $\rho_0$ and $\rho_1$ ``survives'' in the limit as $\eps\downarrow 0$ and whether a $\Gamma$-convergence result can be proven.
\first
Finally, in this regime the proof of our regularity result \cref{thm:regular-sol} deteriorates and one can at most expect a set of finite perimeter.
\nc

\section*{Acknowledgments}
The authors would like to thank Antonin Chambolle, Matt Jacobs, Meyer Scetbon, Simone Di Marino and Khai Nguyen for enlightening discussions and for sharing useful references. 
This work was done while LB and NGT were visiting the Simons Institute for the Theory of Computing to participate in the program ``Geometric Methods in Optimization and Sampling'' during the Fall of 2021 and LB and NGT are very grateful for the hospitality of the institute. LB acknowledges funding by the Deutsche Forschungsgemeinschaft (DFG, German Research Foundation) under Germany's Excellence Strategy - GZ 2047/1, Projekt-ID 390685813. 
NGT was partially supported by NSF-DMS grant 2005797 and would also like to thank the IFDS at UW-Madison and NSF through TRIPODS grant 2023239 for their support.

\begin{appendix}
\editscolor
\section{Technical Definitions}
\label{sec:appendix_technical_defs}

In this section we provide various technical definitions used throughout this work.
\nc

\third 
\begin{definition}[H\"older spaces and sets]
Let $U\subseteq\R^d$ be an open subset of $\R^d$.
For $0 < \alpha \leq 1$ a function $f:U\to\R$ is called $\alpha$-Hölder continuous if 
\begin{align*}
    \sup\left\lbrace\frac{\abs{f(x)-f(y)}}{\abs{x-y}^\alpha} \st x,y\in U,\,x\neq y\right\rbrace < \infty.
\end{align*}
For $k \in \N$, a function $f:U\to\R$ is said to belong to $C^{k,\alpha}(U)$ if it is $k$ times differentiable on $U$ and its $k-$th derivatives are $\alpha$-H\"older continuous on $U$.
We say that an open subset of $\R^d$ belongs to $C^{k,\alpha}$ if it can be locally represented as the subgraph of a $C^{k,\alpha}(U)$ function.
\end{definition}
\nc 
\first
\begin{definition}[Push-forward measure]
Let $(X_1,\Sigma_1), (X_2,\Sigma_2)$ be two measurable spaces and let $f:X_1 \to X_2$ be measurable. Given a measure $\mu$ on $X_1$ we define the push-forward measure on $X_2$ by the formula
\[
f_\sharp \mu (B) := \mu(\{ x \in X_1 : f(x) \in B\}),
\]
where $B$ is an arbitrary set in $\Sigma_2$.
\end{definition}

The next proposition is a classical result in measure theory, and may be found, e.g. in \cite{bogachev2007measure}, Section 3.1.

\begin{proposition}[Hahn decomposition]
Let $\mu$ be a signed measured on a measure space $(X,\Sigma)$. Then there exists two measurable sets $P,N$ so that $P\cup N = X$, $P \cap N = \emptyset$ and so that $\mu(E) \geq 0$ for all $E \subseteq P$ and $\mu(F) \leq 0$ for all $F \subseteq N$.
\end{proposition}
\nc 
\editscolor
\begin{theorem}[Lebesgue differentiation theorem, see e.g. Section 3.4 in \cite{heinonen2015sobolev}]\label{thm:Lebesgue}
Let $(\domain,\metric,\nu$) be a metric measure space with $\nu$ satisfying \labelcref{eq:local_doubling}, and let $f \in L^1(\domain,\nu)$. Then for $\nu$-almost every $x\in\domain$ we have that
\[
\lim_{r \downarrow 0} \frac{1}{\nu({B(x,r)})} \int_{{B(x,r)}} f(y) \de\nu(y) = f(x).
\]
\end{theorem}

\begin{definition}[Weak-* convergence and compactness]
Let $X$ be a Banach space of $\R$ with dual $X^*$.
We say that a sequence $(y_k)_{k\in\N}\subseteq X^*$ is weak-* convergent to $y\in X^*$ if
\begin{align*}
    \lim_{k\to\infty}y_k(x) = y(x),\qquad\forall x\in X.
\end{align*}
\end{definition}
\begin{theorem}[Banach--Alaoglu]\label{thm:Banach-Alaoglu}
Let $X$ be a Banach space of $\R$ with dual $X^*$.
Then any bounded subset of $X^*$ is precompact in the weak-* topology.
\end{theorem}
\nc

\section{Alternative Formulations of the Adversarial Problem}

\editscolor
At this point we review a few other established formulations of the adversarial problem and add pointers towards the relevant literature. These reformulations provide different ways of understanding and analyzing the original adversarial problem. 
\nc 

\editscolor

\third 
\subsection{Open vs Closed Balls}
\label{sec:open_vs_closed_balls}
We would like to continue the discussion in \cref{rem:previous} and elaborate on why the adversarial model with open balls that we study here does not require the universal $\sigma$-algebra. We observe that the closed norm balls model which was considered in \cite{awasthi2021existence_extended,pydi2019adversarial} suffers from the problem that
\begin{align*}
    \sup_{\closure{B}_\eps(x)}1_{A} = 1_{A^{\closure\oplus\eps}},
\end{align*}
where the set $A^{\closure\oplus\eps}:=\bigcup_{x\in A}\closure{B}_\eps(x)$ is in general not Borel measurable even though $A$ might be.
Here $\closure B_\eps(x):=\{y\in\domain\st\metric(x,y)\leq\eps\}$ denotes the closed $\eps$-ball around $x\in\domain$.
One can sandwich $A^{\closure\oplus\eps}$ between open and closed parallel sets (in particular Borel sets) like this:
\begin{align*}
    \{x\in\domain\st\dist(x,A)<\eps\} \subset A^{\closure\oplus\eps}\subset \{x\in\domain\st\dist(x,A)\leq \eps\},
\end{align*}
but the inclusions may be strict in general, see \cite{pydi2019adversarial}. The situation is markedly different for open balls, where one has
\begin{align*}
    \sup_{{B}_\eps(x)}1_{A} = 1_{A^{\oplus\eps}},
\end{align*}
where $A^{\oplus\eps}:=\bigcup_{x\in A}{B}_\eps(x)$ is an open set and satisfies the following:
\begin{lemma}
It holds that
\begin{align*}
    \{x\in\domain\st\dist(x,A) < \eps\} = \bigcup_{x\in A}B_\eps(x).
\end{align*}
\end{lemma}
\begin{proof}
Let $y\in\domain$ such that $\dist(y,A)<\eps$. 
Then there exists a sequence of points $(x_k)_{k\in\N}\subset A$ with $\lim_{k\to\infty}\metric(y,x_k) = \dist(y,A) < \eps$.
Hence, there exists $K\in\N$ such that for all $k\geq K$ it holds $\metric(y,x_k) < \eps$ and therefore
\begin{align*}
    y \in \bigcup_{k\geq K}B_\eps(x_k) \subset \bigcup_{x\in A}B_\eps(x).
\end{align*}
This establishes the inclusion ``$\subset$''.
For the converse inclusion, let $y\in\bigcup_{x\in A}B_\eps(x)$.
Then there exists $x\in A$ such that $y\in B_\eps(x)$ and therefore
\begin{align*}
    \dist(y,A) \leq \metric(y,x) < \eps,
\end{align*}
which establishes the inclusion ``$\supset$'' and concludes the proof.
\end{proof}
\nc

\subsection{\texorpdfstring{$\infty$}{Infinity}-Wasserstein DRO Problem}
\label{sec:DROReformulation}

It is well-known \cite{pydi2019adversarial} that the closed ball adversarial problem is indeed a DRO problem in the form of \labelcref{Robust problem:Intro} with respect to a special $\infty$-Wasserstein distance.
To this end we introduce an $\infty$-Wasserstein distance between two measures $\mu$ and $\tilde\mu$ as
\begin{align}\label{eq:inf-Wasserstein_dist}
    W_\infty(\mu, \tilde \mu):= \inf_{\pi \in \Gamma(\mu, \tilde \mu)} \esssup[\pi] c_\infty,
\end{align}
where the cost function is given by
\begin{subequations}
\begin{align}
    c_\infty 
    &:\big(\domain\times\{0,1\}\big)^2\to[0,+\infty],\\
    c_\infty\big((x,y), (\tilde x,\tilde y)\big) 
    &:= 
    \begin{cases} 
    \metric(x,\tilde x) \quad &\text{ if } y = \tilde y, \\
    +\infty \quad &\text{ if } y \not = \tilde y. \end{cases}
\end{align}
\end{subequations}

\second 
\begin{proposition}[{\cite[]{pydi2019adversarial}}]
Let $\domain$ be Polish.
Then the adversarial risk of $A\in\B(\domain)$ can be reformulated as
\begin{align}\label{eq:baseline_adv_wasserstein}
    \Exp{(x,y)\sim\mu}{\sup_{\tilde x \in \closure B_\eps(x)}\abs{1_A(\tilde x) - y}}
    =
    \sup_{\substack{\tilde \mu\in\P(\domain\times\{0,1\})\\ W_\infty(\mu, \tilde \mu ) \leq \eps}} \Exp{(x,y)\sim\tilde \mu}{ \abs{1_A(x)-y}}.
\end{align}
\end{proposition}
\nc 

\subsection{Dual of an Optimal Transport Problem}
\label{sec:OTReformulation}

It is also known \cite{BhagojietAl,pydi2019adversarial,MurrayNGT} that the adversarial problem may be reformulated as the dual of an optimal transport problem. 
The following result is stated in the setting $\domain = \R^d$ endowed with the Euclidean distance but can be generalized to arbitrary metric spaces in a straightforward way.  
Let $\mu^S$ be the probability distribution on $\domain \times \{ 0,1 \}$ defined as:
\[ \mu^S:= T^S_{\sharp } \mu, \quad \text{ where } T^S(x,y):=(x,1-y), \quad \forall (x,y) \in \domain \times \{0,1 \}.\]
The map $T^S$ can be interpreted as a transformation that leaves features unchanged while swapping labels. The following statement is proven in \cite{MurrayNGT}.
\third 
\begin{proposition}[{cf. \cite[Corollary 3.2]{MurrayNGT}}]
\label{prop:OTDual} 
Let $c_{\eps}: (\R^d \times \{ 0,1\})^2 \rightarrow \R$ be the function defined by
\[  c_{\eps}(z_1, z_2):= {1}_{\{|x_1-x_2|> 2\eps \} \cup \{ y_1\not =y_2\} }, \]
where we write $z_i = (x_i,y_i)$.
Then,
\begin{equation}
\inf_{A\in\B(\domain)}\Exp{(x,y)\sim\mu}{\sup_{\tilde x \in \closure B_\eps(x)}\abs{1_A(\tilde x) - y}} =  \frac{1}{2} - \frac{1}{2} \inf_{\pi \in \Gamma(\mu, \mu^S)} \iint_{(\R^d \times \{0,1 \})^2}c_\eps(z_1, z_2) \de\pi(z_1,z_2). 
\label{eqn:OTFormulation}
\end{equation}
\end{proposition}
\nc 
This type of result was first established independently in \cite{BhagojietAl,pydi2019adversarial} where the balanced case $w_0=w_1=1/2$ was considered. The OT problem on the right hand side of \labelcref{eqn:OTFormulation} is an alternative way to compute the optimal adversarial risk. This alternative has clear advantages over the original formulation of the problem in situations like when $\mu$ is an empirical measure (the standard setting in practice). Indeed, in that setting, problem \labelcref{eq:baseline_adv} is in principle an infinite dimensional problem, while the OT problem will always be finite dimensional. One may speculate further and wonder whether there is a connection between solutions to the OT problem and optimal adversarially robust classifiers, or in other words, whether one can construct adversarially robust classifiers from a solution to the OT problem. This is indeed the case and such results will be elaborated on in future work. \nc

\section{Additional Proofs}
\label{sec:appendix_proofs}

Here we prove \cref{lem:ModifyFcts}, which we restate for convenience.
\begin{lemma}
Under \cref{ass:nu_eps} for any Borel measurable function $u\in L^\infty(\domain;\nu)$ there exists $u^\star:\domain\to\R$ such that $u=u^\star$ holds $\nu$-almost everywhere and
\begin{equation}
    \sup_{B_\eps(x)} u^\star = \esssup[\nu]_{B_\eps(x)} u^\star , \quad  \inf_{B_\eps(x)} u^\star  = \essinf[\nu]_{B_\eps(x)} u^\star , \quad \forall x \in \supp\rho.  
    \label{eqn:PropertySupEqualEsssupFunctions}
\end{equation}
\end{lemma}

\begin{proof}

\textbf{Step 1:} Let $t_1, t_2, t_3, \dots$ be an enumeration of the rational numbers. In what follows we construct a collection of measurable sets $A^\star_{t_1}, A^\star_{t_2},A^\star_{t_3}, \dots $ satisfying the following properties:
\begin{enumerate}
    \item  For every $k$, we have $\{ u \geq t_k \} \sim_{\nu} A^\star_{t_k} $.
    \item For every $k$, $A^\star_{t_k}$ satisfies \labelcref{eqn:PropertySupEsssup}.
    \item For any two $k \not =l$, if $t_k <t_l$, then $1_{A^\star_{t_l}}(x) \leq 1_{A^\star_{t_k}} (x)$ for every  $x \in \supp\nu$.
\end{enumerate}
We construct these sets inductively. First, following the proof of \cref{lem:ModifySets} applied to the set $A= \{ u \geq t_1 \}$ we obtain the set $A^\star_{t_1}$ defined through its indicator function according to
\[ 1_{A^\star_{t_1}}(x) := \begin{cases} 1 \quad &\text{ if }  x \in D_+^\eps(t_1) \\ 0 \quad &\text{ if } x \in D_- ^\eps(t_1)  \\ 1_{\{u\geq t_1\}} \quad &\text{ if } x \in \R^d \setminus  (D_+^\eps(t_1) \cup D_-^\eps(t_1) ). \end{cases}\]
In the above, we use the notation $D_+^\eps(t_1)$, $D_-^\eps(t_1)$, as well as the notation  $D_+(t_1)$, $D_-(t_1)$, to denote the sets introduced in the proof of \cref{lem:ModifySets} emphasizing that these sets are associated to the set $\{ u \geq t_1\}$. 

Now, suppose that we have constructed the sets $A^\star_{t_1}, \dots, A^\star_{t_L}$ (in terms of associated sets $D_+(t_l), D_-(t_l),D^\eps_+(t_l), D^\eps_-(t_l)  $ for every $l=1, \dots, L$) and suppose that these sets satisfy (1)-(3) when restricted to $k,l\in \{ 1, \dots, L\}$. 
We now discuss how to construct the set $A^\star_{t_{L+1}}$.  Suppose that $ t_k < t_{L+1} < t_l$ for some $k, l \in \{ 1, \dots, L \}$ and suppose that these indices are chosen so that there is no element in $\{ t_1, \dots, t_L \}$ strictly between $t_k$ and $t_l$ (if $t_{L+1}$ was bigger, or smaller, than all the $t_k$ with $k=1, \dots, L$, a similar construction to the one we exhibit next would apply and because of this we focus on the case mentioned earlier for brevity). 
We start by defining
\[ 1_{\tilde A_{t_{L+1}}}(x) := \begin{cases} 1 \quad &\text{ if }  x \in D_+^\eps(t_{L+1}) \\ 0 \quad &\text{ if } x \in D_- ^\eps(t_{L+1})  \\ {1}_{\{ u \geq t_{L+1} \}}(x) \quad &\text{ if } x \in \R^d \setminus  (D_+^\eps(t_{L+1}) \cup D_-^\eps(t_{L+1}) ), \end{cases}\]
obtained following the construction in \cref{lem:ModifySets} when applied to the set $\{ u \geq t_{L+1}\}$. 
We now modify$\tilde A_{t_{L+1}}$ slightly to eventually satisfy property (3). 
Indeed, since $ \{ u \geq t_l\}\subseteq \{ u \geq t_{L+1} \} \subseteq \{ u \geq t_k  \}$ and $ A^\star_{t_k} \sim_{\nu} \{ u \geq t_k \}$, $ A^\star_{t_l} \sim_{\nu} \{ u \geq t_l \}$, $ \tilde A_{t_{L+1}} \sim_{\nu} \{ u \geq t_{L+1} \}$ we conclude that
\[  1_{A^\star_{t_l}}(x) \leq 1_{ \{ u \geq t_{L+1} \}}(x) \leq 1_{A^\star_{t_k}}(x) \]
for every $x \in \R^d \setminus \mathcal{N}_{L+1} $
where $\mathcal{N}_{L+1}$ is some ${\nu}$-null set. 
We then define
\[ 
1_{ A^\star_{t_{L+1}}}(x) := 
\begin{cases}  
1_{A^\star_{t_l}}(x) \quad &\text{if } x \in (\R^d \setminus  (D_+^\eps(t_{L+1}) \cup D_-^\eps(t_{L+1}) ))\cap \mathcal{N}_{L+1}\\
1_{\tilde A_{t_{L+1}}}(x) \quad &\text{otherwise}.
\end{cases}
\]
It is evident that $A^\star_{t_{L+1}}$ is ${\nu}$-equivalent to $\{ u \geq t_{L+1}  \}$ and that it satisfies \labelcref{eqn:PropertySupEsssup} (since we do not modify the $\tilde A_{t_{L+1}}$ inside $D_\eps^+(t_{L+1})$ or $D_{\eps}^-(t_{L+1})$). 
It remains to show that for every $x \in \supp{\nu}$ we have $  1_{A^\star_{t_l}}(x) \leq 1_{ A^\star_{t_{L+1}}}(x) \leq 1_{A^\star_{t_k}}(x) $. By definition of $A^\star_{t_{L+1}}$ and the relation between $A^\star_{t_k}$ and $A^\star_{t_l}$ the inequality is immediate if $x \in \R^d \setminus  (D_+^\eps(t_{L+1}) \cup D_-^\eps(t_{L+1}) )$.
Thus it suffices to show the inequality when $x\in D_+^\eps(t_{L+1})$ (as the case $x\in D_-^\eps(t_{L+1})$ is completely analogous). 
In turn, we just have to show that $ 1_{A^\star_{t_k}}(x)=1 $. 
Now, notice that $x \in D_+^\eps(t_{L+1})$ means that there is $x_1 \in \supp\rho$ such that $x_1 \in D_+(t_{L+1})$ and $\metric(x, x_1) <\eps$. 
In particular, ${1}_{ \{u\geq t_{L+1}\}}(\tilde x) =1$ for ${\nu}$-a.e. $\tilde x \in B_\eps(x_1)$. Given that $ \{ u \geq t_{L+1} \} \subseteq \{ u \geq t_k \}$, we also have that ${1}_{ \{u\geq t_{k}\}}(\tilde x) =1$ for ${\nu}$-a.e. $\tilde x \in B_\eps(x_1)$, and hence $x_1 \in D_+(t_k)$. We conclude that $x \in D_{+}^\eps(t_k)$ and in turn that $1_{A^\star_{t_k}}(x) =1$. 

\textbf{Step 2:} Using the family of sets $A^\star_{t_1}, A^\star_{t_2}, A^\star_{t_3}, \dots$ we construct the function $u^\star $ according to
\[ u^\star (x):= \sup \{ t \in \Q \st x \in A^\star_t  \}. \]

We now show the following relations between level sets of $u^\star $ and the sets $A^\star_{t}$: for every $t \in \Q$ we have
\begin{equation}
 1_{A^\star_t}(x) \leq  1_{\{ u^\star  \geq t \}}(x), \quad \forall x \in \supp{\nu},
 \label{eqn:Inclusion1}
\end{equation}
and for every $s,t\in \Q$ with $s< t$ we have
\begin{equation}
  1_{\{ u^\star  \geq t \}}(x) \leq  1_{A^\star_s}(x), \quad \forall x \in \supp{\nu}. 
 \label{eqn:Inclusion2}
\end{equation}

Inequality \labelcref{eqn:Inclusion1} follows from the fact that if $x \in A^\star_{t}$, then by definition of $u^\star $ we have $u^\star (x) \geq t$. So in this case we actually have the stronger condition $A^\star_t \subseteq \{u^\star  \geq t \}$.

To obtain inequality \labelcref{eqn:Inclusion2} take $x \in \supp{\nu}$ such that $u^\star (x) \geq t$. Then there must exist a rational $r \geq s$ such that $x \in A^\star_r$ for otherwise $u^\star (x)$ would be less than $s$. From property (3) of the sets $A^\star$ we deduce that $x \in A^\star_s$ also.

\textbf{Step 3:} We now show that $u^\star $ satisfies \labelcref{eqn:PropertySupEqualEsssupFunctions}. To see this, let $x \in \supp\rho$ and suppose for the sake of contradiction that $\sup_{B_\eps(x)} u^\star  > \esssup[\nu]_{B_\eps(x)} u^\star $. Pick $t \in \Q$ strictly between these two values. Then, there is $\tilde x \in B_\eps(x)$ (notice that $\tilde x \in \supp{\nu}$) such that $u^\star (\tilde x) \geq  t $. In particular, from \labelcref{eqn:Inclusion2} it follows that ${1}_{A^\star_s}(x) =1$ for a rational $s$ with $s<t$ that is also strictly larger than $\esssup[\nu]_{B_\eps(x)} u^\star $, and in turn we deduce that $\sup_{B_\eps(x)} {1}_{A^\star_s} =1$. On the other hand, from the fact that $\esssup[\nu]_{B_\eps(x)} u^\star  < s $, it is clear that ${\nu}(\{ u^\star  \geq s \} \cap B_\eps(x))=0$, and thus if we combine with  \labelcref{eqn:Inclusion1} we deduce that ${\nu}(A^\star_s \cap B_\eps(x))=0$ also. This means that $ \esssup[\nu]_{B_\eps(x)} {1}_{A^\star_s} =0$, contradicting in this way property (2) for the set $A^\star_s$. In conclusion: for every $x \in \supp\rho$ we have $\sup_{B_\eps(x)} u^\star  = \esssup[\nu]_{B_\eps(x)} u^\star $.

To show the second part of \labelcref{eqn:PropertySupEqualEsssupFunctions} we follow a similar strategy. Namely, suppose for the sake of contradiction that there is $x \in \supp\rho$ such that $ \inf_{B_\eps(x)} u^\star  < \essinf_{B_\eps(x)} u^\star $. Let $s$ be a rational number strictly between these two values. Then there is $\tilde x \in B_\eps(x)$ such that $u^\star (\tilde x) < s$ and thus we must have $\tilde x \notin A^\star_s$. In particular, we have $\inf_{B_\eps(x)} 1_{A^\star_s} =0$. Picking now a rational $t$ strictly between $s$ and $\essinf_{B_\eps(x)} u^\star $ we conclude that $ {1}_{\{ u^\star  \geq t \}} =1$ ${\nu}$-a.e. $\tilde x \in B_\eps(x)$. By \labelcref{eqn:Inclusion2} the same is true when we replace $\{ u^\star  \geq t \}$ with $A^\star_s$. Thus, we conclude that $\essinf_{B_\eps(x)} 1_{A^\star_s} =1$, contradicting property (2) for the set $A^\star_s$. In conclusion: for every $x \in \supp\rho$ we have $\inf_{B_\eps(x)} u^\star  = \essinf_{B_\eps(x)} u^\star $.

\textbf{Step 4:} Finally, we can combine property (1) of the sets $A^\star$, inequalities \labelcref{eqn:Inclusion1} and   \labelcref{eqn:Inclusion2}, and a similar argument (i.e., by contradiction) to the one used in Step 3, in order to conclude that $u^\star = u$ holds ${\nu}$-a.e.
\end{proof}

\end{appendix}

\printbibliography

\end{document}